\newlength\savewidth
\newcommand{\norm}[1]{\left\lVert#1\right\rVert}
\definecolor{new}{rgb}{0.36, 0.15, 0}
\newtheorem{theorem}{Theorem}
\newtheorem{lemma}{Lemma}
\renewcommand{\captionlabelfont}{\footnotesize}
\begin{document}

\title{Orthogonal Over-Parameterized Training}

\author{\fontsize{10pt}{\baselineskip}\selectfont Weiyang Liu\textsuperscript{1,2,*}\ \ \ Rongmei Lin\textsuperscript{3,*}\ \ \ Zhen Liu\textsuperscript{4}\ \ \ James M. Rehg\textsuperscript{5}\ \ \ Liam Paull\textsuperscript{4}\ \ \ Li Xiong\textsuperscript{3}\ \ \ Le Song\textsuperscript{5}\ \ \ Adrian Weller\textsuperscript{1,6}\\[1.5mm]
\fontsize{10pt}{\baselineskip}\selectfont \textsuperscript{1}University of Cambridge\ \ \ \ \textsuperscript{2}Max Planck Institute for Intelligent Systems\ \ \ \ \textsuperscript{3}Emory University\\
\fontsize{10pt}{\baselineskip}\selectfont \textsuperscript{4}Mila, Université de Montréal\ \ \ \ \textsuperscript{5}Georgia Institute of Technology\ \ \ \ \textsuperscript{6}Alan Turing Institute\ \ \ \ \textsuperscript{*}Equal Contribution\\
}

\maketitle
\thispagestyle{empty}

\begin{abstract}
   The inductive bias of a neural network is largely determined by the architecture and the training algorithm. To achieve good generalization, how to effectively train a neural network is of great importance. We propose a novel orthogonal over-parameterized training (OPT) framework that can provably minimize the hyperspherical energy which characterizes the diversity of neurons on a hypersphere. By maintaining the minimum hyperspherical energy during training, OPT can greatly improve the empirical generalization. Specifically, OPT fixes the randomly initialized weights of the neurons and learns an orthogonal transformation that applies to these neurons. We consider multiple ways to learn such an orthogonal transformation, including unrolling orthogonalization algorithms, applying orthogonal parameterization, and designing orthogonality-preserving gradient descent. For better scalability, we propose the stochastic OPT which performs orthogonal transformation stochastically for partial dimensions of neurons. Interestingly, OPT reveals that learning a proper coordinate system for neurons is crucial to generalization. We provide some insights on why OPT yields better generalization. Extensive experiments validate the superiority of OPT over the standard training.
\end{abstract}

\vspace{-2.8mm}
\section{Introduction}
\vspace{-0.1mm}
The inductive bias encoded in a neural network is generally determined by two major aspects: how the neural network is structured (\ie, network architecture) and how the neural network is optimized (\ie, training algorithm). For the same network architecture, using different training algorithms could lead to a dramatic difference in generalization performance~\cite{keskar2017improving,reddi2019convergence} even if the training loss is close to zero, implying that different training procedures lead to different inductive biases. Therefore, how to effectively train a neural network that generalize well remains an open challenge.
\par

\begin{figure}[t]
  \renewcommand{\captionlabelfont}{\footnotesize}
  \setlength{\abovecaptionskip}{4pt}
  \setlength{\belowcaptionskip}{-12pt}
  \centering
  \vspace{-2mm}
  \includegraphics[width=3.2in]{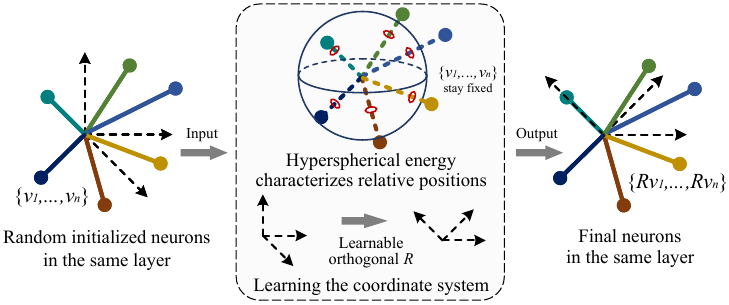}
  \caption{\footnotesize Overview of the orthogonal over-parameterized training framework. OPT learns an orthogonal transformation for each layer in the neural network, while keeping the randomly initialized neuron weights fixed.}\label{overview}
\end{figure}

Recent theories~\cite{gunasekar2017implicit,gunasekar2018implicit,kawaguchi2016deep,li2018algorithmic} suggest the importance of over-parameterization in linear neural networks. For example, \cite{gunasekar2017implicit} shows that optimizing an underdetermined quadratic objective over a matrix $\bm{M}$ with gradient descent on a factorization of $\bm{M}$ leads to an implicit regularization that may improve generalization. There is also strong empirical evidence~\cite{ding2019acnet,liu2019neural} that over-parameterzing the convolutional filters under some regularity is beneficial to generalization. Our paper aims to leverage the power of over-parameterization and explore more intrinsic structural priors in order to train a well-performing neural network.

\par

Motivated by this goal, we propose a generic orthogonal over-parameterized training~(OPT) framework for neural networks. Different from conventional neural training, OPT over-parameterizes a neuron $\thickmuskip=2mu \medmuskip=2mu\bm{w}\in\mathbb{R}^{d}$ with the multiplication of a learnable layer-shared orthogonal matrix $\thickmuskip=2mu \medmuskip=2mu\bm{R}\in\mathbb{R}^{d\times d}$ and a fixed randomly-initialized weight vector $\thickmuskip=2mu \medmuskip=2mu \bm{v}\in\mathbb{R}^{d}$, and it follows that the equivalent weight for the neuron is $\thickmuskip=2mu \medmuskip=2mu \bm{w}=\bm{R}\bm{v}$. Once each element of the neuron weight $\bm{v}$ has been randomly initialized by a zero-mean Gaussian distribution~\cite{he2015delving,glorot2010understanding}, we fix them throughout the entire training process. Then OPT learns a layer-shared orthogonal transformation $\bm{R}$ that is applied to all the neurons (in the same layer). An illustration of OPT is given in Fig.~\ref{overview}. In contrast to standard neural training, OPT decomposes the neuron into an orthogonal transformation $\bm{R}$ that learns a proper coordinate system, and a weight vector $\bm{v}$ that controls the specific position of the neuron. Essentially, the weights $\thickmuskip=2mu \medmuskip=2mu \{\bm{v}_1,\cdots,\bm{v}_n\in\mathbb{R}^d\}$ of different neurons determine the relative positions, while the layer-shared orthogonal matrix $\bm{R}$ specifies the coordinate system. Such a decoupled parameterization enables strong modeling flexibility.

\par

Another motivation of OPT comes from an empirical observation that neural networks with lower \emph{hyperspherical energy} generalize better~\cite{liu2018learning}. Hyperspherical energy quantifies the diversity of neurons on a hypersphere, and essentially characterizes the relative positions among neurons via this form of diversity. \cite{liu2018learning} introduces hyperspherical energy as a regularization in the network but do not guarantee that the hyperspherical energy can be effectively minimized (due to the existence of data fitting loss). To address this issue, we leverage the property of hyperspherical energy that it is independent of the coordinate system in which the neurons live and only depends on their relative positions. Specifically, we prove that, if we randomly initialize the neuron weight $\bm{v}$ with certain distributions, these neurons are guaranteed to attain minimum hyperspherical energy in expectation. It follows that OPT maintains the minimum energy during training by learning a coordinate system (\ie, layer-shared orthogonal matrix) for the neurons. Therefore, OPT is able to provably minimize the hyperspherical energy.
\par

We consider several ways to learn the orthogonal transformation. First, we unroll different orthogonalization algorithms such as Gram-Schmidt process, Householder reflection and Löwdin's symmetric orthogonalization. Different unrolled algorithms yield different implicit regularizations to construct the neuron weights. For example, symmetric orthogonalization guarantees that the new orthogonal basis has the least distance in the Hilbert space from the original non-orthogonal basis. Second, we consider to use a special parameterization (\eg, Cayley parameterization) to construct the orthogonal matrix, which is more efficient in training. Third, we consider an orthogonality-preserving gradient descent to ensure that the matrix $\bm{R}$ stays orthogonal after each gradient update. Last, we relax the original optimization problem by making the orthogonality constraint a regularization for the matrix $\bm{R}$. Different ways of learning the orthogonal transformation may encode different inductive biases. We note that OPT aims to utilize orthogonalization as a tool to learn neurons that maintain small hyperspherical energy, rather than to study a specific orthogonalization method. Furthermore, we propose a refinement strategy to reduce the hyperspherical energy for the randomly initialized neuron weights $\thickmuskip=2mu \medmuskip=2mu \{\bm{v}_1,\cdots,\bm{v}_n\}$. In specific, we directly minimize the hyperspherical energy of these random weights as a preprocessing step before training them on actual data.

To improve scalability, we further propose the stochastic OPT that randomly samples neuron dimensions to perform orthogonal transformation. The random sampling process is repeated many times such that each dimension of the neuron is sufficiently learned. Finally, we provide some theoretical insights and discussions to justify the effectiveness of OPT. The advantages of OPT are summarized as follows:
\vspace{-0.5mm}
\begin{itemize}[leftmargin=*,nosep,nolistsep]
    \item OPT is a generic neural network training framework with strong flexibility. There are many different ways to learn the orthogonal transformations and each one imposes a unique inductive bias. Our paper compares how different orthogonalizations may affect generalization in OPT.
    \item OPT is the first training framework where the hyperspherical energy is provably minimized (in contrast to \cite{liu2018learning}), leading to better empirical generalization. OPT reveals that learning a proper coordinate system is crucial to generalization, and the hyperspherical energy is sufficiently expressive to characterize relative neuron positions.
    \item There is no extra computational cost for the OPT-trained neural network in inference. In the testing stage, it has the same inference speed and model size as the normally trained network. Our experiments also show that OPT performs well on a diverse class of neural networks and therefore is agnostic to different neural architectures.
    \item Stochastic OPT can greatly improve the scalability of OPT while enjoying the same guarantee to minimize hyperspherical energy and having comparable performance.
\end{itemize}

\vspace{-1mm}
\section{Related Work}
\vspace{-1mm}

\textbf{Orthogonality in Neural Networks}. Orthogonality is widely adopted to improve neural networks. \cite{bansal2018can,liu2017deep,brock2016neural,huang2020controllable,xie2017all} use orthogonality as a regularization for neurons. \cite{huang2018orthogonal,lezcano2019cheap,arjovsky2016unitary,wisdom2016full,qi2020deep,jia2019orthogonal} use principled orthogonalization methods to guarantee the neurons are orthogonal to each other. In contrast to these works, OPT does not encourage orthogonality among neurons. Instead, OPT utilizes principled orthogonalization for learning orthogonal transformations for (not necessarily orthogonal) neurons to minimize hyperspherical energy.

\par

\textbf{Parameterization of Neurons}. There are various ways to parameterize a neuron for different applications. \cite{ding2019acnet} over-parameterizes a 2D convolution kernel by combining a 2D kernel of the same size and two additional 1D asymmetric kernels. The resulting convolution kernel has the same effective parameters during testing but more parameters during training. \cite{liu2019neural} constructs a neuron with a bilinear parameterization and regularizes the bilinear similarity matrix. \cite{yang2015deep} reparameterizes the neuron matrix with an adaptive fastfood transform to compress model parameters. \cite{jaderberg2014speeding,liu2015sparse,wang2017factorized} employ sparse and low-rank structures to construct convolution kernels for a efficient neural network. 

\par

\textbf{Hyperspherical Learning}. \cite{liu2017deep,Liu2017CVPR,wang2018cosface,deng2019arcface,wang2018additive,LinCoMHE20,Liu2021SphereUni} propose to learn representations on a hypersphere and show that the angular information, in contrast to magnitude information, preserves the most semantic meaning. \cite{liu2018learning} define the hyperspherical energy that quantifies the diversity of neurons on a hypersphere and shows that the small hyperspherical energy generally improves empirical generalization.

\vspace{-1mm}
\section{Orthogonal Over-Parameterized Training}
\vspace{-0.5mm}
\subsection{General Framework}
\vspace{-0.7mm}

OPT parameterizes the neuron as the multiplication of an orthogonal matrix $\thickmuskip=2mu \medmuskip=2mu \bm{R}\in\mathbb{R}^{d\times d}$ and a neuron weight vector $\thickmuskip=2mu \medmuskip=2mu \bm{v}\in\mathbb{R}^d$, and the equivalent neuron weight becomes $\thickmuskip=2mu \medmuskip=2mu \bm{w} =\bm{R}\bm{v}$. The output $\hat{y}$ of this neuron can be represented by $\thickmuskip=2mu \medmuskip=2mu \hat{y} = (\bm{R}\bm{v})^\top\bm{x}$ where $\thickmuskip=2mu \medmuskip=2mu \bm{x}\in\mathbb{R}^d$ is the input vector. In OPT, we typically fix the randomly initialized neuron weight $\bm{v}$ and only learn the orthogonal matrix $\bm{R}$. In contrast, the standard neuron is directly formulated as $\thickmuskip=2mu \medmuskip=2mu \hat{y}=\bm{v}^\top\bm{x}$, where the weight vector $\bm{v}$ is learned via back-propagation in training.

As an illustrative example, we consider a linear MLP with a loss function $\mathcal{L}$ (\eg, the least squares loss: $\thickmuskip=2mu \medmuskip=2mu \mathcal{L}(e_1,e_2)=(e_1-e_2)^2$). Specifically, the learning objective of the standard training is $\min_{\{\bm{v}_i,u_i,\forall i\}} \sum_{j=1}^{m}\mathcal{L}\big(y,\sum_{i=1}^{n} u_i\bm{v}_i^\top\bm{x}_j\big)$, while differently, our OPT is formulated as

\vspace{-3.4mm}
\begin{equation}\label{opt}
\footnotesize
\begin{aligned}
    \min_{\{\bm{R},u_i,\forall i\}} \sum_{j=1}^{m}\mathcal{L}\big(y,\sum_{i=1}^{n} u_i(\bm{R}\bm{v}_i)^\top\bm{x}_j\big)\ \ \ \textnormal{s.t.}\ \bm{R}^\top\bm{R}=\bm{R}\bm{R}^\top=\bm{I}
\end{aligned}
\end{equation}
\vspace{-1.8mm}

\noindent where $\thickmuskip=2mu \medmuskip=2mu \bm{v}_i\in\mathbb{R}^d$ is the $i$-th neuron in the first layer, and $\thickmuskip=2mu \medmuskip=2mu \bm{u}=\{u_1,\cdots,u_n\}\in\mathbb{R}^n$ is the output neuron in the second layer. In OPT, each element of $\bm{v}_i$ is usually sampled from a zero-mean Gaussian distribution (\eg, both Xavier~\cite{glorot2010understanding} and Kaiming~\cite{he2015delving} initializations belong to this class), and is fixed throughout the entire training process. In general, OPT learns an orthogonal matrix that is applied to all the neurons instead of learning the individual neuron weight. Note that, we usually do not apply OPT to neurons in the output layer (\eg, $\bm{u}$ in this MLP example, and the final linear classifiers in CNNs), since it makes little sense to fix a set of random linear classifiers. Therefore, the central problem is how to learn these layer-shared orthogonal matrices.

\vspace{-0.3mm}
\subsection{Hyperspherical Energy Perspective}
\vspace{-0.45mm}

One of the most important properties of OPT is its invariance to hyperspherical energy. Based on \cite{liu2018learning}, the hyperspherical energy of $n$ neurons is defined as $\thickmuskip=2mu \medmuskip=2mu \bm{E}(\hat{\bm{v}}_i|_{i=1}^n) = \sum_{i=1}^{n}\sum_{j=1,j\neq i}^{n} \norm{\hat{\bm{v}}_i-\hat{\bm{v}}_j}^{-1}$ in which $\thickmuskip=2mu \medmuskip=2mu \hat{\bm{v}}_i=\frac{\bm{v}_i}{\|\bm{v}_i\|}$ is the $i$-th neuron weight projected onto the unit hypersphere $\thickmuskip=2mu \medmuskip=2mu \mathbb{S}^{d-1}=\{\bm{v}\in\mathbb{R}^{d}|\norm{\bm{v}}=1\}$. Hyperspherical energy is used to characterize the diversity of $n$ neurons on a unit hypersphere. Assume that we have $n$ neurons in one layer, and we have learned an orthogonal matrix $\bm{R}$ for these neurons. The hyperspherical energy of these $n$ OPT-trained neurons is

\vspace{-3.9mm}
\begin{equation}
\footnotesize
\begin{aligned}
    \bm{E}(\hat{\bm{R}}\hat{\bm{v}}_i|_{i=1}^n) &= \sum_{i=1}^{n}\sum_{j=1,j\neq i}^{n} \norm{\bm{R}\hat{\bm{v}}_i-\bm{R}\hat{\bm{v}}_j}^{-1}\\[-1mm]
    \big(\textnormal{since} \norm{\bm{R}}^{-1}=1\big)\ \  &= \sum_{i=1}^{n}\sum_{j=1,j\neq i}^{n} \norm{\hat{\bm{v}}_i-\hat{\bm{v}}_j}^{-1} =  \bm{E}(\hat{\bm{v}}_i|_{i=1}^n)
\end{aligned}
\end{equation}
\vspace{-1.9mm}

\noindent which verifies that the hyperspherical energy does not change in OPT. Moreover, \cite{liu2018learning} proves that minimum hyperspherical energy corresponds to the uniform distribution over the hypersphere. As a result, if the initialization of the neurons in the same layer follows the uniform distribution over the hypersphere, then we can guarantee that the hyperspherical energy is minimal in a probabilistic sense.

\vspace{-0.3mm}
\begin{theorem}\label{unifrom}
For the neuron $\thickmuskip=0mu \medmuskip=0mu \bm{h}=\{h_1,\cdots,h_d\}$ where $h_i,\forall i$ are initialized i.i.d. following a zero-mean Gaussian distribution (i.e., $\thickmuskip=2mu \medmuskip=2mu h_i \sim N(0,\sigma^2)$), the projections onto a unit hypersphere $\thickmuskip=0mu \medmuskip=0mu \hat{\bm{h}}=\bm{h}/\|\bm{h}\|$ where $\thickmuskip=0mu \medmuskip=0mu \|\bm{h}\|=(\sum_{i=1}^d h_i^2)^{1/2}$ are uniformly distributed on the unit hypersphere $\mathbb{S}^{d-1}$. The neurons with minimum hyperspherical energy attained asymptotically approach the uniform distribution on $\mathbb{S}^{d-1}$.
\end{theorem}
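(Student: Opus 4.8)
The plan is to treat the two assertions separately: first the exact distributional fact that a normalized isotropic Gaussian vector is uniform on $\mathbb{S}^{d-1}$, then the asymptotic statement that energy-minimizing configurations equidistribute.

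For the first part I would exploit the rotational invariance of the isotropic Gaussian. Since $h_1,\dots,h_d$ are i.i.d.\ $N(0,\sigma^2)$, the joint density of $\bm{h}$ is $(2\pi\sigma^2)^{-d/2}\exp(-\|\bm{h}\|^2/(2\sigma^2))$, a function of $\|\bm{h}\|$ alone; hence for every orthogonal $\bm{Q}\in O(d)$ the vectors $\bm{Q}\bm{h}$ and $\bm{h}$ have the same law. Consequently the pushforward law of $\hat{\bm{h}}=\bm{h}/\|\bm{h}\|$ on $\mathbb{S}^{d-1}$ is invariant under the action of $O(d)$; since $O(d)$ acts transitively on $\mathbb{S}^{d-1}$, the normalized surface measure $\sigma_{d-1}$ is the unique $O(d)$-invariant Borel probability measure on the sphere, so $\hat{\bm{h}}\sim\sigma_{d-1}$, i.e.\ it is uniform. (Equivalently, one passes to polar coordinates $\bm{h}=r\,\bm{\omega}$, observes the Jacobian factorizes the density into an independent radial part $\propto r^{d-1}e^{-r^2/(2\sigma^2)}$ and a uniform angular part, and reads off the marginal of $\bm{\omega}$; in particular $\|\bm{h}\|$ and $\hat{\bm{h}}$ are independent and $\hat{\bm{h}}$ is free of $\sigma$.)

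For the second part I would phrase minimum hyperspherical energy in the language of potential theory, following \cite{liu2018learning}. Write $\mu_n^\star=\frac1n\sum_{i=1}^n\delta_{\hat{\bm{v}}_i^\star}$ for the empirical measure of an optimal configuration and $I(\mu)=\iint\|\bm{x}-\bm{y}\|^{-1}\,d\mu(\bm{x})\,d\mu(\bm{y})$ for the continuous Riesz energy of a probability measure $\mu$ on $\mathbb{S}^{d-1}$. The steps are: (i) an \emph{upper bound}, showing $\frac{1}{n^2}\bm{E}(\hat{\bm{v}}_i^\star|_{i=1}^n)\le I(\sigma_{d-1})+o(1)$ by sampling $n$ i.i.d.\ points from $\sigma_{d-1}$ and estimating the expected off-diagonal discrete energy against $I(\sigma_{d-1})$, which is finite for $d\ge 3$; (ii) a \emph{lower bound}, i.e.\ lower semicontinuity of $I$ under weak-$*$ convergence (via truncating the kernel from below and letting the truncation level grow), which gives $I(\mu_\infty)\le\liminf\frac1{n^2}\bm{E}^\star\le I(\sigma_{d-1})$ for any weak-$*$ subsequential limit $\mu_\infty$ of $\mu_n^\star$; and (iii) uniqueness of the equilibrium measure: the Riesz kernel is strictly positive definite on $\mathbb{S}^{d-1}$ modulo constants, so $I$ has a unique minimizer among probability measures, which by the rotational symmetry of the kernel and the sphere must be $\sigma_{d-1}$. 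Since $\mathbb{S}^{d-1}$ is compact the family $\{\mu_n^\star\}$ is tight, so every subsequence has a weak-$*$ limit, and by (i)--(iii) each such limit is $\sigma_{d-1}$; hence $\mu_n^\star$ converges weak-$*$ to $\sigma_{d-1}$, which is exactly the asserted asymptotic uniformity.

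The first part is a one-line invariance argument and poses no real difficulty. The substantive work is the second part, and the main obstacle is the singularity of the kernel $\|\bm{x}-\bm{y}\|^{-1}$: making sense of $I$ and establishing matching upper and lower bounds on the normalized energy requires a careful truncation argument for lower semicontinuity, and the uniqueness of the equilibrium measure rests on the (non-trivial) strict conditional positive-definiteness of the Riesz kernel on the sphere. Since these are precisely the facts proved in \cite{liu2018learning}, the cleanest route here is to reduce to that reference rather than reproduce the potential-theoretic machinery.
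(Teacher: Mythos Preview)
Your proposal is correct and follows essentially the same route as the paper. For the first assertion both you and the paper argue via rotational invariance of the isotropic Gaussian; the paper writes out an explicit change-of-variables lemma showing $\bm{U}\bm{x}\sim\mathcal{N}(0,\bm{I})$ for orthogonal $\bm{U}$, while you invoke the equivalent (and slightly slicker) fact that $\sigma_{d-1}$ is the unique $O(d)$-invariant probability measure on the sphere. For the second assertion both proofs reduce to the classical equidistribution theorem for minimal Riesz $s$-energy points: the paper simply quotes this as a black-box lemma from potential theory (specifically \cite{hardin2005minimal}, treating the three regimes $0<s<d$, $s=d$, $s>d$ separately), whereas you sketch the standard proof architecture (upper bound by random sampling, lower semicontinuity of $I$ via kernel truncation, uniqueness of the equilibrium measure). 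Your sketch is more informative but buys no extra generality here. One small correction: the potential-theoretic facts you need (strict positive-definiteness of the Riesz kernel on the sphere, uniqueness of the equilibrium measure) are not proved in \cite{liu2018learning}---that paper merely uses them; the proper references are classical potential theory, e.g.\ Landkof or the survey \cite{hardin2005minimal} that the present paper cites. Also note that your upper-bound step as written needs $I(\sigma_{d-1})<\infty$, which for the $s=1$ kernel requires $d\ge 3$; the paper's appendix covers the borderline case via a separate lemma with a $N^2\log N$ normalization.
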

\vspace{-1.3mm}

Theorem~\ref{unifrom} proves that, as long as we initialize the neurons in the same layer with zero-mean Gaussian distribution, the resulting hyperspherical energy is guaranteed to be small (\ie, the expected energy is minimal). It is because the neurons are uniformly distributed on the unit hypersphere and hyperspherical energy quantifies the uniformity on the hypersphere in some sense. More importantly, prevailing neuron initializations such as \cite{glorot2010understanding} and \cite{he2015delving} are zero-mean Gaussian distribution. Therefore, our neurons naturally have low hyperspherical energy from the beginning. Appendix~\ref{rand_property} gives geometric properties of the random initialized neurons.

\vspace{-0.6mm}
\subsection{Unrolling Orthogonalization Algorithms}
\vspace{-0.7mm}
\setlength{\columnsep}{5pt}
\begin{wrapfigure}{r}{0.293\textwidth}
  \begin{center}
  \advance\leftskip+1mm
  \renewcommand{\captionlabelfont}{\footnotesize}
    \vspace{-0.3in}  
    \includegraphics[width=0.292\textwidth]{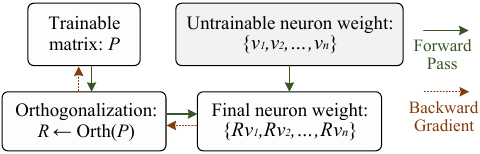}
    \vspace{-0.26in} 
    \caption{\footnotesize Unrolled orthogonalization.}\label{ortho}
    \vspace{-0.2in} 
  \end{center}
\end{wrapfigure}
In order to learn the orthogonal transformation, we unroll classic orthogonalization algorithms and embed them into the neural network such that the training can be performed in an end-to-end fashion. We need to make every step of the orthogonalization algorithm differentiable, as shown in Fig.~\ref{ortho}.

\vspace{0.1mm}

\textbf{Gram-Schmidt Process}. This method takes a linearly independent set and eventually produces an orthogonal set based on it. The Gram-Schmidt Process (GS) usually takes the following steps to orthogonalize a set of vectors $\thickmuskip=2mu \medmuskip=2mu \{\bm{u}_1,\cdots,\bm{u}_n\}\in\mathbb{R}^{n\times n}$ and obtain an orthonormal set $\thickmuskip=2mu \medmuskip=2mu \{\bm{e}_1,\cdots,\bm{e}_i,\cdots,\bm{e}_n\}\in\mathbb{R}^{n\times n}$. First, when $\thickmuskip=2mu \medmuskip=2mu i=1$, we have $\thickmuskip=2mu \medmuskip=2mu\bm{e}_1 = \frac{\tilde{\bm{e}}_1}{\|{\tilde{\bm{e}}_1}\|}$ where $\thickmuskip=2mu \medmuskip=2mu\tilde{\bm{e}}_1 = \bm{u}_1$. Then, when $\thickmuskip=2mu \medmuskip=2mu n\geq i\geq 2$, we have $\thickmuskip=2mu \medmuskip=2mu\bm{e}_i = \frac{\tilde{\bm{e}}_i}{\|{\tilde{\bm{e}}_i}\|}$ where $\thickmuskip=1mu \medmuskip=1mu \tilde{\bm{e}}_i = \bm{u}_i-\sum_{j=1}^{i-1}\textnormal{Proj}_{\bm{e}_{j}}(\bm{u}_i)$. Note that, $\thickmuskip=2mu \medmuskip=2mu \textnormal{Proj}_{\bm{b}}(\bm{a})=\frac{\langle \bm{a},\bm{b}\rangle}{\langle\bm{b},\bm{b}\rangle}\bm{b}$ is defined as the projection operator.

\vspace{0.4mm}

\textbf{Householder Reflection}. A Householder reflector is defined as $\thickmuskip=2mu \medmuskip=2mu \bm{H}=\bm{I}-2\frac{\bm{u}\bm{u}^\top}{\|\bm{u}\|^2}$ where $\bm{u}$ is perpendicular to the reflection hyperplane. In QR factorization, Householder reflection (HR) is used to transform a (non-singular) square matrix into an orthogonal matrix and an upper triangular matrix. Given a matrix $\thickmuskip=2mu \medmuskip=2mu \bm{U}= \{\bm{u}_1,\cdots,\bm{u}_n\}\in\mathbb{R}^{n\times n}$, we consider the first column vector $\bm{u}_1$. We use Householder reflector to transform $\bm{u}_1$ to $\thickmuskip=2mu \medmuskip=2mu \bm{e}_1=\{1,0,\cdots,0\}$. Specifically, we construct an orthogonal matrix $\bm{H}_1$ with $\thickmuskip=2mu \medmuskip=2mu \bm{H}_1=\bm{I}-2\frac{(\bm{u}_1-\norm{\bm{u}_1}\bm{e}_1)(\bm{u}_1-\norm{\bm{u}_1}\bm{e}_1)^\top}{\norm{\bm{u}_1-\norm{\bm{u}_1}\bm{e}_1}^2}$.
The first column of $\bm{H}_1\bm{U}$
becomes $\thickmuskip=2mu \medmuskip=2mu \{\|\bm{\bm{u}_1}\|,0,\cdots,0\}$. At the $k$-th step, we can view the sub-matrix $\bm{U}_{(k:n,k:n)}$ as a new $\bm{U}$, and use the same procedure to construct the Householder transformation $\thickmuskip=2mu \medmuskip=2mu \tilde{\bm{H}}_k\in\mathbb{R}^{(n-k)\times (n-k)}$. We construct the final Householder transformation as $\thickmuskip=2mu \medmuskip=2mu \bm{H}_k=\textnormal{Diag}(\bm{I}_{k},\tilde{\bm{H}}_k)$. Now we can gradually transform $\bm{U}$ to an upper triangular matrix with $n$ Householder reflections. Therefore, we have that $\thickmuskip=2mu \medmuskip=2mu \bm{H}_n\cdots\bm{H}_2\bm{H}_1\bm{U}=\bm{R}^{\textnormal{up}}$ where $\bm{R}^{\textnormal{up}}$ is an upper triangular matrix and the obtained orthogonal set is $\thickmuskip=2mu \medmuskip=2mu \bm{Q}^\top=\bm{H}_n\cdots\bm{H}_2\bm{H}_1$.

\textbf{Löwdin's Symmetric Orthogonalization}. Let the matrix $\thickmuskip=2mu \medmuskip=2mu \bm{U}= \{\bm{u}_1,\cdots,\bm{u}_n\}\in\mathbb{R}^{n\times n}$ be a given set of linearly independent vectors in an $n$-dimensional space. A non-singular linear transformation $\bm{A}$ can transform the basis $\bm{U}$ to an orthogonal basis $\bm{R}$: $\thickmuskip=2mu \medmuskip=2mu \bm{R}=\bm{U}\bm{A}$. The matrix $\bm{R}$ will be orthogonal if 
$\thickmuskip=2mu \medmuskip=2mu \bm{R}^\top\bm{R} = (\bm{U}\bm{A})^\top \bm{U}\bm{A}=\bm{A}^\top\bm{M}\bm{A}=\bm{I}$
where $\thickmuskip=2mu \medmuskip=2mu \bm{M}=\bm{U}^\top\bm{U}$ is the Gram matrix of the given set $\bm{U}$. We obtain a general solution to the orthogonalization problem via the substitution: $\thickmuskip=2mu \medmuskip=2mu \bm{A}=\bm{M}^{-\frac{1}{2}}\bm{B}$ where $\bm{B}$ is an arbitrary unitary matrix. The specific choice $\thickmuskip=2mu \medmuskip=2mu \bm{B}=\bm{I}$ gives the Löwdin's symmetric orthogonalization~(LS): $\thickmuskip=2mu \medmuskip=2mu \bm{R}=\bm{U}\bm{M}^{-\frac{1}{2}}$. We can analytically obtain the symmetric orthogonalization from the singular value decomposition: $\thickmuskip=2mu \medmuskip=2mu \bm{U}=\bm{W}\bm{\Sigma}\bm{V}^\top$. Then LS gives $\thickmuskip=2mu \medmuskip=2mu\bm{R}=\bm{W}\bm{V}^\top$ as the orthogonal set for $\bm{U}$. LS has a 
unique property which the other orthogonalizations do not have. The orthogonal set resembles the original set in a nearest-neighbour sense. More specifically, LS guarantees that $\sum_i\|\bm{R}_i-\bm{U}_i\|^2$ (where $\bm{R}_i$ and $\bm{U}_i$ are the $i$-th column of $\bm{R}$ and $\bm{U}$, respectively) is minimized. Intuitively, LS indicates the gentlest pushing of the directions of the vectors in order to get them orthogonal to each other.

\textbf{Discussion}. These orthogonalization algorithms are fully differentiable and end-to-end trainable. For accurate orthogonality, these algorithms can be used repeatedly and unrolled with multiple steps. Empirically, one-step unrolling already works well. Givens rotations can also  construct the orthogonal matrix, but it requires traversing all lower triangular elements in the original set $\bm{U}$, which takes $\mathcal{O}(n^2)$ complexity and is too costly. Interestingly, each orthogonalization encodes a unique inductive bias to the neurons by imposing implicit regularizations (\eg, least distance in Hilbert space for LS). Details about these orthogonalizations are in Appendix~\ref{supp_ortho}. Unrolling orthogonalization has been considered in different scenarios \cite{huang2018orthogonal,tang2019orthogonal,mhammedi2017efficient}. More orthogonalization methods~\cite{lezcano2019trivializations} can be applied in OPT, but exhaustively applying them to OPT is out of the scope of this paper.

\vspace{-0.7mm}
\subsection{Orthogonal Parameterization}
\vspace{-1.1mm}

A convenient way to ensure orthogonality while learning the matrix $\bm{R}$ is to use a special parameterization that inherently guarantees orthogonality. The exponential parameterization use $\thickmuskip=2mu \medmuskip=2mu \bm{R}=\exp(\bm{W})$ (where $\exp(\cdot)$ denotes the matrix exponential) to represent an orthogonal matrix from a skew-symmetric matrix $\bm{W}$. The Cayley parameterization~(CP) is a Padé approximation of the exponential parameterization, and is a more natural choice due to its simplicity. CP uses the following transform to construct an orthogonal matrix $\bm{R}$ from a skew-symmetric matrix $\bm{W}$: $\bm{R} = (\bm{I}+\bm{W})(\bm{I}-\bm{W})^{-1}$ where $\thickmuskip=2mu \medmuskip=2mu\bm{W}=-\bm{W}^\top$. We note that CP only produces the orthogonal matrices with determinant $1$, which belong to the special orthogonal group and thus $\thickmuskip=2mu \medmuskip=2mu \bm{R}\in SO(n)$. Specifically, it suffices to learn the upper or lower triangular of the matrix $\bm{W}$ with unconstrained optimization to obtain a desired orthogonal matrix $\bm{R}$. Cayley parameterization does not cover the entire orthogonal group and is less flexible in terms of representation power, which serves as an explicit regularization for the neurons.

\vspace{-0.8mm}
\subsection{Orthogonality-Preserving Gradient Descent}
\vspace{-0.8mm}

An alternative way to guarantee orthogonality is to modify the gradient update for the matrix $\bm{R}$. The idea is to initialize $\bm{R}$ with an arbitrary orthogonal matrix and then ensure each gradient update is to apply an orthogonal transformation to $\bm{R}$. It is essentially conducting gradient descent on the Stiefel manifold~\cite{Li2020Efficient,wen2013feasible,wisdom2016full,lezcano2019cheap,arjovsky2016unitary,henaff2016recurrent,jing2017tunable}. Given a matrix $\thickmuskip=2mu \medmuskip=2mu \bm{U}_{(0)}\in\mathbb{R}^{n\times n}$ that is initialized as an orthogonal matrix, we aim to construct an orthogonal transformation as the gradient update. We use the Cayley transform to compute a parametric curve on the Stiefel manifold $\thickmuskip=2mu \medmuskip=2mu\mathcal{M}_s=\{ \bm{U}\in\mathbb{R}^{n\times n}: \bm{U}^\top\bm{U}=\bm{I} \}$ with a specific metric via a skew-symmetric matrix $\bm{W}$ and use it as the update rule:

\vspace{-4.25mm}
\begin{equation}\label{ogd}
\footnotesize
\bm{Y}(\lambda)=(\bm{I}-\frac{\lambda}{2}\bm{W})^{-1}(\bm{I}+\frac{\lambda}{2}\bm{W})\bm{U}_{(i)},\ \bm{U}_{(i+1)}=\bm{Y}(\lambda)
\end{equation}
\vspace{-3.45mm}

\noindent where $\thickmuskip=2mu \medmuskip=2mu \hat{\bm{W}}=\nabla f(\bm{U}_{(i)})\bm{U}_{(i)}^\top-\frac{1}{2}\bm{U}_{(i)}(\bm{U}_{(i)}^\top\nabla f(\bm{U}_{(i)}\bm{U}_{(i)}^\top)$ and $\thickmuskip=2mu \medmuskip=2mu\bm{W}=\hat{\bm{W}}-\hat{\bm{W}}^\top$. $\bm{U}_{(i)}$ denotes the orthogonal matrix in the $i$-th iteration. $\nabla f(\bm{U}_{(i)})$ denotes the original gradient of the loss function \emph{w.r.t.} $\bm{U}_{(i)}$. We term this gradient update as orthogonal-preserving gradient descent (OGD). To reduce the computational cost of the matrix inverse in Eq.~\ref{ogd}, we use an iterative method~\cite{Li2020Efficient} to approximate the Cayley transform without matrix inverse. We arrive at the fixed-point iteration:

\vspace{-3mm}
\begin{equation}
\footnotesize
    \bm{Y}(\lambda)=\bm{U}_{(i)}+\frac{\lambda}{2}\bm{W}\big(\bm{U}_{(i)}+\bm{Y}(\lambda)\big)
\end{equation}
\vspace{-3mm}

\noindent which converges to the closed-form Cayley transform with a rate of $o(\lambda^{2+n})$ ($n$ is the iteration number). In practice, two iterations suffice for a reasonable approximation accuracy.

\vspace{-0.8mm}
\subsection{Relaxation to Orthogonal Regularization}
\vspace{-0.8mm}
Alternatively, we also consider relaxing the original optimization with an orthogonality constraint to an unconstrained optimization with orthogonality regularization~(OR). Specifically, we remove the orthogonality constraint, and adopt an orthogonality regularization for $\bm{R}$, \ie,  $\thickmuskip=9mu \medmuskip=2mu \|\bm{R}^\top\bm{R}-\bm{I}\|_F^2$. However, OR cannot guarantee the energy stays unchanged. Taking Eq.~\ref{opt} as an example, the objective becomes

\vspace{-4.2mm}
\begin{equation}
\footnotesize
    \min_{\bm{R},u_i,\forall i} \sum_{j=1}^{m}\mathcal{L}\big(y,\sum_{i=1}^{n} u_i(\bm{R}\bm{v}_i)^\top\bm{x}_j\big)+\beta\|\bm{R}^\top\bm{R}-\bm{I}\|_F^2
\end{equation}
\vspace{-2.5mm}

\noindent where $\beta$ is a hyperparameter. This serves as an relaxation of the original OPT objective. Note that, OR is imposed to $\bm{R}$ instead of neurons and is quite different from the existing orthogonality regularization on neurons~\cite{liu2017deep,bansal2018can,huang2018orthogonal,xie2017all,brock2016neural}.

\vspace{-0.7mm}
\subsection{Refining the Initialization as Preprocessing}\label{refine_neuron}
\vspace{-0.7mm}

\textbf{Minimizing the energy beforehand}. Because we randomly initialize the neurons $\thickmuskip=2mu \medmuskip=2mu \{\bm{v}_1,\cdots,\bm{v}_n\}$, there exists a variance that makes the hyperspherical energy deviate from the minima even if the hyperspherical energy is minimal in a probabilistic sense. To further reduce the hyperspherical energy, we propose to refine the random initialization by minimizing its hyperspherical energy as a preprocessing step before the OPT training. Specifically, before feeding these neurons to OPT, we first minimize the hyperspherical energy of the initialized neurons with gradient descent (without fitting the training data). Moreover, since the randomly initialized neurons cannot guarantee to get rid of the collinearity redundancy as shown in \cite{liu2018learning} (\ie, two neurons are on the same line but have opposite directions), we can perform the half-space hyperspherical energy minimization~\cite{liu2018learning}.

\textbf{Normalizing the neurons}. The norm of the randomly initialized neurons may have some influence on OPT, serving a role similar to weighting the importance of different neurons. Moreover, the norm makes the hyperspherical energy less expressive to characterize the diversity of neurons, as discussed in Section~\ref{main_discussion}. To make the coordinate frame (i.e. the rotation matrix $R$) truly independent of the relative positions of the neurons, we propose to normalize the neuron weights such that each neuron has unit norm. Because the weights of the neurons $\thickmuskip=2mu \medmuskip=2mu \{\bm{v}_1,\cdots,\bm{v}_n\}$ are fixed during training and orthogonal matrices will not change the norm of the neurons, we only need to normalize the randomly initialized neuron weights as a preprocessing before the OPT training.

We have comprehensively evaluated both refinement strategies in Section~\ref{CNN_exp} and verified their effectiveness. Note that the effectiveness of OPT is not dependent on these refinements. Our experiments do not use these refinements by default and the results show that OPT still performs well.

\vspace{-1.75mm}
\section{Towards Better Scalablity for OPT}
\vspace{-1.65mm}

\setlength{\columnsep}{6pt}
\begin{wrapfigure}{r}{0.137\textwidth}
  \begin{center}
  \advance\leftskip+1mm
  \renewcommand{\captionlabelfont}{\footnotesize}
    \vspace{-0.3in}  
    \includegraphics[width=0.135\textwidth]{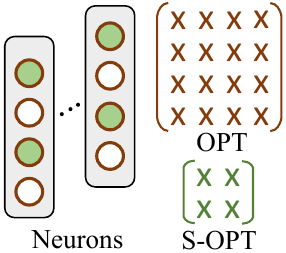}
    \vspace{-0.26in} 
    \caption{\footnotesize S-OPT.}\label{sopt}
    \vspace{-0.3in} 
  \end{center}
\end{wrapfigure}

If the dimension of neurons becomes extremely large, then the orthogonal matrix to transform the neurons will also be large. Therefore, it may take large GPU memory and time to train the neural networks with the original OPT. To address this, we propose a scalable variant -- stochastic OPT (S-OPT). The key idea of S-OPT is to randomly select some dimensions from the neurons in the same layer and construct a small orthogonal matrix to transform these dimensions together. The selection of dimensions is stochastic in each outer iteration, so a small orthogonal matrix is sufficient to cover all the neuron dimensions. S-OPT aims to approximate a large orthogonal transformation for all the neuron dimensions with many small orthogonal transformations for random subsets of these dimensions, which shares similar spirits with Givens rotation. The approximation will be more accurate when the procedure is randomized over many times. Fig.~\ref{sopt} compares the size of the orthogonal matrix in OPT and S-OPT. The orthogonal matrix in OPT is of size $d\times d$, while the orthogonal matrix in S-OPT is of size $p\times p$ where $p$ is usually much smaller than $d$. Most importantly, S-OPT can still preserve the low hyperspherical energy of neurons because of the following result.

\vspace{-0.45mm}
\begin{theorem}\label{sopt_thm}
For $n$ $d$-dimensional neurons, selecting any $p$ ($\thickmuskip=2mu \medmuskip=2mu p\leq d$) dimensions and applying an shared orthogonal transformation ($\thickmuskip=2mu \medmuskip=2mu p\times p$ orthogonal matrix) to these $p$ dimensions of all neurons will not change the hyperspherical energy.
\end{theorem}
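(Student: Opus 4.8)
The plan is to observe that the partial transformation described in the statement is nothing but left-multiplication of every neuron by a single fixed $d\times d$ orthogonal matrix, after which the energy invariance already established for full OPT in Section~3.2 applies verbatim. First I would fix the index set $S\subseteq\{1,\dots,d\}$ of the $p$ selected dimensions and let $\bm{P}$ be the (orthogonal) $d\times d$ permutation matrix that moves the coordinates indexed by $S$ into the first $p$ positions. Given the shared $p\times p$ orthogonal matrix $\bm{Q}$ (with $\thickmuskip=2mu \medmuskip=2mu\bm{Q}^\top\bm{Q}=\bm{Q}\bm{Q}^\top=\bm{I}$), I set $\thickmuskip=2mu \medmuskip=2mu\bm{R}=\bm{P}^\top\,\textnormal{Diag}(\bm{Q},\bm{I}_{d-p})\,\bm{P}$. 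Then applying $\bm{Q}$ to the $S$-coordinates of a neuron while leaving the remaining $d-p$ coordinates untouched is exactly the map $\bm{v}\mapsto\bm{R}\bm{v}$, and by hypothesis the \emph{same} $\bm{R}$ acts on all $n$ neurons.

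Second, I would verify that $\bm{R}$ is genuinely orthogonal: since block-diagonal matrices multiply blockwise and both $\bm{Q}$ and the identity block are orthogonal, $\thickmuskip=2mu \medmuskip=2mu\bm{R}^\top\bm{R}=\bm{P}^\top\textnormal{Diag}(\bm{Q}^\top\bm{Q},\bm{I}_{d-p})\bm{P}=\bm{P}^\top\bm{P}=\bm{I}$, and likewise $\thickmuskip=2mu \medmuskip=2mu\bm{R}\bm{R}^\top=\bm{I}$. Consequently $\bm{R}$ is an isometry, $\norm{\bm{R}\bm{x}}^2=\bm{x}^\top\bm{R}^\top\bm{R}\bm{x}=\norm{\bm{x}}^2$ for all $\bm{x}\in\mathbb{R}^d$. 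In particular $\norm{\bm{R}\bm{v}_i}=\norm{\bm{v}_i}$, so normalization commutes with the transformation, $\frac{\bm{R}\bm{v}_i}{\norm{\bm{R}\bm{v}_i}}=\bm{R}\hat{\bm{v}}_i$, and for every pair $i\neq j$ we get $\norm{\bm{R}\hat{\bm{v}}_i-\bm{R}\hat{\bm{v}}_j}=\norm{\bm{R}(\hat{\bm{v}}_i-\hat{\bm{v}}_j)}=\norm{\hat{\bm{v}}_i-\hat{\bm{v}}_j}$.

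Third, I substitute this into the definition $\bm{E}(\hat{\bm{v}}_i|_{i=1}^n)=\sum_{i=1}^n\sum_{j=1,j\neq i}^n\norm{\hat{\bm{v}}_i-\hat{\bm{v}}_j}^{-1}$: every summand is unchanged, hence the transformed energy equals the original one, which is the claim. The argument is term-by-term identical to the full-OPT computation in Section~3.2 and carries over unchanged to the half-space energy or to any energy functional depending only on pairwise distances or angles of the projected neurons. Finally, since S-OPT re-randomizes the selected subset $S$ over many outer iterations, a one-line induction on the iteration index — each step being of the form above — shows the energy is preserved throughout S-OPT and not merely for a single step.

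I do not expect a genuine obstacle; the only point that needs care is making explicit that the partial transformation is shared across all neurons, so that it extends to one fixed global orthogonal matrix — were distinct neurons transformed by distinct maps, pairwise distances (and hence the energy) would in general change. A secondary bookkeeping point is the reduction ``without loss of generality the selected dimensions are the first $p$,'' which is precisely what conjugation by the permutation $\bm{P}$ formalizes; one should also note $\det\bm{R}=\det\bm{Q}$, so the construction stays inside $O(d)$ even when $\bm{Q}\notin SO(p)$, which is harmless since only orthogonality is used.
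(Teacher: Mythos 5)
Your proposal is correct and takes essentially the same route as the paper: reduce to the case where the selected coordinates come first (the paper argues this via permutation invariance of the energy, you formalize it by conjugating with the permutation matrix $\bm{P}$), extend the $p\times p$ matrix to the $d\times d$ orthogonal matrix $\textnormal{Diag}(\bm{Q},\bm{I}_{d-p})$, and invoke the invariance of hyperspherical energy under a single shared orthogonal transformation. Your version is a slightly tidier packaging (one explicit matrix $\bm{R}=\bm{P}^\top\textnormal{Diag}(\bm{Q},\bm{I}_{d-p})\bm{P}$ plus the isometry/normalization check), but there is no substantive difference from the paper's argument.
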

\vspace{-0.25mm}

\setlength{\columnsep}{9pt}
\begin{wrapfigure}{r}{0.25\textwidth}
\begin{center}
\begin{minipage}{1\linewidth}
\vspace{-0.475in}
\captionsetup[algorithm]{font=footnotesize}
\begin{algorithm}[H]
\footnotesize
\caption{\footnotesize Stochastic OPT}\label{sopt_alg}
\For{$i=1,2,\cdots,N_{\textnormal{out}}$}{
    \For{$j=1,2,\cdots,N_{\textnormal{in}}$}{
    \underline{\textbf{1}}. Randomly select $p$ dimensions from $d$-dimensional neurons in the same layer.\;
    
    \underline{\textbf{2}}. Construct an orthogonal matrix $\bm{R}_p\in\mathbb{R}^{p\times p}$ and initialize it as identity matrix.\;
    
    \underline{\textbf{3}}. Update $\bm{R}_p$ by applying OPT with one iteration.\;
    \vspace{-0.3mm}
    
    }
    
    \underline{\textbf{4}}. Multiply $\bm{R}_p$ back to the $p$-dim sub-vectors from the $d$-dim neurons to transform these neurons.\;
    \vspace{-0.25mm}
    
}
\vspace{-0.8mm}
\end{algorithm}
\vspace{-0.46in}
\end{minipage}
\end{center}
\end{wrapfigure}

A description of S-OPT is given in Algorithm~\ref{sopt_alg}. S-OPT has outer and inner iterations. In each inner iteration, the training is almost the same as OPT, except that the orthogonal matrix transforms a subset of the dimensions and the learnable orthogonal matrix has to be re-initialized to an identity matrix. The selection of neuron dimension is randomized in every outer iteration such that all neuron dimensions can be sufficiently covered as the number of outer iterations increases. Therefore, given sufficient number of iterations, S-OPT will perform comparably to OPT, as empirically verified in Section~\ref{sopt_sect}. As a parallel direction to improve the scalability, we further propose a parameter-efficient OPT in Appendix~\ref{appendix_peopt}. This OPT variant explores structure priors in $\bm{R}$ to improve parameter efficiency.

\vspace{-1.5mm}
\section{Intriguing Insights and Discussions}
\vspace{-1.1mm}
\subsection{Local Landscape}\label{main_landscape}
\vspace{-0.8mm}

\setlength{\columnsep}{8pt}
\begin{wrapfigure}{r}{0.251\textwidth}
\begin{center}
\advance\leftskip+1mm
    \renewcommand{\captionlabelfont}{\footnotesize}
    \vspace{-0.54in}  
    \includegraphics[width=0.25\textwidth]{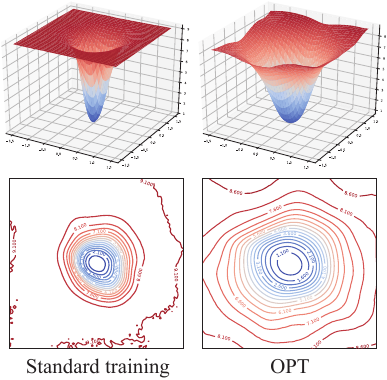}
    \vspace{-0.275in} 
    \caption{\footnotesize Training loss landscapes. }\label{loss_landscape}
\vspace{-0.22in} 
\end{center}
\end{wrapfigure}

We follow \cite{li2018visualizing} to visualize the loss landscapes of both standard training and OPT in Fig.~\ref{loss_landscape}. For standard training, we perturb the parameter space of all the neurons (\ie, filters). For OPT, we perturb the parameter space of all the trainable matrices (\ie, $\bm{P}$ in Fig.~\ref{ortho}), because OPT does not directly learn neuron weights. The general idea is to use two random vectors (\eg, normal distribution) to perturb the parameter space and obtain the loss value with the perturbed network parameters. Details and full results about the visualization are given in Appendix~\ref{loss_vis_normal}. The loss landscape of standard training has extremely sharp minima. The red region is very flat, leading to small gradients. In contrast, the loss landscape of OPT is much more smooth and convex with flatter minima, well matching the finding that flat minimizers generalize well \cite{hochreiter1997flat,chaudhari2017entropy,izmailov2018averaging}. Additional loss landscape visualization results in Appendix~\ref{loss_vis} (with uniform perturbation distributions) also support the same argument.

\setlength{\columnsep}{8pt}
\begin{wrapfigure}{r}{0.251\textwidth}
\begin{center}
\advance\leftskip+1mm
    \renewcommand{\captionlabelfont}{\footnotesize}
    \vspace{-0.19in}  
    \includegraphics[width=0.25\textwidth]{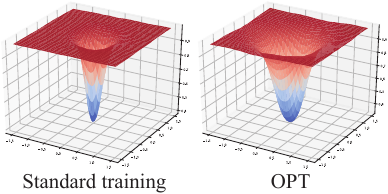}
    \vspace{-0.27in} 
    \caption{\footnotesize Testing error landscapes. }\label{acc_landscape}
\vspace{-0.22in} 
\end{center}
\end{wrapfigure}

We also show the landscape of testing error on CIFAR-100 in Fig.~\ref{acc_landscape}. Full results and details are in Appendix~\ref{loss_vis_normal}. Compared to standard training, the testing error of OPT increases more slowly and smoothly while the network parameters move away from the minima, which indicates that the parameter space of OPT yields better robustness to perturbations.

\vspace{-0.8mm}
\subsection{Optimization and Generalization}
\vspace{-0.8mm}

We discuss why OPT may improve optimization and generalization. On one hand, \cite{xie2016diverse} proves that once the neurons are hyperspherically diverse enough in a one-hidden-layer network, the training loss is on the order of the square norm of the gradient and the generalization error will have an additional term $\thickmuskip=2mu \medmuskip=2mu \tilde{\mathcal{O}}(1/\sqrt{m})$ where $m$ is the number of samples. This suggests that SGD-optimized networks with minimum hyperspherical energy~(MHE) attained have no spurious local minima. Since OPT is guaranteed to achieve MHE in expectation, OPT-trained networks enjoy the inductive bias induced by MHE. On the other hand, \cite{kawaguchi2016deep,allen2018convergence,du2017gradient,li2018algorithmic,gunasekar2017implicit} shows that over-parameterization in neural networks improves the first-order optimization, leads to better generalization, and imposes implicit regularizations. In the light of this, OPT also introduces over-parameterization to each neuron, which shares similar spirits with \cite{lin2013network}. Specifically, one $d$-dimensional neuron has $\thickmuskip=2mu \medmuskip=2mu d^2+d$ parameters in OPT (with $d^2$ being layer-shared), compared to $d$ parameters in a standard neuron. Although OPT uses more parameters for a neuron in training, the equivalent number of parameters for a neuron stays unchanged and it will not affect testing speed. 

\vspace{-1mm}
\subsection{Discussions}\label{main_discussion}
\vspace{-1mm}

\textbf{Over-parameterization}. We delve deeper into the over-parameterization in the context of OPT. Its definition varies in different cases. OPT is over-parameterized in terms of training in the following sense. Although OPT-trained networks have the same effective number of parameters as the standard networks in testing, the OPT neuron is decomposed into two sets of parameters in training: orthogonal matrix and neuron weights. It means that the same set of parameters in a neural network can be represented by different sets of training parameters in OPT (\ie, different combinations of orthogonal matrices and neuron weights can lead to the same neural network). OPT is still over-parameterized even if we only count the number of learnable parameters. For a layer of $n$ $d$-dimensional neurons, the number of learnable parameters in vanilla OPT is $\frac{d(d-1)}{2}$ in contrast to $nd$ in standard training. In prevailing architectures (\eg, ResNet~\cite{he2016deep}), the neuron dimension is far larger than the number of neurons.

\textbf{Coordinate system and relative position}. OPT shows that learning the coordinate system yields better generalization than learning neuron weights directly. This implies that the coordinate system is crucial to generalization. However, the relative position does not matter only when the hyperspherical energy is sufficiently low, indicating that the neurons need to be diverse enough on the unit hypersphere.

\textbf{The effects of neuron norm}. Because we will normalize the neuron norm when computing the hyperspherical energy, the effects of neuron norm will not be taken into consideration. Moreover, simply learning the orthogonal matrices will not change the neuron norm. Therefore, the neuron norm may affect the training. We use an extreme example to demonstrate the effects. Assume that one of the neurons has norm $1000$ and the other neurons have norm $0.01$. Then no matter what orthogonal matrices we have learned, the final performance will be bad. In this case, the hyperspherical energy can still be minimized to a very low value, but it can not capture the norm distribution. Fortunately, such an extreme case is unlikely to happen, because we are using zero-mean Gaussian distribution to initialize the neuron and every neuron also has the same expected value for the norm. To eliminate the effects of norms, we can normalize the neuron weights in training, as proposed in Section~\ref{refine_neuron}.

\vspace{-1.1mm}
\section{Applications and Experimental Results}\label{exp_results}
\vspace{-1.15mm}
We put all the experimental settings and many additional results in Appendix~\ref{exp_set} and Appendix~\ref{appendix_peopt},\ref{appendix_energy}, respectively.

\vspace{-0.85mm}
\subsection{Ablation Study and Exploratory Experiments}
\vspace{-0.85mm}

\setlength{\columnsep}{5pt}
\begin{wraptable}{r}[0cm]{0pt}
    \centering
    \scriptsize
    \newcommand{\tabincell}[2]{\begin{tabular}{@{}#1@{}}#2\end{tabular}}
    \setlength{\tabcolsep}{1.8pt}
\renewcommand{\captionlabelfont}{\footnotesize}
\begin{tabular}{c |c c c c} 
\specialrule{0em}{-10pt}{0pt}
  \hline
Method & FN & LR & CNN-6 & CNN-9\\\hline
Baseline & - & - & 37.59 & 33.55\\
UPT & \ding{55} & U & 48.47 & 46.72\\
UPT & \ding{51} & U & 42.61 & 39.38\\
OPT & \ding{55} & GS & 37.24 & 32.95\\
OPT & \ding{51} & GS & \textbf{33.02} & \textbf{31.03}\\
 \hline
  \specialrule{0em}{0pt}{-9pt}
\end{tabular}
\caption{\footnotesize Error (\%) on C-100.}
\label{upt}
\vspace{-2.6mm}
\end{wraptable}

\textbf{Orthogonality}. We evaluate whether orthogonality in OPT is necessary. We use 6-layer and 9-layer CNN (Appendix~\ref{exp_set}) on CIFAR-100. Then we compare OPT with unconstrained over-parameterized training~(UPT) which learns an unconstrained matrix $\bm{R}$ (with weight decay) using the same network. In Table~\ref{upt}, ``FN'' denotes whether the randomly initialized neuron weights are fixed in training. ``LR'' denotes whether the learnable matrix $\bm{R}$ is unconstrained (``U'') or orthogonal (``GS'' for Gram-Schmidt process). Table~\ref{upt} shows that without orthogonality, UPT performs much worse than OPT.

\textbf{Fixed or learnable weights}. From Table~\ref{upt}, we can see that using fixed neuron weights is consistently better than learnable neuron weights in both UPT and OPT. It indicates that fixing the neuron weights can well maintain low hyperspherical energy and is beneficial to empirical generalization.

\vspace{0.25mm}

\setlength{\columnsep}{5pt}
\begin{wraptable}{r}[0cm]{0pt}
    \centering
    \scriptsize
    \newcommand{\tabincell}[2]{\begin{tabular}{@{}#1@{}}#2\end{tabular}}
    \setlength{\tabcolsep}{1.5pt}
\renewcommand{\captionlabelfont}{\footnotesize}
\begin{tabular}{c|c c c c} 
\specialrule{0em}{-10pt}{0pt}
  \hline
Method  & Original & MHE & HS-MHE & CoMHE\\\hline
OPT (GS) & 33.02 & 32.99 & 32.78& \textbf{32.69}\\
OPT (LS) & 34.48& 34.43 & 34.37 & \textbf{34.15}\\
OPT (CP) & 33.53 & 33.50 & 33.42& \textbf{33.27}\\
Energy & 3.5109 & 3.5003 & 3.4976& \textbf{3.4954}\\
 \hline
  \specialrule{0em}{0pt}{-9.5pt}
\end{tabular}
\caption{\footnotesize Refining initialized energy.}
\label{rhe}
\vspace{-4mm}
\end{wraptable}
\textbf{Refining initialization}. We evaluate two refinement methods in Section~\ref{refine_neuron} for neuron initialization. First, we consider the hyperspherical energy minimization as a preprocessing for the neuron weights. Our experiment uses CNN-6 on CIFAR-100. Specifically, we run gradient descent for 5k iterations to minimize the objective of MHE/HS-MHE~\cite{liu2018learning} or CoMHE~\cite{LinCoMHE20} before the training starts. Table~\ref{rhe} shows the hyperspherical energy before and after the preprocessing. All methods start with the same random initialization, so all hyperspherical energies start at 3.5109. Testing errors (\%) in Table~\ref{rhe} show that the refinement well improves OPT. Although using advanced regularizations such as CoMHE as pre-processing can improve the performance significantly, we do not use them in the other experiments in order to keep our comparison fair and clean. More different ways to minimize the hyperspherical energy can also be considered~\cite{Liu2021SphereUni}.
\par

\setlength{\columnsep}{5pt}
\begin{wraptable}{r}[0cm]{0pt}
    \centering
    \scriptsize
    \newcommand{\tabincell}[2]{\begin{tabular}{@{}#1@{}}#2\end{tabular}}
    \setlength{\tabcolsep}{1.3pt}
\renewcommand{\captionlabelfont}{\footnotesize}
\begin{tabular}{c |c c} 
\specialrule{0em}{-10.5pt}{0pt}
  \hline
Method & w/o Norm & w/ Norm\\\hline
Baseline & 37.59 & \textbf{36.05}\\
OPT (GS) & 33.02 & \textbf{32.54}\\
OPT (HR) & 35.67 & \textbf{35.30}\\
OPT (LS) & 34.48 & \textbf{32.11}\\
OPT (CP) & 33.53 & \textbf{32.49}\\
OPT (OGD) & 33.37 & \textbf{32.70}\\
OPT (OR) & 34.70 & \textbf{33.27}\\
 \hline
  \specialrule{0em}{0pt}{-9.5pt}
\end{tabular}
\caption{\footnotesize Normalization (\%).}
\label{wnorm}
\vspace{-3.8mm}
\end{wraptable}

We evaluate the second refinement strategy, \ie, neuron weight normalization. Section~\ref{main_discussion} has explained why normalizing the neuron weights may be useful. After initialization, we normalize all the neuron weights to $1$. Since OPT does not change the neuron norm, the neuron will keep the norm as 1. More importantly, the hyperspherical energy will not be affected by the neuron normalization. We conduct classification with CNN-6 on CIFAR-100. Testing errors in Table~\ref{wnorm} show that normalizing the neurons greatly improves OPT, validating our previous analysis. Note that, these two refinements are not used by default in other experiments.

\setlength{\columnsep}{5pt}
\begin{wraptable}{r}[0cm]{0pt}
    \centering
    \scriptsize
    \newcommand{\tabincell}[2]{\begin{tabular}{@{}#1@{}}#2\end{tabular}}
    \setlength{\tabcolsep}{1.8pt}
\renewcommand{\captionlabelfont}{\footnotesize}
\begin{tabular}{c c c} 
\specialrule{0em}{-10.5pt}{0pt}
  \hline
 Mean & Energy & Error (\%)\\\hline
0 & \textbf{3.5109} & \textbf{32.49}\\
1e-3 & 3.5117 & 33.11\\
1e-2 & 3.5160 & 39.51\\
2e-2 & 3.5531 & 53.89\\
3e-2 & 3.6761 & N/C\\
 \hline
  \specialrule{0em}{0pt}{-9pt}
\end{tabular}
\caption{\footnotesize Initial energy.}
\label{highhe}
\vspace{-4mm}
\end{wraptable}

\textbf{High vs. low energy}. We validate that high hyperspherical energy corresponds to inferior empirical generalization. To initialize high energy neurons, we use \cite{he2015delving} and set the mean as 1e-3, 1e-2, 2e-2, and 3e-2. We experiment on CIFAR-100 with CNN-6. Table~\ref{highhe} (``N/C'' denotes not converged) show that higher energy generalizes worse and also leads to difficulty in convergence. We see that a small change in energy can lead to a dramatic generalization gap.

\setlength{\columnsep}{5pt}
\begin{wraptable}{r}[0cm]{0pt}
    \centering
    \scriptsize
    \newcommand{\tabincell}[2]{\begin{tabular}{@{}#1@{}}#2\end{tabular}}
    \setlength{\tabcolsep}{1.4pt}
\renewcommand{\captionlabelfont}{\footnotesize}
\begin{tabular}{c c} 
\specialrule{0em}{-10pt}{0pt}
  \hline
Method & Error (\%)\\\hline
Baseline & 38.95\\
HS-MHE & 36.90\\
OPT (GS) & 35.61 \\
OPT (HR) & 37.51 \\
OPT (LS) & 35.83 \\
OPT (CP) & \textbf{34.88} \\
OPT (OGD) & 35.38\\
 \hline
  \specialrule{0em}{0pt}{-9.5pt}
\end{tabular}
\caption{\footnotesize No BN.}
\label{bn}
\vspace{-4mm}
\end{wraptable}

\textbf{No BatchNorm}. We evaluate how OPT performs without BatchNorm (BN) \cite{ioffe2015batch}. We perform classification on CIFAR-100 with CNN-6. In Table~\ref{bn}, we see that all OPT variants consistently outperform both the baseline and HS-MHE~\cite{liu2018learning} by a significant margin, validating that OPT can work well without BN. CP achieves the best error with more than $4\%$ lower than standard training.

\vspace{-0.75mm}
\subsection{Empirical Evaluation on OPT}\label{CNN_exp}
\vspace{-0.9mm}

\textbf{Multi-layer perceptrons}. We evaluate OPT on MNIST with a 3-layer MLP. Appendix~\ref{exp_set} gives specific settings. Table~\ref{cnn_opt} shows the testing error with normal initialization (MLP-N) or Xavier initialization~\cite{glorot2010understanding} (MLP-X). GS/HR/LS denote different orthogonalization unrolling. CP denotes Cayley parameterization. OGD denotes orthogonal-preserving gradient descent. OR denotes relaxed orthogonal regularization. All OPT variants outperform the others by a large margin.

\begin{table}[t]
    \centering
    \scriptsize
    \newcommand{\tabincell}[2]{\begin{tabular}{@{}#1@{}}#2\end{tabular}}
    \setlength{\tabcolsep}{3pt}
\renewcommand{\captionlabelfont}{\footnotesize}
\begin{tabular}{c |c c| c c c c} 
\specialrule{0em}{2.8pt}{0pt}
  \hline
\multirow{2}{*}[0pt]{Method} &\multicolumn{2}{c|}{MNIST} &\multicolumn{4}{c}{CIFAR-100}\\
 & MLP-N & MLP-X & CNN-6 & CNN-9 & ResNet-20 & ResNet-32\\\hline
Baseline & 6.05 & 2.14 & 37.59 & 33.55 & 31.11 & 30.16\\
Orthogonal~\cite{brock2016neural} & 5.78 & 1.93 & 36.32 & 33.24 & 31.06 & 30.05\\
SRIP~\cite{bansal2018can} & - & - & 34.82 & 32.72 & 30.89 & 29.70 \\
HS-MHE~\cite{liu2018learning} & 5.57 & 1.88 &  34.97 & 32.87 & 30.98 & 29.76\\\hline
OPT (GS) & \textbf{5.11} & \textbf{1.45} & \textbf{33.02} & \textbf{31.03} & 30.49 & 29.34\\
OPT (HR) & 5.31 & 1.60 & 35.67 & 32.75 & 30.73 & 29.56\\
OPT (LS) & 5.32 & 1.54 & 34.48 & 31.22 & 30.51 & 29.42\\
OPT (CP) & 5.14 & 1.49 & 33.53 & 31.28 &\textbf{30.47} & \textbf{29.31}\\
OPT (OGD) & 5.38 & 1.56 & 33.33 & 31.47 & 30.50 & 29.39\\
OPT (OR) & 5.41 & 1.78 & 34.70 & 32.63 & 30.66 & 29.47 \\
 \hline
  \specialrule{0em}{0pt}{-8pt}
\end{tabular}
\caption{\footnotesize Testing error (\%) of OPT for MLPs and CNNs.}
\label{cnn_opt}
\vspace{-4.925mm}
\end{table}


\textbf{Convolutional networks}. We evaluate OPT with 6/9-layer plain CNNs and ResNet-20/32~\cite{he2016deep} on CIFAR-100. Detailed settings are in Appendix~\ref{exp_set}. All neurons (\ie, convolution kernels) are initialized by \cite{he2015delving}. BatchNorm is used by default. Table~\ref{cnn_opt} shows that all OPT variants outperform both baseline and HS-MHE by a large margin. HS-MHE puts the  hyperspherical energy into the loss function and naively minimizes it along with the CNN. We observe that OPT (HR) performs the worse among all OPT variants partially because of its intensive unrolling computation. OPT (GS) achieves the best testing error on CNN-6/9, while OPT (CP) achieves the best testing error on ResNet-20/34, implying that different OPT encodes different inductive bias.

\begin{figure}[!h]
\renewcommand{\captionlabelfont}{\footnotesize}
  \centering
    \vspace{-0.0105in}  
    \includegraphics[width=3.24in]{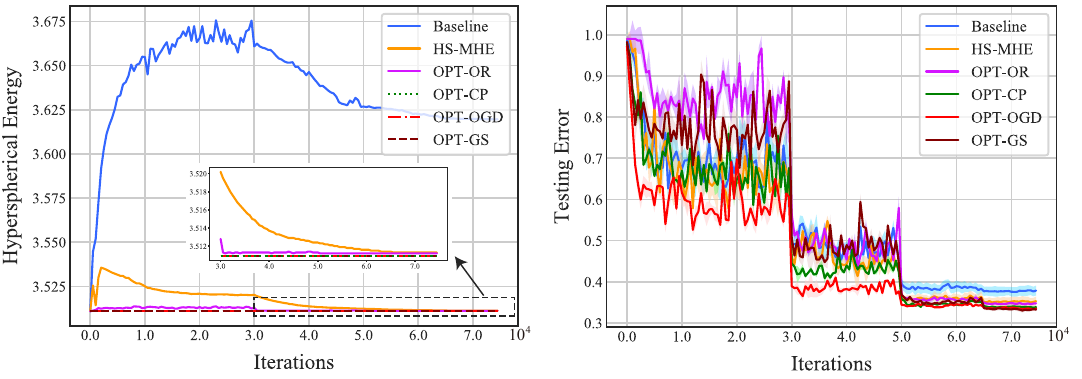}
  \vspace{-3.6mm}
  \caption{\footnotesize Training dynamics on CIFAR-100. Left: Hyperspherical energy vs. iteration. Right: Testing error vs. iteration.}\label{training}
    \vspace{-0.098in}
\end{figure}

\textbf{Training dynamics}. We look into how hyperspherical energy and testing error changes in OPT. Fig.~\ref{training} shows that the energy of the baseline will increase dramatically at the beginning and then gradually go down, but it still stays in a high value in the end. HS-MHE well reduces the energy at the end of the training. In contrast, OPT variants always maintain very small energy in training. OPT with GS, CP and OGD keep exactly the same energy as the random initialization, while OPT (OR) slightly increases the energy due to relaxation. All OPT variants converge efficiently and stably.

\setlength{\columnsep}{8pt}
\begin{wraptable}{r}[0cm]{0pt}
    \centering
    \scriptsize
    \newcommand{\tabincell}[2]{\begin{tabular}{@{}#1@{}}#2\end{tabular}}
    \setlength{\tabcolsep}{2.5pt}
\renewcommand{\captionlabelfont}{\footnotesize}
\begin{tabular}{c |c c} 
\specialrule{0em}{-9.5pt}{0pt}
  \hline
Method & Top-1& Top-5\\\hline
Baseline & 44.32 & 21.13\\
Orthogonal~\cite{brock2016neural} & 44.13 & 20.97\\
HS-MHE~\cite{liu2018learning} & 43.92 & 20.85\\
OPT (OGD) & 43.81 & 20.49\\
OPT (CP) & \textbf{43.67} & \textbf{20.26}\\
 \hline
  \specialrule{0em}{0pt}{-8pt}
\end{tabular}
\caption{\footnotesize ImageNet (\%).}
\label{imagenet}
\vspace{-4mm}
\end{wraptable}

\textbf{Large-scale learning}. To see how OPT performs in large-scale settings, we evaluate OPT on the large-scale ImageNet-2012~\cite{russakovsky2015imagenet}. Specifically, we use OPT with OGD and CP to train a plain 10-layer CNN (Appendix~\ref{exp_set}) on ImageNet. Note that, our purpose is to validate the superiority of OPT over the corresponding baseline rather than achieving state-of-the-art results. Table~\ref{imagenet} shows that OPT (CP) reduces top-1 and top-5 error for the baseline by $\thickmuskip=2mu \medmuskip=2mu \sim0.7\%$ and $\thickmuskip=2mu \medmuskip=2mu \sim0.9\%$, respectively.

\setlength{\columnsep}{5pt}
\begin{wraptable}{r}[0cm]{0pt}
    \centering
    \scriptsize
    \newcommand{\tabincell}[2]{\begin{tabular}{@{}#1@{}}#2\end{tabular}}
    \setlength{\tabcolsep}{1.5pt}
\renewcommand{\captionlabelfont}{\footnotesize}
\begin{tabular}{c |c } 
\specialrule{0em}{0.5pt}{0pt}
  \hline
Method & 5-shot Acc. (\%)\\\hline
MAML~\cite{finn2017model} & 62.71 $\pm$ 0.71\\
MatchingNet~\cite{vinyals2016matching} & 63.48 $\pm$ 0.66\\
ProtoNet~{\cite{snell2017prototypical}} & 64.24 $\pm$ 0.72\\\hline
Baseline~{\cite{chen2019closer}} &  62.53 $\pm$ 0.69\\
Baseline w/ OPT & \textbf{63.27 $\pm$ 0.68}\\
Baseline++~{\cite{chen2019closer}} & 66.43 $\pm$0.63\\
Baseline++ w/ OPT & \textbf{66.82 $\pm$ 0.62}\\
 \hline 
  \specialrule{0em}{0pt}{-9.3pt}
\end{tabular}
\caption{\footnotesize Few-shot learning.}
\label{fewshot}
\vspace{-3.7mm}
\end{wraptable}

\textbf{Few-shot recognition}. For evaluating OPT on cross-task generalization, we perform the few-shot recognition on Mini-ImageNet, following the same setup as \cite{chen2019closer}. Appendix~\ref{exp_set} gives more detailed settings. We apply OPT with CP to train the baseline and baseline++ in \cite{chen2019closer}, and immediately obtain improvements. Therefore, OPT-trained networks generalize well in this challenging scenario. 

\vspace{0.3mm}

\setlength{\columnsep}{5pt}
\begin{wraptable}{r}[0cm]{0pt}
    \centering
    \scriptsize
    \newcommand{\tabincell}[2]{\begin{tabular}{@{}#1@{}}#2\end{tabular}}
    \setlength{\tabcolsep}{1.7pt}
\renewcommand{\captionlabelfont}{\footnotesize}
\begin{tabular}{c| c c |c} 
\specialrule{0em}{-9.8pt}{0pt}
  \hline
\multirow{2}{*}[0pt]{Method}& \multicolumn{2}{c|}{GCN} & PointNet\\
 & Cora & Pubmed & MN-40 \\\hline
Baseline & 81.3 & 79.0 & 87.1\\
OPT (GS) & 81.9 & 79.4 & 87.23\\
OPT (CP) & 82.0 & 79.4 & 87.81\\
OPT (OGD) & \textbf{82.3} & \textbf{79.5} & \textbf{87.86}\\
 \hline 
  \specialrule{0em}{0pt}{-9pt}
\end{tabular}
\caption{\footnotesize Geometric networks.}
\label{geometric}
\vspace{-3mm}
\end{wraptable}

\textbf{Geometric learning}. We apply OPT to graph convolution network (GCN)~\cite{kipf2016semi} and point cloud network (PointNet)~\cite{qi2017pointnet} for graph node and point cloud classification, respectively. The training of GCN and PointNet is conceptually similar to MLP, and the detailed training procedures are given in Appendix~\ref{exp_set}. For GCN, we evaluate OPT on Cora and Pubmed datsets~\cite{sen2008collective}. For PointNet, we conduct experiments on ModelNet-40 dataset~\cite{wu20153d}. Table~\ref{geometric} shows that OPT effectively improves both GCN and PointNet.

\vspace{-0.8mm}
\subsection{Empirical Evaluation on S-OPT}\label{sopt_sect}
\vspace{-0.7mm}

\begin{table}[h]
    \centering
    \scriptsize
    \newcommand{\tabincell}[2]{\begin{tabular}{@{}#1@{}}#2\end{tabular}}
    \setlength{\tabcolsep}{3.1pt}
\renewcommand{\captionlabelfont}{\footnotesize}
\begin{tabular}{c| c c c c| c c } 
\specialrule{0em}{-8pt}{0pt}
  \hline
\multirow{2}{*}[0pt]{Method}& \multicolumn{4}{c|}{CIFAR-100} & \multicolumn{2}{c}{ImageNet}\\
& CNN-6 & Params & Wide CNN-9 & Params & ResNet-18 & Params \\[0.2mm]\hline
Baseline & 37.59 & 258K & 28.03 & 2.99M & 32.95 & 11.7M  \\
HS-MHE~\cite{liu2018learning} & 34.97 & 258K & 25.96 &  2.99M & 32.50 &11.7M \\
OPT (GS) & \textbf{33.02} & 1.36M & OOM & 16.2M & OOM & 46.5M \\
S-OPT (GS) & 33.70 & \textbf{90.9K} & \textbf{25.59} & \textbf{1.04M} & \textbf{32.26} & \textbf{3.39M} \\
 \hline 
  \specialrule{0em}{0pt}{-9pt}
\end{tabular}
\caption{\footnotesize OPT vs. S-OPT on CIFAR-100 \& ImageNet.}
\label{sopt_exp}
\vspace{-2.8mm}
\end{table}

\textbf{Convolutional networks}. S-OPT is a scalable OPT variant, and we evaluate its performance in terms of number of \emph{trainable parameters} and testing error. Training parameters are learnable variables in training, and are different from model parameters in testing. In testing, all methods have the same number of model parameters. We perform classification on CIFAR-100 with CNN-6 and wide CNN-9. We also evaluate S-OPT with standard ResNet-18 on ImageNet. Detailed settings are in Appendix~\ref{exp_set}. For S-OPT, we set the sampling dimension as 25\% of the original neuron dimension in each layer. Table~\ref{sopt_exp} shows that S-OPT achieves a good trade-off between accuracy and scalability. More importantly, S-OPT can be applied to large neural networks, making OPT more useful in practice. Additionally, Appendix~\ref{appendix_peopt} discusses an efficient parameter sharing for OPT.

\vspace{0.5mm}

\setlength{\columnsep}{5pt}
\begin{wraptable}{r}[0cm]{0pt}
    \centering
    \scriptsize
    \newcommand{\tabincell}[2]{\begin{tabular}{@{}#1@{}}#2\end{tabular}}
    \setlength{\tabcolsep}{2pt}
\renewcommand{\captionlabelfont}{\footnotesize}
\begin{tabular}{c |c c } 
\specialrule{0em}{-10.8pt}{0pt}
  \hline
$p$ $=$ & Error (\%) &  Params\\\hline
$d$ & OOM & 16.2M\\
$d$/4 & \textbf{25.59} & 1.04M\\
$d$/8 & 28.61 & 278K\\
$d$/16 & 32.52 & 88.7K\\
16 & 33.03 & 27.0K\\
3 & 45.22 & 26.0K\\
0 & 60.64 & \textbf{25.6K}\\\specialrule{0em}{-0.7pt}{0pt}
 \hline
  \specialrule{0em}{0pt}{-9.7pt}
\end{tabular}
\caption{\footnotesize Sampling dim.}
\label{sopt_dim}
\vspace{-4mm}
\end{wraptable}
\textbf{Sampling dimensions}. We study how the sampling dimension $p$ affect the performance by performing classification with wide CNN-9 on CIFAR-100. In Table~\ref{sopt_dim}, $p\thickmuskip=2mu \medmuskip=2mu=d/4$ means that we randomly sample $\thickmuskip=2mu \medmuskip=2mu 1/4$ of the original neuron dimension in each layer, so $p$ may vary in different layer. $\thickmuskip=2mu \medmuskip=2mu p=16$ means that we sample 16 dimensions in each layer. Note that there are 25.6K parameters used for the final classification layer, which can not be saved in S-OPT. Table~\ref{sopt_dim} shows that S-OPT can achieve highly competitive accuracy with a reasonably large $p$.

\vspace{-0.8mm}
\subsection{Large Categorical Training}
\vspace{-0.7mm}

Previously, OPT is not applied to the final classification layer, since it makes little sense to fix random classifiers and learn an orthogonal matrix to transform them. However, learning the classification layer can be costly with large number of classes. The number of trainable parameters of the classification layer grows linearly with the number of classes. To address this, OPT can be used to learn the classification layer, because its number of trainable parameters only depends on the classifier dimension. To be fair, we \emph{only} learn the last classification layer with OPT and the other layers are normally learned (CLS-OPT). The oracle learns the entire network normally. Experimental details are in Appendix~\ref{exp_set}.

\setlength{\columnsep}{5pt}
\begin{wrapfigure}{r}{0.235\textwidth}
\begin{center}
\advance\leftskip+1mm
    \renewcommand{\captionlabelfont}{\footnotesize}
    \vspace{-0.295in}  
    \includegraphics[width=0.232\textwidth]{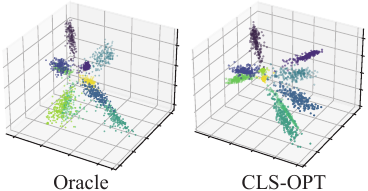}
    \vspace{-0.27in} 
    \caption{\footnotesize Feature visualization. }\label{cls_opt_vis}
\vspace{-0.235in} 
\end{center}
\end{wrapfigure}

We intuitively compare the oracle and CLS-OPT by visualizing the deep MNIST features following \cite{liu2016lsoftmax}. The features are the direct outputs of CNN by setting the output dimension as 3. Fig.~\ref{cls_opt_vis} show that even if CLS-OPT fixes randomly initialized classifiers, it can still learn discriminative and separable deep features.

\setlength{\columnsep}{5pt}
\begin{wraptable}{r}[0cm]{0pt}
    \centering
    \scriptsize
    \newcommand{\tabincell}[2]{\begin{tabular}{@{}#1@{}}#2\end{tabular}}
    \setlength{\tabcolsep}{1.7pt}
\renewcommand{\captionlabelfont}{\footnotesize}
\begin{tabular}{c | c c |c c} 
\specialrule{0em}{-8.5pt}{0pt}
\hline  
\multirow{2}{*}[0pt]{Method}& \multicolumn{2}{c|}{ResNet-18A} &  \multicolumn{2}{c}{ResNet-18B}\\
 & Error & Params & Error & Params \\\hline
Oracle & \textbf{18.08} & 64.0K & 12.12 & 512K \\
CLS-OPT & 21.12 & \textbf{8.13K} & \textbf{12.05} & \textbf{131K}\\
 \hline 
  \specialrule{0em}{0pt}{-8pt}
\end{tabular}
\caption{\footnotesize CLS-OPT on ImageNet.}
\label{large_class_imagenet}
\vspace{-3mm}
\end{wraptable}

We evaluate its performance on ImageNet with 1K classes. We use ResNet-18 with different output dimensions (A:128, B:512). Table~\ref{large_class_imagenet} gives the top-5 test error (\%) and ``Params'' denotes the number of trainable parameters in the classification layer. CLS-OPT performs well with far less trainable parameters.

\setlength{\columnsep}{4pt}
\begin{wraptable}{r}[0cm]{0pt}
    \centering
    \scriptsize
    \newcommand{\tabincell}[2]{\begin{tabular}{@{}#1@{}}#2\end{tabular}}
    \setlength{\tabcolsep}{1.7pt}
\renewcommand{\captionlabelfont}{\footnotesize}
\begin{tabular}{c | c c |c c} 
\specialrule{0em}{-8.5pt}{0pt}
  \hline
\multirow{2}{*}[0pt]{Method}& \multicolumn{2}{c|}{512 Dim.} &  \multicolumn{2}{c}{1024 Dim.}\\
 & Acc. & Params & Acc. & Params \\\hline
Oracle & \textbf{95.7} & 5.41M & \textbf{96.4} & 10.83M \\
CLS-OPT & 94.9 & \textbf{131K} & 95.8 & \textbf{524K}\\
 \hline 
  \specialrule{0em}{0pt}{-8pt}
\end{tabular}
\caption{\footnotesize Verification (\%) on LFW.}
\label{face_recog}
\vspace{-4mm}
\end{wraptable}

Since face datasets usually contain large number of identities~\cite{guo2016ms}, it is natural to apply CLS-OPT to learn face embeddings. We train on CASIA~\cite{yi2014learning} which has 0.5M face images of 10,572 identities, and test on LFW~\cite{huang2008labeled}. Since the training and testing sets do not overlap, the task well evaluates the generalizability of learned features. All methods use CNN-20~\cite{Liu2017CVPR} and standard softmax loss. We set the output feature dimension as 512 or 1024. Table~\ref{face_recog} validates CLS-OPT's effectiveness.

\vspace{-1.65mm}
\section{Concluding Remarks}
\vspace{-1.2mm}

We propose a novel training framework for neural networks. By parameterizing neurons with weights and a shared orthogonal matrix, OPT can provably achieve small hyperspherical energy and yield superior generalizability. 

\vspace{1.85mm}
{
\begin{spacing}{0.44}
{\scriptsize \noindent\textbf{Acknowledgements.} Weiyang Liu and Adrian Weller are supported by DeepMind and the Leverhulme Trust via CFI. Adrian Weller acknowledges support from the Alan Turing Institute under EPSRC grant EP/N510129/1 and U/B/000074. Rongmei Lin and Li Xiong are supported by NSF under CNS-1952192, IIS-1838200.}
\end{spacing}
}

\newpage

{
    \footnotesize
    \bibliographystyle{ieee_fullname}
    \bibliography{OPT_ref}
}

\newpage
\onecolumn
\begin{appendix}
\begin{center}

{\LARGE \textbf{Appendix}}
\end{center}

{
  \hypersetup{hidelinks}
  \tableofcontents
  \noindent\hrulefill
}
\addtocontents{toc}{\protect\setcounter{tocdepth}{2}} 

\clearpage
\newpage

\section{Details of Unrolled Orthogoanlization Algorithms}\label{supp_ortho}

\subsection{Gram-Schmidt Process}

\textbf{Gram-Schmidt Process}. GS process is a method for orthonormalizing a set of vectors in an inner product space, \ie, the Euclidean space $\mathbb{R}^n$ equipped with the standard inner product. Specifically, GS process performs the following operations to orthogonalize a set of vectors $\{ \bm{u}_1,\cdots,\bm{u}_n \}\in\mathbb{R}^{n\times n}$:
\begin{equation}\label{sup_gs}
\begin{aligned}
\textnormal{Step 1:}\ &\tilde{\bm{e}}_1 = \bm{u}_1,\ \ \ \bm{e}_1 = \frac{\tilde{\bm{e}}_1}{\norm{\tilde{\bm{e}}_1}}\\
\textnormal{Step 2:}\ & \tilde{\bm{e}}_2 = \bm{u}_2 - \textnormal{Proj}_{\tilde{e}_1}(\bm{u}_2),\ \ \ \bm{e}_2 = \frac{\tilde{\bm{e}}_2}{\norm{\tilde{\bm{e}}_2}}\\
\textnormal{Step 3:}\ & \tilde{\bm{e}}_3 = \bm{u}_3 - \textnormal{Proj}_{\tilde{\bm{e}}_1}(\bm{u}_3)-\textnormal{Proj}_{\tilde{\bm{e}}_2}(\bm{u}_3),\ \ \ \bm{e}_3 = \frac{\tilde{\bm{e}}_3}{\norm{\tilde{\bm{e}}_3}}\\
\textnormal{Step 4:}\ & \tilde{\bm{e}}_4 = \bm{u}_4 - \textnormal{Proj}_{\tilde{\bm{e}}_1}(\bm{u}_4)-\textnormal{Proj}_{\tilde{\bm{e}}_2}(\bm{u}_4)-\textnormal{Proj}_{\tilde{\bm{e}}_3}(\bm{u}_4),\ \ \ \bm{e}_4 = \frac{\tilde{\bm{e}}_4}{\norm{\tilde{\bm{e}}_4}}\\
\vdots&\\
\textnormal{Step n:}\ & \tilde{\bm{e}}_n = \bm{u}_n - \textnormal{Proj}_{\tilde{\bm{e}}_1}(\bm{u}_n)-\textnormal{Proj}_{\tilde{\bm{e}}_2}(\bm{u}_n)-\textnormal{Proj}_{\tilde{\bm{e}}_3}(\bm{u}_n)-\cdots-\textnormal{Proj}_{\tilde{\bm{e}}_{n-1}}(\bm{u}_n),\ \ \ \bm{e}_n = \frac{\tilde{\bm{e}}_n}{\norm{\tilde{\bm{e}}_n}}
\end{aligned}
\end{equation}
where $\textnormal{Proj}_{\bm{a}}(\bm{b})=\frac{\langle \bm{a},\bm{b} \rangle}{\langle \bm{a},\bm{a} \rangle}\bm{a}$ denotes the projection of the vector $\bm{b}$ onto the vector $\bm{a}$. The set $\{\bm{e}_1,\bm{e}_2,\cdots,\bm{e}_n\}$ denotes the output orthonormal set. The algorithm flowchart can be described as follows:
\vspace{-4mm}

\begin{center}
\begin{minipage}{.5\linewidth}
\begin{algorithm}[H]
\caption{Gram-Schmidt Process}
\KwIn{$\bm{U}=\{\bm{u}_1,\bm{u}_2,\cdots,\bm{u}_n\}\in\mathbb{R}^{n\times n}$}
\KwOut{$\bm{R}=\{\bm{e}_1,\bm{e}_2,\cdots,\bm{e}_n\}\in\mathbb{R}^{n\times n}$}
$\bm{R}= \bm{0} $\;

\For{$j=1,2,\cdots,n$}{
    $\bm{q}_j= \bm{R}^\top\bm{u}_j$\;
    
    $\bm{t}=\bm{u}_j-\bm{R}\bm{q}_j$\;
    
    $\bm{q}_{jj}=\norm{\bm{t}}_2$\;
    
    $\bm{e}_j=\frac{\bm{t}}{\bm{q}_{jj}}$\;
}
\end{algorithm}
\end{minipage}
\end{center}

The vectors $\bm{q}_j,\forall j$ in the algorithm above are used to compute the QR factorization, which is not useful in orthogonalization and therefore does not need to be stored. When the GS process is implemented on a finite-precision computer, the vectors $\bm{e}_j,\forall j$ are often not quite orthogonal, because of rounding errors. Besides the standard GS process, there is a modified Gram-Schmidt (MGS) algorithm which enjoys better numerical stability. This approach gives the same result as the original formula in exact arithmetic and introduces smaller errors in finite-precision arithmetic. Specifically, GS computes the following formula:
\begin{equation}\label{supp_ej}
\begin{aligned}
    \tilde{\bm{e}}_j &= \bm{u}_j-\sum_{k=1}^{j-1}\textnormal{Proj}_{\tilde{\bm{e}}_k}(\bm{u}_j)\\
    \bm{e}_j &= \frac{\tilde{\bm{e}}_j}{\norm{\tilde{\bm{e}}_j}}
\end{aligned}
\end{equation}
Instead of computing the vector $\bm{e}_j$ as in Eq.~\ref{supp_ej}, MGS computes the orthogonal basis differently. MGS does not subtract the projections of the original vector set, and instead remove the projection of the previously constructed orthogonal basis. Specifically, MGS computes the following series of formulas:
\begin{equation}
\begin{aligned}
    \tilde{\bm{e}}_j^{(1)} &= \bm{u}_j-\textnormal{Proj}_{\tilde{\bm{e}}_1}(\bm{u}_j)\\
    \tilde{\bm{e}}_j^{(2)} &= \tilde{\bm{e}}_j^{(1)}-\textnormal{Proj}_{\tilde{\bm{e}}_2}(\tilde{\bm{e}}_j^{(1)})\\
    \vdots&\\
    \tilde{\bm{e}}_j^{(j-2)} &= \tilde{\bm{e}}_j^{(j-3)}-\textnormal{Proj}_{\tilde{\bm{e}}_2}(\tilde{\bm{e}}_j^{(j-3)})\\
    \tilde{\bm{e}}_j^{(j-1)} &= \tilde{\bm{e}}_j^{(j-2)}-\textnormal{Proj}_{\tilde{\bm{e}}_2}(\tilde{\bm{e}}_j^{(j-2)})\\
    \bm{e}_j &=\frac{\tilde{\bm{e}}_j^{(j-1)}}{\norm{\tilde{\bm{e}}_j^{(j-1)}}}
\end{aligned}
\end{equation}
where each step finds a vector $\tilde{\bm{e}}_j^{(i)}$ that is orthogonal to $\tilde{\bm{e}}_j^{(i-1)}$. Therefore, $\tilde{\bm{e}}_j^{(i)}$ is also orthogonalized against any errors brought by the computation of $\tilde{\bm{e}}_j^{(i-1)}$. In practice, although MGS enjoys better numerical stability, we find the empirical performance of GS and MGS is almost the same in OPT. However, MGS takes longer time to complete since the computation of each orthogonal basis is an iterative process. Therefore, we usually stick to classic GS for OPT.

\textbf{Iterative Gram-Schmidt Process}. Iterative Gram-Schmidt~(IGS) process is an iterative version of the GS process. It is shown in \cite{hoffmann1989iterative} that GS process can be carried out iteratively to obtain a basis matrix that is orthogonal in almost full working precision. The IGS algorithm is given as follows:
\vspace{-4mm}

\begin{center}
\begin{minipage}{.5\linewidth}
\begin{algorithm}[H]
\caption{Iterative Gram-Schmidt Process}
\KwIn{$\bm{U}=\{\bm{u}_1,\bm{u}_2,\cdots,\bm{u}_n\}\in\mathbb{R}^{n\times n}$}
\KwOut{$\bm{R}=\{\bm{e}_1,\bm{e}_2,\cdots,\bm{e}_n\}\in\mathbb{R}^{n\times n}$}
$\bm{R}= \bm{0} $\;

\For{$j=1,2,\cdots,n$}{
    $\bm{q}_j= \bm{0}$\;
    
    $\bm{t}=\bm{u}_j$\;
    
    \While{$\bm{t} \perp \textnormal{span}(\bm{e}_1,\cdots,\bm{e}_{j-1})$ is False}{
    $\bm{p}=\bm{t}$\;
    
    $\bm{s} = \bm{R}^\top \bm{p}$\;
    
    $\bm{v}=\bm{R}\bm{s}$\;
    
    $\bm{t}=\bm{p}-\bm{v}$\;
    
    $\bm{q}_j\leftarrow\bm{q}_j+\bm{s}$
    }
    
    $\bm{q}_{jj}=\norm{\bm{t}}_2$\;
    
    $\bm{e}_j=\frac{\bm{t}}{\bm{q}_{jj}}$
}
\end{algorithm}
\end{minipage}
\end{center}

The vectors $\bm{q}_j,\forall j$ in the algorithm above are used to compute the QR factorization, which is not useful in orthogonalization and therefore does not need to be explicitly computed. The while loop in IGS is an iterative procedure. In practice, we can unroll a fixed number of steps for the while loop in order to improve the orthogonality. The resulting $\bm{q}_j$ in the $j$-th step corresponds to the solution of the equation $\tilde{\bm{R}}^\top\tilde{\bm{R}}\bm{q}_j=\tilde{\bm{R}}^\top\bm{u}_j$ where $\tilde{\bm{R}}=\{\bm{e}_1,\cdots,\bm{e}_{j-1}\}$. The IGS process corresponds to the Gauss-Jacobi iteration for solving this equation.

Both GS and IGS are easy to be embedded in the neural networks, since they are both differentiable. In our experiments, we find that the performance gain of unrolling multiple steps in IGS over GS is not very obvious (partially because GS has already achieved nearly perfect orthogonality), but IGS costs longer training time. Therefore, we unroll the classic GS process by default.

\subsection{Householder Reflection}
Let $\bm{v}\in\mathbb{R}^n$ be a non-zero vector. A matrix $\bm{H}\in\mathbb{R}^{n\times n}$ of the form
\begin{equation}
    \bm{H}=\bm{I}-\frac{2\bm{v}\bm{v}^\top}{\bm{v}^\top\bm{v}}
\end{equation}
is a Householder reflection. The vector $\bm{v}$ is the Householder vector. If a vector $\bm{x}$ is multiplied by the matrix $\bm{H}$, then it will be reflected in the hyperplane $\textnormal{span}(\bm{v})^\perp$. Householder matrices are symmetric and orthogonal.

For a vector $\bm{x}\in\mathbb{R}^n$, we let $\bm{v}=\bm{x}\pm\norm{\bm{x}}_2\bm{e}_1$ where $\bm{e}_1$ is a vector of $\{1,0,\cdots,0\}$ (the first element is $1$ and the remaining elements are $0$). Then we construct the Householder reflection matrix with $\bm{v}$ and multiply it to $\bm{x}$:
\begin{equation}
    \bm{H}\bm{x}=\bigg( \bm{I}-2\frac{\bm{v}\bm{v}^\top}{\bm{v}^\top\bm{v}} \bigg)\bm{x}=\mp\norm{\bm{x}}_2\bm{e}_1
\end{equation}
which indicates that we can make any non-zero vector become $\alpha\bm{e}_1$ where $\alpha$ is some constant by using Householder reflection. By left-multiplying a reflection we can turn a dense vector $\bm{x}$ into a vector with the same length and with only a single nonzero entry. Repeating this $n$ times gives us the Householder QR factorization, which also orthogonalizes the original input matrix. Householder reflection orthogonalizes a matrix $\bm{U}=\{\bm{u}_1,\cdots,\bm{u}_n\}$ by triangularizing it:
\begin{equation}
    \bm{U}=\bm{H}_1\bm{H}_2\cdots\bm{H}_n\bm{R}
\end{equation}
where $\bm{R}$ is a upper-triangular matrix in the QR factorization. $\bm{H}_j,j\geq 2$ is constructed by $\textnormal{Diag}(\bm{I}_{j-1},\tilde{\bm{H}}_{n-j+1})$ where $\tilde{\bm{H}}_{n-j+1}\in\mathbb{R}^{(n-j+1)\times (n-j+1)}$ is the Householder reflection that is performed on the vector $\bm{U}_{(j:n,j)}$. The algorithm flowchart is given as follows:

\vspace{-3mm}
\begin{center}
\begin{minipage}{.65\linewidth}
\begin{algorithm}[H]

\caption{Householder Reflection Orthogonalization}
\KwIn{$\bm{U}=\{\bm{u}_1,\bm{u}_2,\cdots,\bm{u}_n\}\in\mathbb{R}^{n\times n}$}
\KwOut{$\bm{U}=\bm{Q}\bm{R}$, where $\bm{Q}=\{\bm{e}_1,\bm{e}_2,\cdots,\bm{e}_n\}\in\mathbb{R}^{n\times n}$ is the orthogonal matrix and $\bm{R}\in\mathbb{R}^{n\times n}$ is a upper triangular matrix}

\For{$j=1,2,\cdots,n-1$}{
    $\{\bm{v},\beta\}=\textnormal{\texttt{Householder}}(\bm{U}_{j:n,j})$\;
    
    $\bm{U}_{j:n,j:n}\leftarrow \bm{U}_{j:n,j:n}-\beta\bm{v}(\bm{v}^\top\bm{U}_{j:n,j:n})$\;
    
    $\bm{U}_{j+1:n,j}\leftarrow\bm{v}_{(2:\textnormal{end})}$
    }
    
    \SetKwProg{myproc}{function}{}{}
    \myproc{$\{\bm{v},\beta\}=\textnormal{\texttt{Householder}}(\bm{\bm{x}})$}{
    
    $\sigma^2=\norm{\bm{x}_{2:\textnormal{end}}}^2_2$\;
    
    $\bm{v}\leftarrow \begin{bmatrix}
    1 \\
    \bm{x}_{2:\textnormal{end}}
    \end{bmatrix}$\;
    
    \eIf{$\sigma^2=0$}{$\beta=0$}{
    \eIf{$\bm{x}_1\leq 0$}{$\bm{v}_1=\bm{x}_1-\sqrt{\bm{x}_1^2+\sigma^2}$}{
    $\bm{v}_1=-\dfrac{\sigma^2}{\bm{x}_1+\sqrt{\bm{x}_1^2+\sigma^2}}$
    }
    $\beta=\frac{2\bm{v}_1^2}{\sigma^2+\bm{v}_1^2}$\;
    
    $\bm{v}\leftarrow\frac{\bm{v}}{\bm{v}_1}$
    
    }

    }
    
    \SetKwProg{myproc}{end function}{}{}
    \myproc{}{
    
    }

\end{algorithm}
\end{minipage}
\end{center}

The algorithm follows the Matlab notation where $\bm{U}_{j:n,j:n}$ denotes the submatrix of $\bm{U}$ from the $j$-th column to the $n$-th column and from the $j$-th row to the $n$-th row. Note that, there are a number of variants for the Householder reflection orthogonalization, such as the implicit variant where we do not store each reflection $\bm{H}_j$ explicitly. Here $\bm{Q}$ is the final orthogonal matrix we need.

\subsection{Löwdin’s Symmetric Orthogonalization}

Let $\bm{U}=\{\bm{u}_1,\bm{u}_2,\cdots,\bm{u}_n\}$ be a set of linearly independent vectors in a $n$-dimensional space. We define a general non-singular linear transformation $\bm{A}$ that can transform the basis $\bm{U}$ to a new basis $\bm{R}$:
\begin{equation}
    \bm{R}=\bm{U}\bm{A}
\end{equation}
where the basis $\bm{R}$ will be orthonormal if (the transpose will become conjugate transpose in complex space)
\begin{equation}
    \bm{R}^\top\bm{R}=(\bm{U}\bm{A})^\top(\bm{U}\bm{A})=\bm{A}^\top\bm{U}^\top\bm{U}\bm{A}=\bm{A}^\top\bm{M}\bm{A}=\bm{I}
\end{equation}
where $\bm{M}=\bm{U}^\top\bm{U}$ is the gram matrix of the given basis $\bm{U}$.

A general solution to this orthogonalization problem can be obtained via the substitution:
\begin{equation}
    \bm{A}=\bm{M}^{-1}\bm{B}
\end{equation}
in which $\bm{B}$ is an arbitrary orthogonal (or unitary) matrix. When $\bm{B}=\bm{I}$, we will have the symmetric orthogonalization, namely
\begin{equation}
    \bm{R}:=\bm{\Phi}=\bm{U}\bm{M}^{-\frac{1}{2}}
\end{equation}
When $\bm{B}=\bm{V}$ in which $\bm{V}$ diagonalizes $\bm{M}$, then we have the canonical orthogonalization, namely
\begin{equation}
    \bm{\Lambda}=\bm{U}\bm{V}\bm{d}^{-\frac{1}{2}}.
\end{equation}
Because $\bm{V}$ diagonalizes $\bm{M}$, we have that $\bm{M} = \bm{V}\bm{d}\bm{V}^\top$. Therefore, we have the $\bm{M}^{-\frac{1}{2}}$ transformation as $\bm{M}^{-\frac{1}{2}}=\bm{V}\bm{d}^{-\frac{1}{2}}\bm{V}^\top$. This is essentially an eigenvalue decomposition of the symmetric matrix $\bm{M}=\bm{U}^\top\bm{U}$. 

In order to compute the Löwdin’s symmetric orthogonalized basis sets, we can use singular value decomposition. Specifically, SVD of the original basis set $\bm{U}$ is given by
\begin{equation}
    \bm{U}=\bm{W}\bm{\Sigma}\bm{V}^\top
\end{equation}
where both $\bm{W}\in\mathbb{R}^{n\times n}$ and $\bm{U}\in\mathbb{R}^{n\times n}$ are orthogonal matrices. $\bm{\Sigma}$ is the diagonal matrix of singular values. Therefore, we have that
\begin{equation}
\begin{aligned}
    \bm{R}&=\bm{U}\bm{M}^{-\frac{1}{2}}\\
    &=\bm{W}\bm{\Sigma}\bm{V}^\top\bm{V}\bm{d}^{-\frac{1}{2}}\bm{V}^\top\\
    &=\bm{W}\bm{\Sigma}\bm{d}^{-\frac{1}{2}}\bm{V}^\top
\end{aligned}
\end{equation}
where we have $\bm{\Sigma}=\bm{d}^{\frac{1}{2}}$ due to the connections between eigenvalue decomposition and SVD. Therefore, we end up with
\begin{equation}
    \bm{R}=\bm{W}\bm{V}^\top
\end{equation}
which is the output orthogonal matrix for Löwdin’s symmetric orthogonalization.

An interesting feature of the symmetric orthogonalization is to ensure that
\begin{equation}
    \bm{R} = \arg\min_{\bm{P}\in \textnormal{orth}(\bm{U})} \sum_i\norm{\bm{P}_i-\bm{U}_i} 
\end{equation}
where $\bm{P}_i$ and $\bm{U}_i$ are the $i$-th column vectors of $\bm{P}\in\mathbb{R}^{n\times n}$ and $\bm{U}$, respectively. $\textnormal{orth}(\bm{U})$ denotes the set of all possible orthonormal sets in the range of $\bm{U}$. This means that the symmetric orthogonalization functions $\bm{R}_i$ (or $\bm{\Phi}_i$) are the least distant in the Hilbert space from the original functions $\bm{U}_i$. Therefore, symmetric orthogonalization indicates the gentlest pushing of the directions of the vectors in order to make them orthogonal.

More interestingly, the symmetric orthogonalized basis sets has unique geometric properties~\cite{srivastava2000unified,annavarapu2013singular} if we consider the Schweinler-Wigner matrix in terms of the sum of squared projections.

\newpage
\section{Proof of Theorem~\ref{unifrom}}

To be more specific, neurons with each element initialized by a zero-mean Gaussian distribution are uniformly distributed on a hypersphere. We show this argument with the following theorem.
\begin{theorem}\label{sphereuniform}
The normalized vector of Gaussian variables is uniformly distributed on the sphere. Formally, let $x_1,x_2,\cdots,x_n\sim \mathcal{N}(0,1)$ and be independent. Then the vector
\begin{equation}
    \bm{x}=\bigg{[} \frac{x_1}{z},\frac{x_2}{z},\cdots,\frac{x_n}{z} \bigg{]}
\end{equation}
follows the uniform distribution on $\mathbb{S}^{n-1}$, where $z=\sqrt{x_1^2+x_2^2+\cdots+x_n^2}$ is a normalization factor.
\end{theorem}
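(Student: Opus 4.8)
The plan is to derive uniformity from the rotational symmetry of the standard Gaussian. First I would dispose of the degenerate case: $\{z=0\}=\{x_1=\cdots=x_n=0\}$ is a Lebesgue-null event, so $\bm{x}=(x_1/z,\ldots,x_n/z)$ is almost surely well-defined and lies on $\mathbb{S}^{n-1}$. The crux is that the joint density of $(x_1,\ldots,x_n)$ equals $(2\pi)^{-n/2}\exp(-\tfrac12\sum_i x_i^2)=(2\pi)^{-n/2}\exp(-\tfrac12\norm{x}^2)$, which depends on $x$ only through $\norm{x}$. Consequently, for every orthogonal matrix $Q\in O(n)$ the vector $Qx$ has the same law as $x$: the change of variables $x\mapsto Qx$ has Jacobian determinant $|\det Q|=1$ and leaves $\norm{x}$, hence the density, unchanged.

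Next I would use that normalization commutes with orthogonal maps: since $\norm{Qx}=\norm{x}$, we have $Qx/\norm{Qx}=Q\,(x/\norm{x})$. Combined with the previous step, this shows that the law $\mu$ of $\bm{x}=x/\norm{x}$ on $\mathbb{S}^{n-1}$ is invariant under the action of $O(n)$, i.e. $Q_*\mu=\mu$ for all $Q\in O(n)$. The proof then concludes by invoking uniqueness of the rotation-invariant probability measure on the sphere: the normalized surface measure $\sigma/\sigma(\mathbb{S}^{n-1})$ is the only Borel probability measure on $\mathbb{S}^{n-1}$ fixed by every element of $O(n)$ (this is uniqueness of normalized Haar measure transported to the homogeneous space $\mathbb{S}^{n-1}\cong O(n)/O(n-1)$, on which $O(n)$ acts transitively). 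Hence $\mu$ is exactly the uniform distribution on $\mathbb{S}^{n-1}$.

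I expect the uniqueness-of-invariant-measure step to be the main obstacle if one wants a fully self-contained argument, since it quietly imports Haar theory. To sidestep it I would instead run a polar-coordinate computation in parallel: writing $x=r\theta$ with $r=\norm{x}>0$ and $\theta\in\mathbb{S}^{n-1}$, Lebesgue measure factorizes as $dx=r^{n-1}\,dr\,d\sigma(\theta)$, so the joint ``density'' of $(r,\theta)$ is proportional to $r^{n-1}e^{-r^2/2}$ times a quantity constant in $\theta$. This simultaneously shows that $r$ and $\theta=\bm{x}$ are independent and that $\bm{x}$ is distributed proportionally to $\sigma$, i.e. uniformly on $\mathbb{S}^{n-1}$; the only mildly delicate point here is justifying the polar change-of-variables formula. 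Either route yields Theorem~\ref{sphereuniform}, and together with the stated fact that minimum-hyperspherical-energy configurations asymptotically approach the uniform distribution, this establishes Theorem~\ref{unifrom}.
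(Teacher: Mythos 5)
Your proposal is correct and follows essentially the same route as the paper: you prove orthogonal invariance of the standard Gaussian law by a change of variables (the paper's Lemma on $\bm{Y}=\bm{U}\bm{x}$) and transfer it to the normalized vector to conclude uniformity on $\mathbb{S}^{n-1}$. The only difference is that you make explicit the step the paper leaves implicit --- uniqueness of the $O(n)$-invariant probability measure on the sphere --- and you offer a polar-coordinates computation as a self-contained alternative, both of which are sound refinements rather than a different argument.
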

\begin{proof}
A random variable has distribution $\mathcal{N}(0,1)$ if it has the density function
\begin{equation}
    f(x)=\frac{1}{\sqrt{2\pi}}e^{-\frac{1}{2}x^2}.
\end{equation}
A $n$-dimensional random vector $\bm{x}$ has distribution $\mathcal{N}(0,1)$ if the components are independent and have distribution $\mathcal{N}(0,1)$ each. Then the density of $\bm{x}$ is given by
\begin{equation}
    f(x)=\frac{1}{(\sqrt{2\pi})^n}e^{-\frac{1}{2}\langle x,x\rangle}.
\end{equation}
Then we introduce the following lemma (Lemma~\ref{lemma_sphereuniform}) about the orthogonal-invariance of the normal distribution.
\begin{lemma}\label{lemma_sphereuniform}
Let $\bm{x}$ be a $n$-dimensional random vector with distribution $\mathcal{N}(0,1)$ and $\bm{U}\in\mathbb{R}^{n\times n}$ be an orthogonal matrix ($\bm{U}\bm{U}^\top =\bm{U}^\top\bm{U}=\bm{I} $). Then $\bm{Y}=\bm{U}\bm{x}$ also has the distribution of $\mathcal{N}(0,1)$.
\end{lemma}
\begin{proof}
For any measurable set $A\subset\mathbb{R}^n$, we have that
\begin{equation}
\begin{aligned}
    P(Y\in A)&= P(X\in U^\top A)\\
    &=\int_{U^\top A}\frac{1}{(\sqrt{2\pi})^n} e^{-\frac{1}{2}\langle x,x \rangle}\\
    &=\int_A\frac{1}{(\sqrt{2\pi})^n}e^{-\frac{1}{2}\langle Ux, Ux \rangle}\\
    &=\int_A\frac{1}{(\sqrt{2\pi})^n}e^{-\frac{1}{2}\langle x, x \rangle}
\end{aligned}
\end{equation}
because of orthogonality of $U$. Therefore the lemma holds.
\end{proof}

Because any rotation is just a multiplication with some orthogonal matrix, we know that normally distributed random vectors are invariant to rotation. As a result, generating $\bm{x}\in\mathbb{R}^n$ with distribution $\mathbb{N}(0,1)$ and then projecting it onto the hypersphere $\mathbb{S}^{n-1}$ produces random vectors $U=\frac{\bm{x}}{\|\bm{x}\|}$ that are uniformly distributed on the hypersphere. Therefore the theorem holds.
\end{proof}

Then we show the normalized vector $\bm{y}$ where each element follows a zero-mean Gaussian distribution with some constant variance $\sigma^2$:
\begin{equation}
    \bm{y}=\bigg{[} \frac{y_1}{r},\frac{y_2}{r},\cdots,\frac{y_n}{r} \bigg{]}
\end{equation}
where $r=\sqrt{y_1^2+y_2^2+\cdots+y_n^2}$. Because we have that $\frac{y_i}{\sigma}\sim\mathcal{N}(0,1)$, we can rewrite $\bm{y}$ as the following random vector:
\begin{equation}
    \bm{y}=\bigg{[} \frac{y_1/\sigma}{r/\sigma},\frac{y_2/\sigma}{r/\sigma},\cdots,\frac{y_n/\sigma}{r/\sigma} \bigg{]}
\end{equation}
where $r/\sigma=\sqrt{(y_1/\sigma)^2+(y_2/\sigma)^2+\cdots+(y_n/\sigma)^2}$. Therefore, we directly can apply Theorem~\ref{sphereuniform} and conclude that $\bm{y}$ also follows the uniform distribution on $\mathbb{S}^{n-1}$. Now we obtain that any random vector with each element following a zero-mean Gaussian distribution with some constant variance follows the uniform distribution on $\mathbb{S}^{n-1}$.

Then we show that the minimum hyperspherical energy asymptotically corresponds to the uniform distribution over the unit hypersphere. We first write down the hyperspherical energy of $N$ neurons $\{\bm{w}_1,\cdots,\bm{w}_N\in\mathbb{R}^{d+1}\}$ (we also define that $\hat{\bm{w}}_i=\frac{\bm{w}_i}{\|\bm{w}_i\|}\in\mathbb{S}^d$):
\begin{equation}
    \bm{E}_{s,d}(\hat{\bm{w}}_i|_{i=1}^N) = \sum_{i=1}^{N}\sum_{j=1,j\neq i}^{N} f_s\big(\norm{\hat{\bm{w}}_i-\hat{\bm{w}}_j}\big)=\left\{
{\begin{array}{*{20}{l}}
{\sum_{i\neq j} \norm{\hat{\bm{w}}_i-\hat{\bm{w}}_j}^{-s},\ \ s>0}\\
{\sum_{i\neq j} \log\big(\norm{\hat{\bm{w}}_i-\hat{\bm{w}}_j}^{-1}\big),\ \ s=0}
\end{array}} \right.
\end{equation}
where $s$ is a hyperparameter that controls the behavior of hyperspherical energy. We then define a $N$-point minimal hyperspherical $s$-energy over $\bm{A}$ with
\begin{equation}\label{mini_s_energy}
    \bm{\varepsilon}_{s,d}(\bm{A},\hat{\bm{W}}_N)  :=\inf_{\hat{\bm{W}}_N\subset \bm{A}}\bm{E}_{s,d}(\hat{\bm{w}}_i|_{i=1}^N)
\end{equation}
where we denote that $\hat{\bm{W}}_N=\{\hat{\bm{w}}_1,\cdots,\hat{\bm{w}}_N\}$. Typically, we will assume that $\bm{A}$ is compact. Based on \cite{hardin2005minimal}, we discuss the asymptotic behavior (as $N\rightarrow \infty$) of $\bm{\varepsilon}_{s,d}(\bm{A},\hat{\bm{W}}_N)$ in three different scenarios: (1) $0<s<d$; (2) $s=d$; and $s>d$. The reason behind is the behavior of the following energy integral:
\begin{equation}\label{energy_int}
I_s(\mu)=\iint_{\mathbb{S}^d\times\mathbb{S}^d}\|\bm{u}-\bm{v}\|^{-s}d\mu(\bm{u})d\mu(\bm{v}),
\end{equation}
is quite different under these three scenarios. In scenario (1), Eq.~\ref{energy_int} that is taken over all probability measures $\mu$ supported on $\mathcal{S}^d$ will be minimal for normalized Lebesgue measure $\frac{\mathcal{H}_d(\cdot)|_{\mathbb{S}^d}}{\mathcal{H}_d(\mathbb{S}^d)}$ on $\mathbb{S}^d$. In the case of $s\geq d$, we will have that $I_s(\mu)$ is positive infinity for all such measures $\mu$. Therefore, the behaviour of the minimum hyperspherical energy is different in these three cases. In general, as the parameter $s$ increases, there is a transition from the global effects to the more local influences (from nearest neighbors). The transition happens when $s=d$. However, we typically have $0<s<d$ in the neural networks. Therefore, we will mostly study the case of $0<s<d$ and the theoretical asymptotic behavior is quite standard results from the potential theory~\cite{kazarinoff1975foundations}. From the classic potential theory, we have the following known lemma:

\vspace{2mm}
\begin{lemma}
If $0<s<d$, we have that
\begin{equation}
    \lim_{N\rightarrow\infty}\frac{\bm{\varepsilon}_{s,d}(\bm{S}^d,\hat{\bm{W}}_N) }{N^2}=I_s\bigg(\frac{\mathcal{H}_d(\cdot)|_{\mathbb{S}^d}}{\mathcal{H}_d(\mathbb{S}^d)}\bigg)
\end{equation}
Moreover, any sequence of $s$-energy configuration of minimal hyperspherical energy ($(\hat{\bm{W}}_N^*)_2^\infty\subset\mathbb{S}^d$) is asymptotically uniformly distributed in the sense that for the weak-star topology of measures,
\begin{equation}\label{mhe_uniform}
    \frac{1}{N}\sum_{\bm{v}\in\hat{\bm{W}}_N^*}\delta_{\bm{v}}\rightarrow \frac{\mathcal{H}_d(\cdot)|_{\mathbb{S}^d}}{\mathcal{H}_d(\mathbb{S}^d)}~~~~\textnormal{as}~N\rightarrow\infty
\end{equation}
where $\delta_{\bm{v}}$ denotes the unit point mass at $\bm{v}$.
\end{lemma}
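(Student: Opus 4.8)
The statement is the classical potential-theoretic asymptotics for minimal Riesz $s$-energy on a compact set, specialized to $\mathbb{S}^d$, in the regime $0<s<d$; the intention is to reduce it to standard facts rather than reprove the foundational theorems. Write $k_s(\bm u,\bm v)=\|\bm u-\bm v\|^{-s}$ for the Riesz $s$-kernel, $\sigma_d=\mathcal{H}_d(\cdot)|_{\mathbb{S}^d}/\mathcal{H}_d(\mathbb{S}^d)$ for the normalized surface measure, and $W_s:=\inf_\mu I_s(\mu)$ for the Wiener energy, the infimum taken over Borel probability measures $\mu$ on $\mathbb{S}^d$. The one place the hypothesis $s<d$ enters is the finiteness $I_s(\sigma_d)<\infty$: a spherical cap of angular radius $\theta$ has $\sigma_d$-measure of order $\theta^d$ and $\|\bm u-\bm v\|\asymp\theta$ there, so $\int_{\mathbb{S}^d}\|\bm u-\bm v\|^{-s}\,d\sigma_d(\bm v)\asymp\int_0^\pi\theta^{d-1-s}\,d\theta<\infty$ exactly when $s<d$. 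I would also record up front, citing the classical literature (Landkof; Björck; Hardin--Saff), the two soft inputs the argument rests on: for $0<s<d$ the kernel $k_s$ is strictly positive definite on $\mathbb{S}^d$ modulo constants, so the equilibrium measure is unique, and by rotational invariance of both $\mathbb{S}^d$ and $k_s$ that unique equilibrium measure is $\sigma_d$; and $\nu\mapsto I_s(\nu)$ is weak-star lower semicontinuous (immediate from $k_s$ being lower semicontinuous and nonnegative, via truncation/monotone convergence).

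\textbf{Lower bound.} First I would show $\liminf_{N\to\infty}\bm\varepsilon_{s,d}(\mathbb{S}^d,\hat{\bm W}_N)/N^2\ge W_s$. Let $\hat{\bm W}_N^*=\{\hat{\bm w}_1,\dots,\hat{\bm w}_N\}$ attain the minimum (it does, since the finite sum is lower semicontinuous and blows up near coincidences), and set $\nu_N^*=\tfrac1N\sum_{i}\delta_{\hat{\bm w}_i}$. By compactness of $\mathbb{S}^d$, pass to a subsequence along which $\nu_N^*\to\nu$ weak-star. To handle the diagonal singularity, use the truncated kernel $k_s^M=\min(k_s,M)$, which is continuous and bounded: then $\tfrac1{N^2}\sum_{i\ne j}k_s(\hat{\bm w}_i,\hat{\bm w}_j)\ge\tfrac1{N^2}\sum_{i\ne j}k_s^M(\hat{\bm w}_i,\hat{\bm w}_j)=\iint k_s^M\,d\nu_N^*\,d\nu_N^* - M/N$. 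Letting $N\to\infty$, continuity of $k_s^M$ gives $\liminf_N\bm\varepsilon_{s,d}/N^2\ge\iint k_s^M\,d\nu\,d\nu$, and then monotone convergence as $M\to\infty$ yields $\ge I_s(\nu)\ge W_s$.

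\textbf{Upper bound and conclusion.} For the matching upper bound I would use the first-moment method: sample $\hat{\bm w}_1,\dots,\hat{\bm w}_N$ i.i.d.\ from $\sigma_d$; then $\mathbb{E}\big[\sum_{i\ne j}k_s(\hat{\bm w}_i,\hat{\bm w}_j)\big]=N(N-1)\,I_s(\sigma_d)$, which is finite by the cap estimate above, so some realization has energy $\le N(N-1)I_s(\sigma_d)$, whence $\limsup_N\bm\varepsilon_{s,d}(\mathbb{S}^d,\hat{\bm W}_N)/N^2\le I_s(\sigma_d)$. Since trivially $W_s\le I_s(\sigma_d)$, the two bounds combine to give $\lim_N\bm\varepsilon_{s,d}(\mathbb{S}^d,\hat{\bm W}_N)/N^2=W_s=I_s(\sigma_d)=I_s\!\big(\mathcal{H}_d(\cdot)|_{\mathbb{S}^d}/\mathcal{H}_d(\mathbb{S}^d)\big)$, proving the first display. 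For the asymptotic uniformity in the second display, argue by contradiction: if $\nu_N^*\not\to\sigma_d$ weak-star, extract a subsequence with $\nu_N^*\to\nu\ne\sigma_d$; the lower-bound computation then forces $I_s(\nu)\le\lim_N\bm\varepsilon_{s,d}/N^2=W_s$, so $\nu$ is also an equilibrium measure, contradicting its uniqueness (strict positive definiteness of $k_s$ on $\mathbb{S}^d$ gives $\iint k_s\,d(\nu-\sigma_d)\,d(\nu-\sigma_d)>0$ for $\nu\ne\sigma_d$, since $\nu-\sigma_d$ has total mass zero). Hence $\tfrac1N\sum_{\bm v\in\hat{\bm W}_N^*}\delta_{\bm v}\to\sigma_d$ weak-star.

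\textbf{Main obstacle.} The genuinely nonroutine ingredients are the two classical kernel facts on which the argument hinges---finiteness of $I_s(\sigma_d)$ when $s<d$ (a local integrability computation, easy) and, more substantially, strict positive definiteness of the Riesz $s$-kernel on $\mathbb{S}^d$, equivalently nonnegativity of its ultraspherical (Gegenbauer) expansion coefficients with no spurious zeros---which I would invoke from the literature rather than reestablish. The remaining care is bookkeeping: ensuring the truncation error $M/N$ is uniformly negligible and that the weak-star limit interchanges cleanly with the double integral of the bounded continuous kernel $k_s^M$; everything else (weak-star compactness of probability measures on a compact set, lower semicontinuity of the energy, monotone convergence, the first-moment argument) is standard.
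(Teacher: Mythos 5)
Your proposal is correct in substance, but it is worth noting that the paper does not prove this lemma at all: it is quoted verbatim as a ``known lemma'' from classical potential theory, with the burden discharged by citation to Hardin--Saff and the potential-theory literature. What you have written is essentially the standard proof of that classical result, and it holds together: the lower bound via empirical measures of minimizers, weak-star compactness, the truncated kernel $k_s^M=\min(k_s,M)$ with the $M/N$ diagonal correction, and monotone convergence; the upper bound via the first-moment method with i.i.d.\ samples from $\sigma_d$ (which is exactly where $s<d$ enters, through $I_s(\sigma_d)<\infty$); and asymptotic uniformity via the subsequence/contradiction argument resting on strict (conditional) positive definiteness of the Riesz kernel on $\mathbb{S}^d$ and rotational invariance, which together identify $\sigma_d$ as the unique equilibrium measure. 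One small presentational caution: the sentence ``since trivially $W_s\le I_s(\sigma_d)$, the two bounds combine\ldots'' does not by itself close the argument --- the pinching of $\liminf\ge W_s$ against $\limsup\le I_s(\sigma_d)$ requires the equality $W_s=I_s(\sigma_d)$, i.e.\ the identification of $\sigma_d$ as the equilibrium measure, which you do record up front as a cited input; you should lean on that explicitly at the point of conclusion rather than on the trivial inequality. Compared with the paper's treatment, your route buys a self-contained argument (modulo the two classical kernel facts you flag), at the cost of reproving material the paper deliberately outsources; either is acceptable, but if this were to be spliced into the appendix it would make the paper's Theorem~\ref{unifrom} proof more self-contained than the authors chose to make it.
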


The lemma above concludes that the neuron configuration with minimal hyperspherical energy asymptotically corresponds to the uniform distribution on $\mathbb{S}^d$ when $0<s<d$. From \cite{hardin2005minimal}, we also have the following lemma that shows the same conclusion holds for the the case of $s=d$ and $s>d$:

\vspace{2mm}
\begin{lemma}
Let $\mathcal{B}^d:=\bar{B}(0,1)$ denote the closed unit ball in $\mathbb{R}^d$. For the case of $s=d$, we have that
\begin{equation}
    \lim_{N\rightarrow\infty}\frac{\bm{\varepsilon}_{s,d}(\bm{S}^d,\hat{\bm{W}}_N) }{N^2\log N}= \frac{\mathcal{H}_d(\mathcal{B}^d)}{\mathcal{H}_d(\mathbb{S}^d)}=\frac{1}{d}\frac{\Gamma(\frac{d+1}{2})}{\sqrt{\pi}\Gamma(\frac{d}{2})}
\end{equation}
and any sequence $(\hat{\bm{W}}_N^*)\subset\mathbb{S}^d$ of minimal $s$-energy configurations satisfies Eq.~\ref{mhe_uniform}.
\end{lemma}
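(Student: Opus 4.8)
This is the critical exponent $s=d$, where the energy integral $I_s(\mu)$ of Eq.~\ref{energy_int} equals $+\infty$ for \emph{every} probability measure $\mu$ on $\mathbb{S}^d$; consequently the leading term of the minimal energy is of order $N^2\log N$ rather than $N^2$, and its constant is governed by short-range (local) interactions rather than by a global potential-theoretic functional. The plan is to establish matching upper and lower bounds $\bm{\varepsilon}_{d,d}(\mathbb{S}^d,\hat{\bm{W}}_N)=\frac{\mathcal{H}_d(\mathcal{B}^d)}{\mathcal{H}_d(\mathbb{S}^d)}\,N^2\log N\,(1+o(1))$ -- the specialization to $A=\mathbb{S}^d$ of the general theorem of \cite{hardin2005minimal} for $d$-rectifiable sets -- and then to deduce equidistribution. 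For the upper bound I would use that $\mathbb{S}^d$ is a smooth compact $d$-manifold to partition it, for each $N$, into Borel cells $D_1,\dots,D_N$ with $\mathcal{H}_d(D_i)=\mathcal{H}_d(\mathbb{S}^d)/N$, uniformly bounded eccentricity and $\operatorname{diam}(D_i)=O(N^{-1/d})$, and pick $\hat{\bm{w}}_i\in D_i$; the resulting configuration has pairwise separation $\gtrsim N^{-1/d}$, so pairs at distance $\lesssim N^{-1/d}$ contribute only $O(N^2)=o(N^2\log N)$. For the remaining pairs, the spherical-cap expansion $\mathcal{H}_d(\{y\in\mathbb{S}^d:\norm{\hat{\bm{w}}_i-y}\le t\})=\mathcal{H}_d(\mathcal{B}^d)\,t^{d}(1+O(t^2))$ lets me compare $\sum_{j\neq i}\norm{\hat{\bm{w}}_i-\hat{\bm{w}}_j}^{-d}$ with $\frac{N}{\mathcal{H}_d(\mathbb{S}^d)}\int_{\asymp N^{-1/d}}^{\asymp 1} t^{-d}\,d(\mathcal{H}_d(\mathcal{B}^d)t^{d})=\frac{\mathcal{H}_d(\mathcal{B}^d)}{\mathcal{H}_d(\mathbb{S}^d)}\,N\log N\,(1+o(1))$, and summing over $i$ gives the upper bound.

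For the lower bound -- which I expect to be the hard part -- I would cover $\mathbb{S}^d$ by finitely many charts that are $(1+\epsilon)$-bi-Lipschitz onto subsets of $\mathbb{R}^d$. Pairs of points lying in a common chart then contribute at least $(1+\epsilon)^{-d}$ times the Euclidean Riesz-$d$ energy of their images, so it suffices to show that any $m$ points in a bounded domain $\Omega\subset\mathbb{R}^d$ have Euclidean $d$-energy at least $\frac{\mathcal{H}_d(\mathcal{B}^d)}{\mathcal{H}_d(\Omega)}\,m^2\log m\,(1-o(1))$, uniformly over configurations. This Euclidean estimate I would obtain by a truncated-kernel, dyadic-scale argument: for $K_r(x,y)=\min(\norm{x-y}^{-d},r^{-d})$ one shows $\sum_{i\neq j}K_r(x_i,x_j)\ge \frac{\mathcal{H}_d(\mathcal{B}^d)}{\mathcal{H}_d(\Omega)}\,m^2\,(\log(1/r)-O(1))$ whenever $m^{-1/d}\lesssim r\lesssim 1$ -- clustering only increases the left side, while in the ``spread-out'' regime a volume-counting comparison with $\mathcal{H}_d|_{\Omega}/\mathcal{H}_d(\Omega)$ supplies the constant -- then take $r\asymp m^{-1/d}$ and patch the charts, the cross-chart long-range interactions and the $O(\epsilon)$ distortions washing out as the cover is refined and $\epsilon\downarrow 0$. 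The main obstacle is exactly this \textbf{uniform lower bound with the sharp constant}: in contrast to the regime $0<s<d$, it cannot be read off from (conditional) positive-definiteness of the Riesz kernel because the governing energy integral diverges, so arbitrary -- possibly heavily clustered -- configurations must be controlled directly.

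Finally, given the sharp two-sided asymptotics, Eq.~\ref{mhe_uniform} follows by a standard balancing argument: applying the upper and lower bounds to any fixed measurable $B\subset\mathbb{S}^d$ and its complement, and using the near-optimal construction to build competitors that reallocate points between $B$ and $\mathbb{S}^d\setminus B$, minimality of $\hat{\bm{W}}_N^*$ forces the fraction of its points lying in $B$ to converge to $\mathcal{H}_d(B)/\mathcal{H}_d(\mathbb{S}^d)$; since $B$ is arbitrary, $\frac1N\sum_{\bm{v}\in\hat{\bm{W}}_N^*}\delta_{\bm{v}}$ converges weak-$*$ to the normalized surface measure. The closed-form constant $\frac{\mathcal{H}_d(\mathcal{B}^d)}{\mathcal{H}_d(\mathbb{S}^d)}=\frac{1}{d}\frac{\Gamma(\frac{d+1}{2})}{\sqrt{\pi}\,\Gamma(\frac{d}{2})}$ is then a one-line computation from $\mathcal{H}_d(\mathcal{B}^d)=\pi^{d/2}/\Gamma(\tfrac d2+1)$, $\mathcal{H}_d(\mathbb{S}^d)=2\pi^{(d+1)/2}/\Gamma(\tfrac{d+1}{2})$ and $\Gamma(\tfrac d2+1)=\tfrac d2\,\Gamma(\tfrac d2)$.
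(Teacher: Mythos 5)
You should first note what the paper actually does here: it does not prove this lemma at all. The statement is imported verbatim as a known result of the potential-theoretic literature, cited from \cite{hardin2005minimal}, and the surrounding appendix only uses its conclusion (that minimal $s$-energy configurations equidistribute when $s=d$). So your proposal is not an alternative to the paper's argument but an attempt to reprove the cited external theorem (the sphere case of Kuijlaars--Saff/Hardin--Saff). At the level of strategy your outline does track how that theorem is actually proved: matching upper and lower bounds of order $N^2\log N$, an equal-area-partition construction for the upper bound, bi-Lipschitz chart reduction to a Euclidean estimate, and equidistribution by a reallocation argument. The closed-form constant computation at the end is also correct.

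The genuine gap is the step you yourself flag as the main obstacle: the uniform Euclidean lower bound with the sharp constant $\mathcal{H}_d(\mathcal{B}^d)/\mathcal{H}_d(\Omega)$ at the critical exponent $s=d$. Your proposed justification -- that for the truncated kernel $K_r$ ``clustering only increases the left side, while in the spread-out regime a volume-counting comparison with $\mathcal{H}_d|_\Omega/\mathcal{H}_d(\Omega)$ supplies the constant'' -- is not a proof but a restatement of the claim: the whole difficulty is that arbitrary configurations are neither fully clustered nor spread out, and one cannot invoke positive-definiteness of the (truncated) Riesz kernel to pass from the empirical measure to the uniform one, precisely because the governing energy integral diverges at $s=d$. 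In \cite{hardin2005minimal} this is where the real machinery lives (subadditivity over cube partitions, the self-similar cube case, and a separate treatment of the logarithmic scaling at $s=d$), and none of that is reconstructed in your sketch. A second, smaller gap sits in the final balancing argument: to force the point fraction in an arbitrary Borel set $B\subset\mathbb{S}^d$ to converge to $\mathcal{H}_d(B)/\mathcal{H}_d(\mathbb{S}^d)$ you need the same sharp asymptotics (with constant $\mathcal{H}_d(\mathcal{B}^d)/\mathcal{H}_d(B)$) for energies restricted to subsets $B$, not just for the full sphere; this is exactly the rectifiable-set generality of the cited theorem, and your sketch assumes it implicitly without establishing it. As a roadmap your proposal is faithful to the literature, but as a proof it leaves the theorem's central quantitative content asserted rather than proved, whereas the paper simply (and legitimately) cites it.
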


The lemma above shows that the same conclusion holds for $s=d$. For the case of $s>d$, the theoretical analysis is more involved, but the conclusion that the neuron configuration with minimal hyperspherical energy asymptotically corresponds to the uniform distribution on $\mathbb{S}^d$ still holds. Note that, we usually will not have the case of $s>d$ in our applications.

\newpage
\section{Proof of Theorem~\ref{sopt_thm}}
We consider a set of $n$ $d$-dimensional neurons $\bm{W}=\{\bm{w}_1,\cdots,\bm{w}_n\}\in\mathbb{R}^{d\times n}$. The hyperspherical energy of the original set of neurons can be written as:
\begin{equation}
    \bm{E}(\bm{w}_i|_{i=1}^n) = \sum_{i=1}^{n}\sum_{j=1,j\neq i}^{n} \norm{\frac{\bm{w}_i}{\|\bm{w}_i\|_2}-\frac{\bm{w}_j}{\|\bm{w}_j\|}}^{-1}
\end{equation}
which means that if the pairwise angle between any two neurons stays unchanged, then the hyperspherical energy will also stay unchanged. Now we consider the cosine value of the angle $\theta_{(\bm{w}_i,\bm{w}_j)}$ between any two neuron $\bm{w}_i$ and $\bm{w}_j$:
\begin{equation}
    \cos(\theta_{(\bm{w}_i,\bm{w}_j)})=\frac{\bm{w}_i^\top\bm{w}_j}{\|\bm{w}_i\|\cdot\|\bm{w}_j\|}=\frac{\sum_{k=1}^d w_{ik}\cdot w_{jk}}{\|\bm{w}_i\|\cdot\|\bm{w}_j\|}
\end{equation}
where $w_{ik}$ is the $k$-th element of the neuron $\bm{w}_{i}$. From the equation above, we can observe that permuting the order of the elements in the neurons together will not change the angle. For example, switching the $i$-th and $j$-th element in all the neurons will not change the hyperspherical energy. Assume that we randomly select $p$ dimensions from the $d$ dimensions and denote the set of $p$ dimension as $\bm{s}=\{s_1,\cdots, s_p\}\in\mathbb{R}^p$. Therefore we can construct a new set of neurons $\tilde{\bm{W}}=\{\tilde{\bm{w}}_1,\cdots,\tilde{\bm{w}}_n\}$ by permuting the $p$ dimensions in $\bm{s}$ to become the first $p$ elements for all the neurons. Essentially, we use permutation to make $\tilde{\bm{w}}_i=\bm{w}_{s_i}$ for $i\in[1,p]$. Therefore, we can have the following equation:
\begin{equation}
    \bm{E}(\bm{w}_i|_{i=1}^n) =\bm{E}(\tilde{\bm{w}}_i|_{i=1}^n) 
\end{equation}
Then we consider an orthogonal matrix $\bm{R}_p\in\mathbb{R}^{d\times d}$ that is used to transform the $p$ dimension in the neurons. The equivalent orthogonal transformation for the $d$-dimensional neurons $\tilde{\bm{W}}$ is
\begin{equation}
    \tilde{\bm{R}}=\begin{bmatrix}
    \bm{R}_p & \bm{0}\\
    \bm{0} & \bm{I}_{n-p}
    \end{bmatrix}=\begin{bmatrix}
    \bm{R}_p & 0 & \cdots & 0\\
    0 & 1 &\ddots &0\\
    \vdots& \ddots &\ddots&0 \\
    0 & \cdots & 0& 1\\
    \end{bmatrix}
\end{equation}
where $\bm{I}_{n-p}$ is an identity matrix of size $(n-p)\times (n-p)$. It is easy to verify that $\tilde{\bm{R}}$ is also an orthogonal matrix: $\tilde{\bm{R}}^\top\tilde{\bm{R}}=\bm{I}_n$. Then we permute the order of $\tilde{\bm{W}}$ back to the original neuron set $\bm{W}$ and obtain a new set of neurons $\bm{W}^t=\{\bm{w}_1^t,\cdots,\bm{w}_n^t\}$. $\bm{W}^t$ is in fact the result of directly performing orthogonal transformation to the $p$ dimensions in $\bm{W}$. Because any order permutation of elements in neurons does not change the hyperspherical energy, we have the following equation
\begin{equation}
    \bm{E}(\bm{w}_i|_{i=1}^n) =\bm{E}(\tilde{\bm{w}}_i|_{i=1}^n) =\bm{E}(\tilde{\bm{R}}\tilde{\bm{w}}_i|_{i=1}^n) =\bm{E}(\tilde{\bm{R}}\bm{w}^t_i|_{i=1}^n)
\end{equation}
which concludes our proof.

\newpage
\section{Experimental Settings}\label{exp_set}

\begin{table*}[h]
    \renewcommand{\captionlabelfont}{\footnotesize}
    \newcommand{\tabincell}[2]{\begin{tabular}{@{}#1@{}}#2\end{tabular}}
    \centering
    \setlength{\abovecaptionskip}{4pt}
    \setlength{\belowcaptionskip}{-5pt}
    \scriptsize
    \begin{tabular}{|c|c|c|c|c|c|c|}
        \hline
        Layer & CNN-6 (CIFAR-100) & CNN-9 (CIFAR-100) & CNN-10 (ImageNet-2012) \\
        \hline
        Conv1.x  & [3$\times$3, 64]$\times$2 & [3$\times$3, 64]$\times$3  & \tabincell{c}{[7$\times$7, 64], Stride 2\\3$\times$3, Max Pooling, Stride 2\\{[3$\times$3, 64]$\times$3} }\\\hline
        Pool1&\multicolumn{3}{c|}{2$\times$2 Max Pooling, Stride 2}\\\hline
        Conv2.x  & [3$\times$3, 64]$\times$2 & [3$\times$3, 64]$\times$3 & [3$\times$3, 128]$\times$3\\\hline
        Pool2 & \multicolumn{3}{c|}{2$\times$2 Max Pooling, Stride 2}\\\hline
        Conv3.x & [3$\times$3, 64]$\times$2 & [3$\times$3, 64]$\times$3 & [3$\times$3, 256]$\times$3\\\hline
        Pool3 & \multicolumn{3}{c|}{2$\times$2 Max Pooling, Stride 2}\\\hline
        Fully Connected & 64 & 64 & 256\\\hline
    \end{tabular}
    \caption{\footnotesize Our plain CNN architectures with different convolutional layers. Conv1.x, Conv2.x and Conv3.x denote convolution units that may contain multiple convolution layers. E.g., [3$\times$3, 64]$\times$3 denotes 3 cascaded convolution layers with 64 filters of size 3$\times$3.}\label{netarch}
\end{table*}

\begin{table*}[h]
    \renewcommand{\captionlabelfont}{\footnotesize}
    \newcommand{\tabincell}[2]{\begin{tabular}{@{}#1@{}}#2\end{tabular}}
    \centering
    \setlength{\abovecaptionskip}{4pt}
    \setlength{\belowcaptionskip}{5pt}
    \scriptsize
    \begin{tabular}{|c|c|c|c|c|}
        \hline
        Layer & ResNet-20 (CIFAR-100) & ResNet-32 (CIFAR-100) \\
        \hline
        Conv1.x & \tabincell{c}{[3$\times$3, 16]$\times$1\\$\left[\begin{aligned} &3\times 3, 16\\&3\times3, 16\end{aligned}\right]\times 3$} & \tabincell{c}{[3$\times$3, 16]$\times$1\\$\left[\begin{aligned} &3\times 3, 16\\&3\times3, 16\end{aligned}\right]\times 5$} \\ \hline
        Conv2.x  & \tabincell{c}{$\left[\begin{aligned} &3\times3, 32\\&3\times3, 32\end{aligned}\right]\times 3$}  & $\left[\begin{aligned} &3\times 3, 32\\&3\times3, 32\end{aligned}\right]\times 5$ \\\hline
        Conv3.x  & \tabincell{c}{$\left[\begin{aligned} &3\times3, 64\\&3\times3, 64\end{aligned}\right]\times 3$} & $\left[\begin{aligned} &3\times 3, 64\\&3\times3, 64\end{aligned}\right]\times 5$  \\\hline
        & \multicolumn{2}{c|}{Average Pooling}  \\\hline
    \end{tabular}
    \caption{\footnotesize Our ResNet architectures with different convolutional layers. Conv0.x, Conv1.x, Conv2.x, Conv3.x and Conv4.x denote convolution units that may contain multiple convolutional layers, and residual units are shown in double-column brackets. Conv1.x, Conv2.x and Conv3.x usually operate on different size feature maps. These networks are essentially the same as \cite{he2016deep}, but some may have a different number of filters in each layer. The downsampling is performed by convolutions with a stride of 2. E.g., [3$\times$3, 64]$\times$4 denotes 4 cascaded convolution layers with 64 filters of size 3$\times$3, S2 denotes stride 2. }\label{netarch2}
\end{table*}

\textbf{Reported Results}. For all the experiments on MLPs and CNNs (except CNNs in the few-shot learning), we report testing error rates. For the few-shot learning experiment, we report testing accuracy. For all the experiments on both GCNs and PointNets, we report testing accuracy. All results are averaged over 10 runs of the model.

\textbf{Multilayer perceptron}. We conduct digit classification task on MNIST with a three-layer multilayer perceptron following this repository\footnote{\url{https://github.com/hwalsuklee/tensorflow-mnist-MLP-batch_normalization-weight_initializers}} . The input dimension of each MNIST digit is $28\times 28$, which is 784 dimensions after flattened. Our two hidden layers have 256 output dimensions, \ie, 256 neurons. The output layer will output 10 logits for classification. We use a cross-entropy loss with softmax function. For the optimization,  we use a momentum SGD with learning rate 0.01, momentum 0.9 and batch size 100. The training stops at 100 epochs.

\textbf{Convolutional neural networks.} The network architectures used in the paper are elaborated in Table~\ref{netarch} and Table~\ref{netarch2}. For CIFAR-100, we use 128 as the mini-batch size. We use momentum SGD with momentum 0.9 and the learning rate starts with 0.1, divided by 10 when the performance is saturated. For ImageNet-2012, we use batch size 128 and start with learning rate 0.1. The learning rate is divided by 10 when the performance is saturated, and the training is terminated at 700k iterations. For ResNet-20 and ResNet-32 on CIFAR-100, we use exactly the same architecture used on CIFAR-10 as \cite{he2016deep}. The rotation matrix is initialized with random normal distribution (mean is 0 and variance is 1). Note that, for all the compared methods, we always use the best possible hyperparameters to make sure that the comparison is fair. The baseline has exactly the same architecture and training settings as the one that OPT uses. If not otherwise specified, standard $\ell_2$ weight decay ($\thickmuskip=2mu \medmuskip=2mu 5e-4$) is applied to all the neural network including baselines and the networks that use OPT training. 

\par
\textbf{Few-shot learning}. The network architecture (Table~\ref{netarch_fewshot}) we used for few-shot learning experiments is the same as that used in \cite{chen2019closer}. In our experiments, we show comparison of our OPT training with standard training on `baseline' and `baseline++' settings in \cite{chen2019closer}. In `baseline' setting, a standard CNN model is pretrained on the whole meta-train dataset (standard non-MAML supervised training) and later only the classifier layer is finetuned on few-shot dataset. `baseline++' differs from `baseline' on the classifier: in `baseline', each output dimension of the classifier is computed as the inner product between weight $w$ and input $x$, i.e. $w \cdot x$; while in `baseline++' it becomes the scaled cosine distance $c \frac{w \cdot x}{\norm{w} \norm{x}}$ where $c$ is a positive scalar. Following \cite{chen2019closer}, we set $c=2$.

During pretraining, the model is trained for 200 epochs on the meta-train set of mini-ImageNet with an Adam optimzer (learning rate $1e-3$, weight decay $5e-4$) and the classifier is discarded after pretraining. The model is later finetuned, with a new classifier, on the few-shot samples (5 way, support size 5) with a momentum SGD optimizer (learning rate $1e-2$, momentum $0.9$, dampening $0.9$, weight decay $1e-3$, batch size 4) for 100 epochs. We re-initialize the classifier for each few-shot sample. 

\begin{table*}[t]
    \renewcommand{\captionlabelfont}{\footnotesize}
    \newcommand{\tabincell}[2]{\begin{tabular}{@{}#1@{}}#2\end{tabular}}
    \centering
    \setlength{\abovecaptionskip}{4pt}
    \setlength{\belowcaptionskip}{0pt}
    \scriptsize
    \begin{tabular}{|c|c|c|c|c|c|c|}
        \hline
        Layer & CNN-4 \\
        \hline
        Conv1  & 3$\times$3, 64 \\ \hline
        Pool1 & 2$\times$2 Max Pooling, Stride 2\\ \hline
        Conv2  & 3$\times$3, 64 \\ \hline
        Pool2 & 2$\times$2 Max Pooling, Stride 2\\ \hline
        Conv3  & 3$\times$3, 64 \\ \hline
        Pool3 & 2$\times$2 Max Pooling, Stride 2\\ \hline
        Conv4  & 3$\times$3, 64 \\ \hline
        Pool4 & 2$\times$2 Max Pooling, Stride 2\\ \hline
        Linear Classifier & number of classes \\\hline
    \end{tabular}
    \caption{\footnotesize Architecture for few-shot learning. The number of classes is different for pretraining and finetuning.}\label{netarch_fewshot}
\end{table*}

\textbf{Graph neural networks.}
We implement the OPT training for GCN in the official repository\footnote{\url{https://github.com/tkipf/gcn}}. The experimental settings also follow the official repository to ensure a fair comparison. For OPT (CP) method, we use the original hyperparameters and experimental setup except the added rotation matrix. For OPT (OGD) method, we use our own OGD optimizer in Tensorflow to train the rotation matrix in order to maintain orthogonality and use the original optimizer to train the other variables.

Training a GCN with OPT is not that straightforward. Specifically, the forward model of GCN is $\thickmuskip=2mu \medmuskip=2mu\bm{Z}=\textnormal{Softmax}\big(\hat{\bm{A}}\cdot\textnormal{ReLU}(\hat{\bm{A}}\cdot\bm{X}\cdot\bm{W}_0)\cdot\bm{W}_1\big)$ where $\thickmuskip=2mu \medmuskip=2mu \hat{\bm{A}}=\tilde{\bm{D}}^{\frac{1}{2}}\tilde{\bm{A}}\tilde{\bm{D}}^{\frac{1}{2}}$. We note that $\bm{A}$ is the adjacency matrix of the graph,  $\thickmuskip=2mu \medmuskip=2mu \tilde{\bm{A}}=\bm{A}+\bm{I}$ ($\bm{I}$ is an identity matrix), and $\thickmuskip=2mu \medmuskip=2mu \tilde{\bm{D}}=\sum_j\tilde{\bm{A}}_{ij}$. $\bm{X}\in\mathbb{R}^{n\times d}$ is the feature matrix of $n$ nodes in the graph (feature dimension is $d$). $\bm{W}_1$ is the weights of the classifiers. $\bm{W}_1$ is the weights of the classifiers. $\bm{W}_0$ is the weight matrix of size $d\times h$ where $h$ is the dimension of the hidden space. We treat each column vector of $\bm{W}_0$ as a neuron, so there are $h$ neurons in total. Then we apply OPT to train these $h$ neurons of dimension $d$ in GCN. We conduct experiments on Cora and Pubmed datsets~\cite{sen2008collective}. We aim to verify the effectiveness of OPT on GCN instead of achieving state-of-the-art performance on this task.

\begin{table}[t]
    \renewcommand{\captionlabelfont}{\footnotesize}
    \newcommand{\tabincell}[2]{\begin{tabular}{@{}#1@{}}#2\end{tabular}}
    \centering
    \setlength{\abovecaptionskip}{7pt}
    \setlength{\belowcaptionskip}{-5pt}
    \scriptsize
    \begin{tabular}{|c|c|c|}
        \hline
        Layer &  Wide CNN-9 (CIFAR-100) & ResNet-18 (ImageNet-2012) \\
        \hline
        Conv0.x   & N/A & \tabincell{c}{[7$\times$7, 64], Stride 2\\3$\times$3, Max Pooling, Stride 2 } \\\hline
        Conv1.x   & \tabincell{c}{[3$\times$3, 64]$\times$3\\2$\times$2 Max Pooling, Stride 2} & $\left[\begin{aligned}&3\times 3, 64\\&3\times3, 64\end{aligned}\right]\times 2$ \\\hline
        Conv2.x   & \tabincell{c}{[3$\times$3, 128]$\times$3\\2$\times$2 Max Pooling, Stride 2} & $\left[\begin{aligned}&3\times 3, 128\\&3\times3, 128\end{aligned}\right]\times 2$ \\\hline
        Conv3.x  & \tabincell{c}{[3$\times$3, 256]$\times$3\\2$\times$2 Max Pooling, Stride 2} & $\left[\begin{aligned}&3\times 3, 256\\&3\times3, 256\end{aligned}\right]\times 2$\\\hline
        Conv4.x   & N/A & $\left[\begin{aligned}&3\times 3, 512\\&3\times3, 512\end{aligned}\right]\times 2$  \\\hline
        Final & 256-dim Fully Connected  & Average Pooling  \\\hline
    \end{tabular}
    \caption{\footnotesize Our wide CNN-9 and wide ResNet-18 architectures with different convolutional layers.}\label{sopt_arch}
\end{table}

\par
\textbf{Point cloud recognition.} To simplify the comparison and remove all the bells and whistles, we use a vanilla PointNet (without T-Net) as our backbone network. We apply OPT to train the MLPs in PointNet. We follow the same experimental settings as \cite{qi2017pointnet} and evaluate on the ModelNet-40 dataset~\cite{wu20153d}. We exactly follow the same setting in the original paper~\cite{qi2017pointnet} and the official repositories\footnote{\url{https://github.com/charlesq34/pointnet}}. Specifically, we multiply the rotation matrix to the original fixed neurons in all the $\thickmuskip=2mu \medmuskip=2mu 1\times1$ convolution layers and the fully connected layer except the final classifier. All the rotation matrix is initialized with random normal distribution. For the experiments, we use point number 1024, batch size 32 and Adam optimizer with initial learning rate 0.001. The learning rate will decay by 0.7 every 200k iterations, and the training is terminated at 250 epochs.
\par
\textbf{Experimental settings for S-OPT}. For the experiment of S-OPT, the architecture of wide CNN-9 and wide ResNet-18 is given in Table~\ref{sopt_arch}. CNN-6 is the same as the one in Table~\ref{netarch}. We use standard data augmentation for CIFAR-100, following \cite{liu2017deep}. For ImageNet-2012, we use the same data augmentation in \cite{krizhevsky2012imagenet,liu2017deep}. This data augmentation does not contain as many transformation as the one in \cite{he2016deep}, so the final performance may be worse than \cite{he2016deep}. However, all the compared methods use the same data augmentation in our experiments, so the experiment is still a fair comparison. For CIFAR-100, we use $N_{\textnormal{out}}=300$ and $N_{\textnormal{in}}=750$. For ImageNet, we use $N_{\textnormal{out}}=700$ and $N_{\textnormal{in}}=1000$. For S-OPT, we directly use the original OPT for the first layer, as its neuron dimension is typically very small. We decrease the learning rate by a factor of 10 when the performance is saturated in the outer iteration.

\begin{table}[t]
    \renewcommand{\captionlabelfont}{\footnotesize}
    \newcommand{\tabincell}[2]{\begin{tabular}{@{}#1@{}}#2\end{tabular}}
    \centering
    \setlength{\abovecaptionskip}{7pt}
    \setlength{\belowcaptionskip}{-5pt}
    \scriptsize
    \begin{tabular}{|c|c|c|c|c|}
        \hline
        Layer & MNIST Visualization &ResNet-18A & ResNet-18B & CNN-20 for Face Embeddings \\
        \hline
        Conv0.x  & N/A  & \tabincell{c}{[7$\times$7, 64], Stride 2\\3$\times$3, Max Pooling, Stride 2 } & \tabincell{c}{[7$\times$7, 64], Stride 2\\3$\times$3, Max Pooling, Stride 2 } & N/A \\\hline
        Conv1.x  & \tabincell{c}{[3$\times$3, 32]$\times$ 2\\3$\times$3, Max Pooling, Stride 2 } & $\left[\begin{aligned}&3\times 3, 64\\&3\times3, 64\end{aligned}\right]\times 2$ & $\left[\begin{aligned}&3\times 3, 64\\&3\times3, 64\end{aligned}\right]\times 2$ & \tabincell{c}{[3$\times$3, 64]$\times$1, Stride 2\\$\left[\begin{aligned}&3\times3, 64\\&3\times3, 64\end{aligned}\right]\times 1$}\\\hline
        Conv2.x  & \tabincell{c}{[3$\times$3, 64]$\times$ 2\\3$\times$3, Max Pooling, Stride 2 } & $\left[\begin{aligned}&3\times 3, 128\\&3\times3, 128\end{aligned}\right]\times 2$ & $\left[\begin{aligned}&3\times 3, 128\\&3\times3, 128\end{aligned}\right]\times 2$ & \tabincell{c}{[3$\times$3, 128]$\times$1, Stride 2\\$\left[\begin{aligned}&3\times3, 128\\&3\times3, 128\end{aligned}\right]\times 2$} \\\hline
        Conv3.x & \tabincell{c}{[3$\times$3, 128]$\times$ 2\\3$\times$3, Max Pooling, Stride 2 } & $\left[\begin{aligned}&3\times 3, 128\\&3\times3, 128\end{aligned}\right]\times 2$ & $\left[\begin{aligned}&3\times 3, 256\\&3\times3, 256\end{aligned}\right]\times 2$ & \tabincell{c}{[3$\times$3, 256]$\times$1, Stride 2\\$\left[\begin{aligned}&3\times3, 256\\&3\times3, 256\end{aligned}\right]\times 4$}\\\hline
        Conv4.x  & N/A & $\left[\begin{aligned}&3\times 3, 128\\&3\times3, 128\end{aligned}\right]\times 2$ & $\left[\begin{aligned}&3\times 3, 512\\&3\times3, 512\end{aligned}\right]\times 2$  & \tabincell{c}{[3$\times$3, 512]$\times$1, Stride 2\\$\left[\begin{aligned}&3\times3, 512\\&3\times3, 512\end{aligned}\right]\times 1$}\\\hline
        Final & Fully Connected (3-dim) & Average Pooling (128-dim) & Average Pooling (512-dim) & Fully Connected (512-dim) \\\hline
    \end{tabular}
    \caption{\footnotesize Our network architectures for large categorical training.}\label{cls_opt_arch}
\end{table}

\textbf{Experimental settings for large categorical training}. All the network architectures used in the large categorical training are specified in Table~\ref{cls_opt_arch}. For the visualization on MNIST, we simply set the output dimension as 3 and directly plot the 3-dimensional features. In Fig.~\ref{cls_opt_vis}, each color denotes a class of digits, and each dot point denotes 3-dimensional features for a digit image. The experiments on ImageNet follows the same setting as the previous section. For the open-set face recognition experiments, we generally follow the same training configuration as SphereFace~\cite{Liu2017CVPR}. For all the methods used in face recognition, we use the 20-layer residual network as described in Table~\ref{cls_opt_arch}. Since OPT is originally implemented in TensorFlow, we re-implement the CNN-20 for deep face recognition in TensorFlow, which yields an accuracy gap compared to \cite{Liu2017CVPR}. This is due to some mis-match in data augmentation and optimizations. However, since both the baseline and our CLS-OPT use the same network implementation in TensorFlow and achieving state-of-the-art results is not our major focus, it is still a fair and valid comparison. We expect CLS-OPT can also be generally useful for large categorical training of deep face recognition.

\newpage
\section{Loss Landscape Visualization (Normal Distribution Perturbation)}\label{loss_vis_normal}

\subsection{Visualization Procedure}
We generally follow the visualization procedure in \cite{li2018visualizing}. However, since OPT has a different training process, we use a modified visualization method but still make it comparable to the baseline.

Specifically, if we want to plot the loss landscape of a neural neural with loss $\mathcal{L}(\bm{\theta})$ where $\bm{\theta}$ is the learnable model parameters, we need to first choose pretrained model parameters $\bm{\theta}^*$ as a center point. Then we choose two random direction vectors $\bm{\delta}$ and $\bm{\eta}$. The 2D plot $f(\alpha,\beta)$ is defined as
\begin{equation}
    f(\alpha,\beta)=\mathcal{L}(\bm{\theta}^*+\alpha\bm{\delta}+\beta\bm{\eta})
\end{equation}
which can be used as a 2D surface visualization. Note that, after we randomly initialize the direction vectors $\bm{\delta}$ and $\bm{\eta}$ (with normal distribution), we need to perform the filter normalization~\cite{li2018visualizing}. Specifically, we normalize each filter in $\bm{\delta}$ and $\bm{\eta}$ to have the same norm as the corresponding filter in $\bm{\theta}^*$. The loss landscape of our baseline is plotted using this visualization approach.
\par
In contrast, the learnable parameters in OPT are no longer the weights of neurons. Instead, the learnable parameters are the orthogonal matrices. More precisely, the trainable matrices are used to perform orthogonalization in the neural networks (\ie, $\bm{P}$ in Fig.~\ref{ortho}). We denote the combination of all the trainable matrices as $\tilde{\bm{R}}$, and the corresponding pretrained matrices as $\tilde{\bm{R}}^*$. Then the 2D visualization of OPT is 
\begin{equation}
    f(\alpha,\beta)=\mathcal{L}(\tilde{\bm{R}}^*+\alpha\bm{\gamma}+\beta\bm{\kappa})
\end{equation}
where $\bm{\gamma}$ and $\bm{\kappa}$ are two random direction vectors (which follow the normal distribution) to perturb $\tilde{\bm{R}}^*$. The visualization procedures of baseline and OPT are essentially the same except that the trainable variables are different. Therefore, their loss landscapes are comparable.

\subsection{Experimental Details}\label{exp_setting_fig3_normal}
In Fig.~\ref{loss_landscape}, we vary $\alpha$ and $\beta$ from $-1.5$ to $1.5$ for both baseline and OPT, and then plot the surface of 2D function $f$. We use the CNN-6 (as specified in Appendix~\ref{exp_set}) on CIFAR-100. We use the same data augmentation as \cite{liu2018learning}. We train the network with SGD with momentum $0.9$ and batch size $128$. We start with learning rate $0.1$, divide it by 10 at 30k, 50k and 64k iterations, and terminate training at 75k iterations. The training details basically follows \cite{liu2018learning}. We mostly use CP for OPT due to efficiency. Note that, the other orthogonalization methods in OPT yields similar loss landscapes in general. The pretrained model for standard training yields $37.59\%$ testing error on CIFAR-100, while the pretrained model for OPT yields $33.53\%$ error. This is also reported in Section~\ref{exp_results}. 

\subsection{Full Visualization Results for the Main Paper}
\begin{figure}[h]
  \centering
  \renewcommand{\captionlabelfont}{\footnotesize}
  \includegraphics[width=5in]{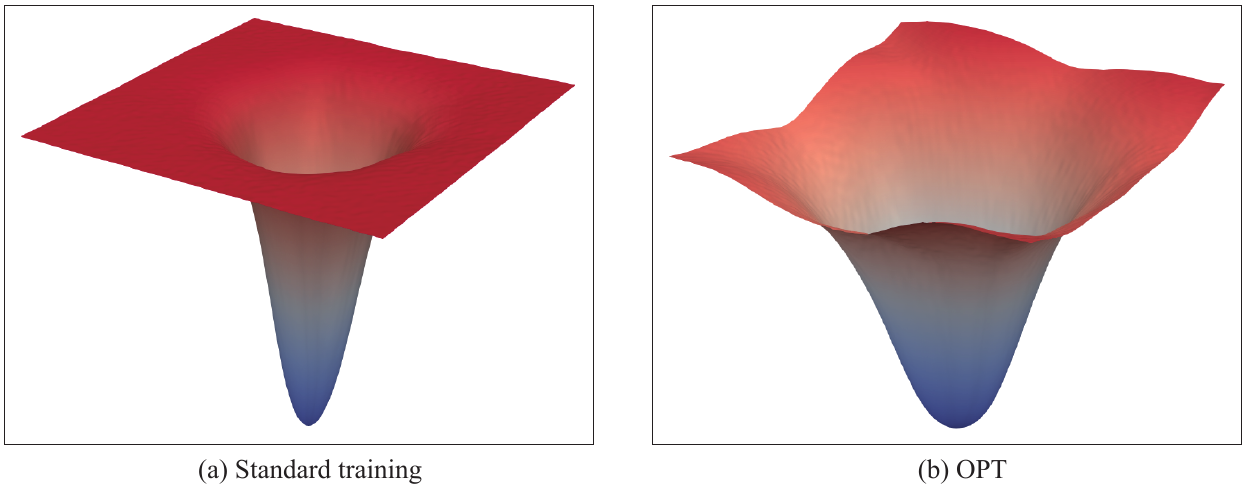}
  \caption{\footnotesize High-quality rendered loss landscapes of standard training and OPT.}\label{hq_loss}
\end{figure}

Following the same experimental settings in Appendix~\ref{exp_setting_fig3_normal}, we render the 3D loss landscapes with some color and lighting effects for Fig.~\ref{hq_loss}. The visualization data is exactly the same as Fig.~\ref{loss_landscape}, and we simply use ParaView to plot the figure. The rendered loss landsacpe better reflects that OPT yields a much more smooth loss geometry.

We also give the large and full version of Fig.~\ref{loss_landscape}(b) (in the main paper) in the following figure. Fig.~\ref{loss_landscape_full_normal} is identical to Fig.~\ref{loss_landscape}(b) in the main paper except that Fig.~\ref{loss_landscape_full_normal} has larger size. From Fig.~\ref{loss_landscape_full_normal}, we can better observe the dramatically different loss landscape between standard training and OPT. From the contour plots, we can better see that the red region of standard training is extremely flat and is highly non-convex around the edge. In comparison to standard training, OPT has very smooth and relatively convex contour shape.

\begin{figure}[h]
  \centering
  \renewcommand{\captionlabelfont}{\footnotesize}
  \includegraphics[width=5in]{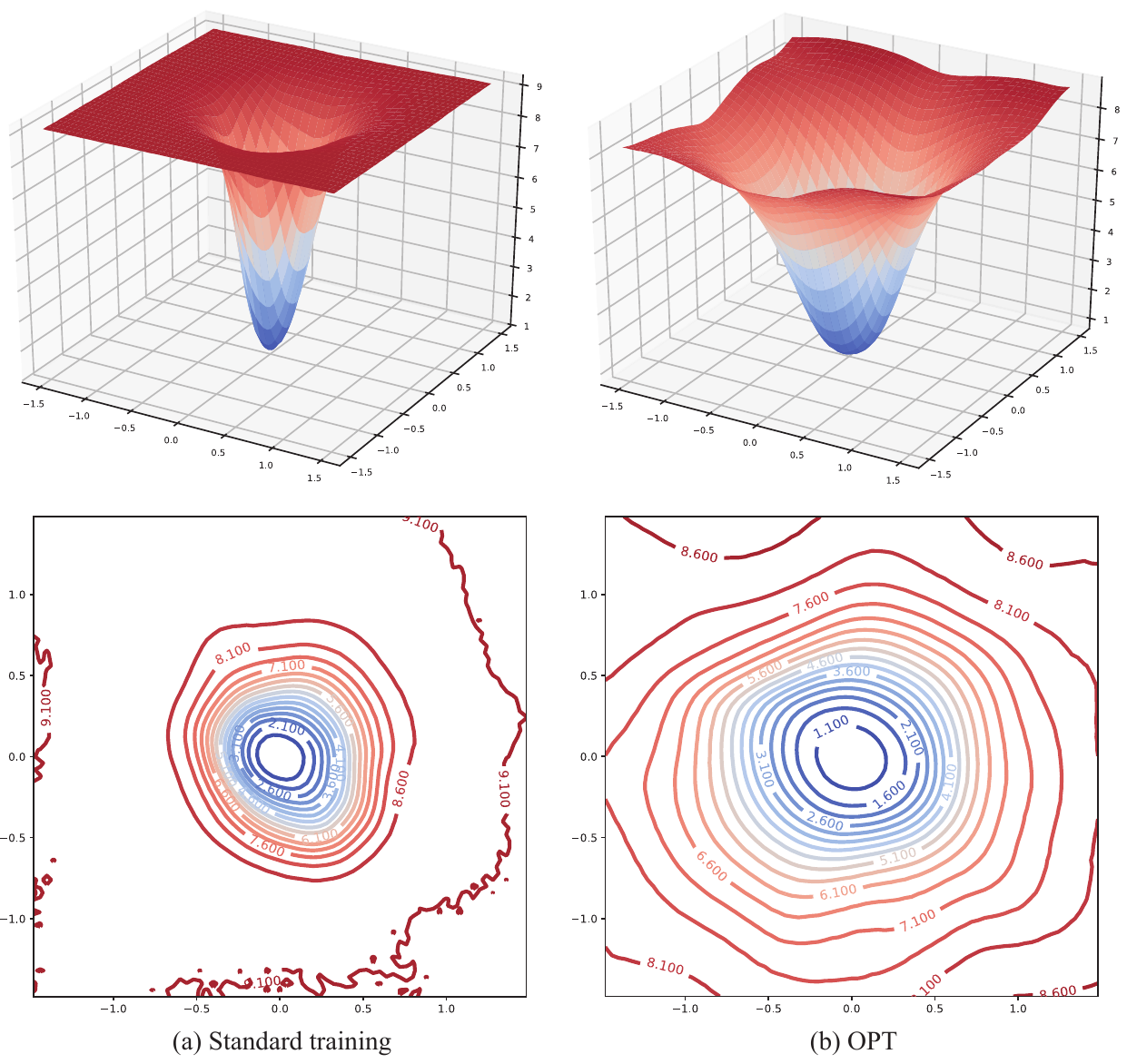}
  \caption{\footnotesize Comparison of loss landscapes between standard training and OPT (full results of Fig.~\ref{loss_landscape}(b) in the main paper). Top row: loss landscape visualization with Cartesian coordinate system; Bottom row: loss contour visualization.}\label{loss_landscape_full_normal}
  \vspace{3mm}
\end{figure}

Then we visualize the testing error landscape. Additionally, we include a high-quality visualization using ParaView. We render the plot with lighting and color effects in order to better demonstrate the difference between OPT and standard training. The visualization results are given in Fig.~\ref{cnn6_test_err_normal}. The comparison of testing error landscape shows that the parameter space of OPT is more robust than standard training, because the testing error of OPT is increased in a slower speed while the model is perturbed away from the pretrained parameters. In other words, the parameter space of OPT is more robust to the random perturbation than standard training. Combining the visualization of loss landscape, it is well justified that OPT can significantly alleviate training difficulty and improve generalization.

\ 

\ 

\ 

\ 

\

\ 

\ 

\ 

\ 

\ 

\ 

\ 

\ 

\

\ 

\ 

\ 

\ 

\begin{figure}[t]
  \centering
  \renewcommand{\captionlabelfont}{\footnotesize}
  \includegraphics[width=5in]{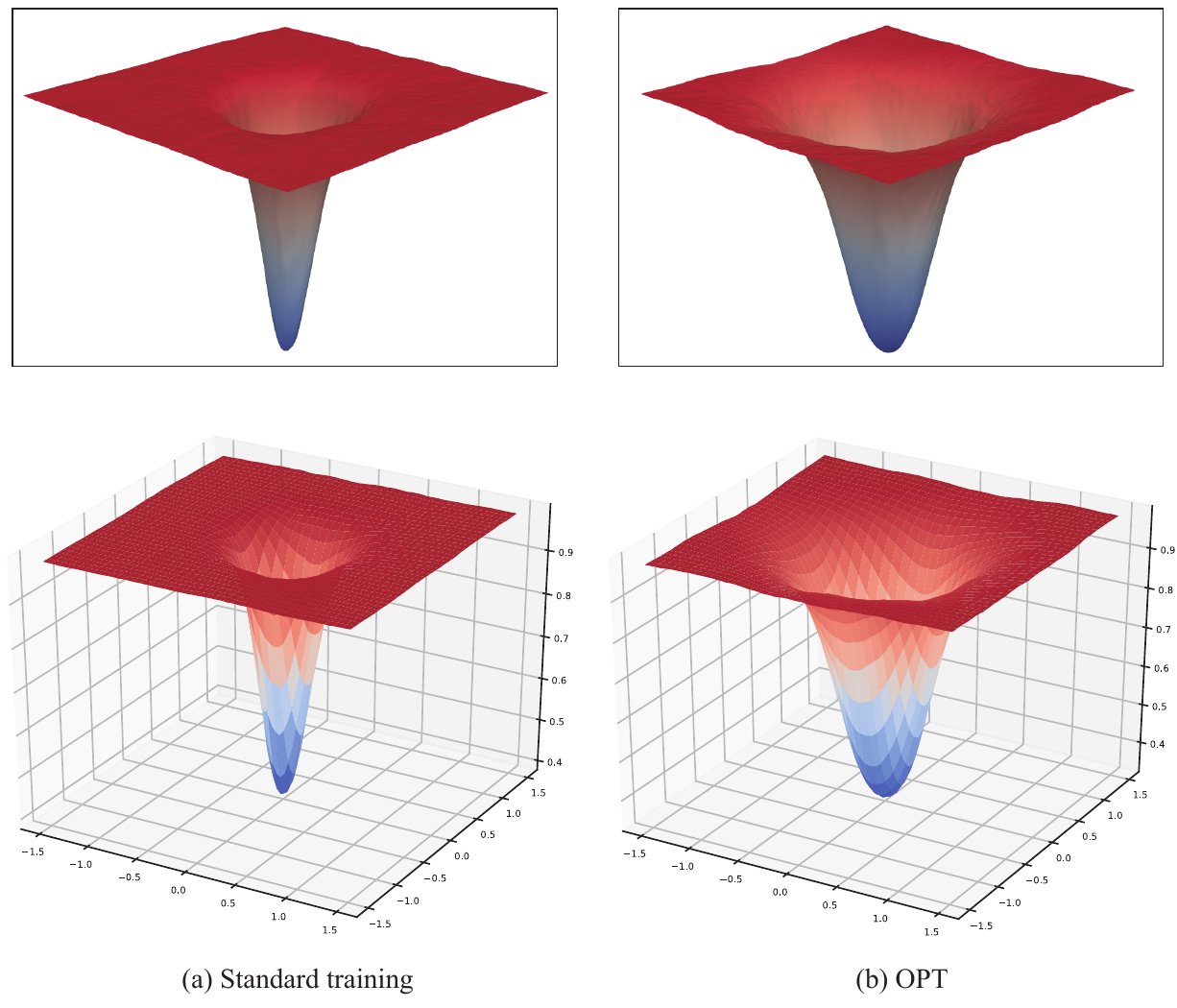}
  \caption{\footnotesize Comparison of testing error landscapes between standard training and OPT. Top row: high-quality rendered testing error landscape visualization with lighting effects; Bottom row: testing error landscape visualization with Cartesian coordinate system.}\label{cnn6_test_err_normal}.
\end{figure}

\newpage

\subsection{Loss Landscape Visualization of Different Neural Network}
In order to show that the loss landscape of OPT is quite general and consistent across different neural network architectures, we also visualize the loss landscape using a deep network network (CNN-9 as specified in Appendix~\ref{exp_set}). The experiments are conducted on CIFAR-100. The results are given in Fig.~\ref{hq_loss_cnn9_normal}. We can see that the loss landscape of OPT is much more smooth than standard training, similar to the observation for CNN-6. Therefore, the loss landscape difference between OPT and standard training is consistent across different network architectures, and OPT consistently shows better optimization landscape.

\begin{figure}[h]
  \centering
  \renewcommand{\captionlabelfont}{\footnotesize}
  \includegraphics[width=5in]{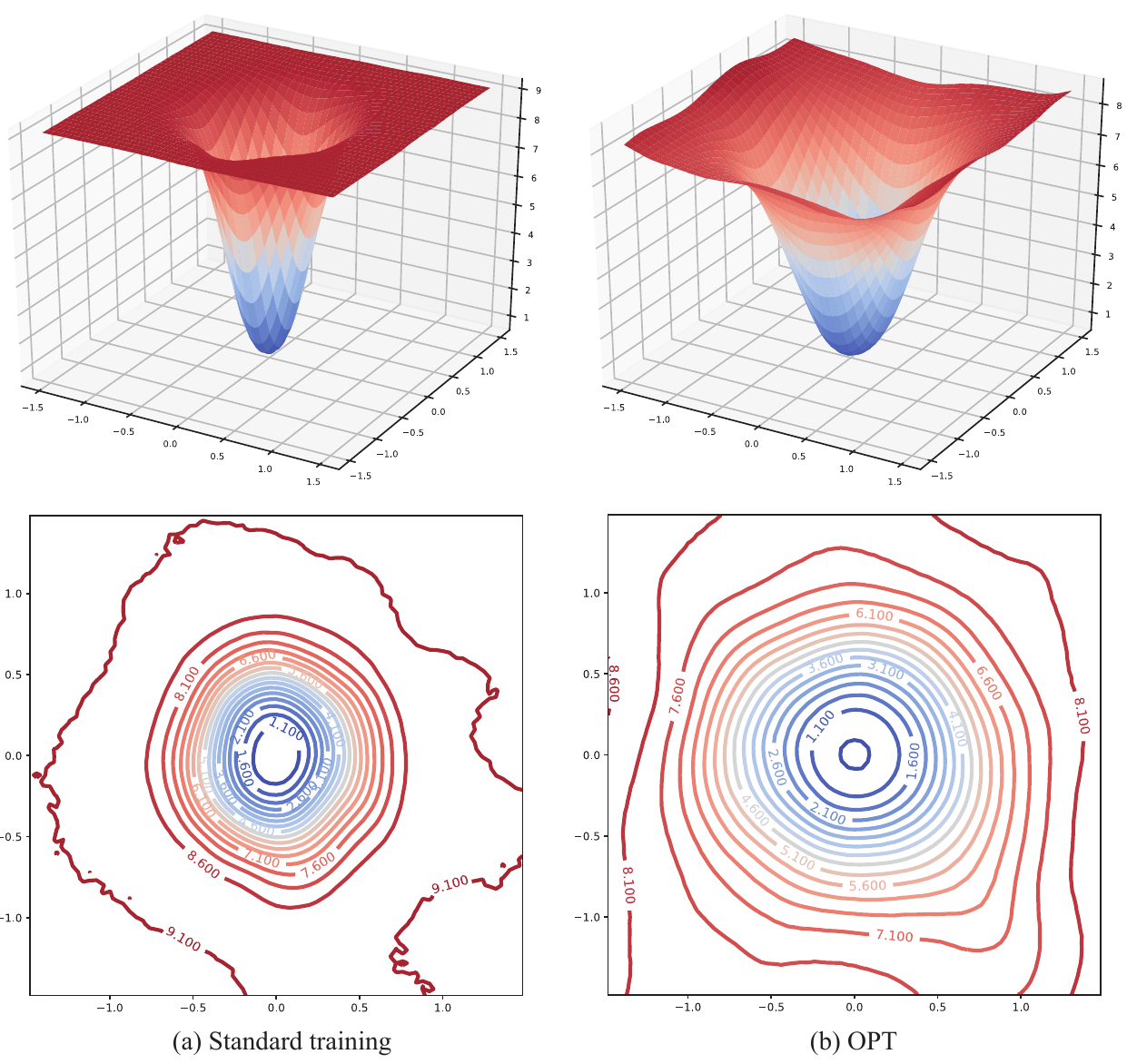}
  \caption{\footnotesize Comparison of loss landscapes between standard training and OPT on CIFAR-100 (CNN-9). Top row: loss landscape visualization with Cartesian coordinate system; Bottom row: loss contour visualization.}\label{hq_loss_cnn9_normal}
  \vspace{4mm}
\end{figure}

Then we show the landscape of testing error for CNN-9 on CIFAR-100. The results are given in Fig.~\ref{cnn9_test_err_normal}. Similar to CNN-6, the testing error landscape of OPT is more smooth and convex than standard training. Moreover, OPT has a more flat local minima of testing error, while standard training has a sharp local minima. The testing error landscape in Fig.~\ref{cnn9_test_err_normal} generally follows the same pattern as the loss landscape in Fig.~\ref{hq_loss_cnn9_normal}. The visualization further verifies the superiority of OPT is very consistent across different network architectures.

\begin{figure}[t]
  \centering
  \renewcommand{\captionlabelfont}{\footnotesize}
  \includegraphics[width=5in]{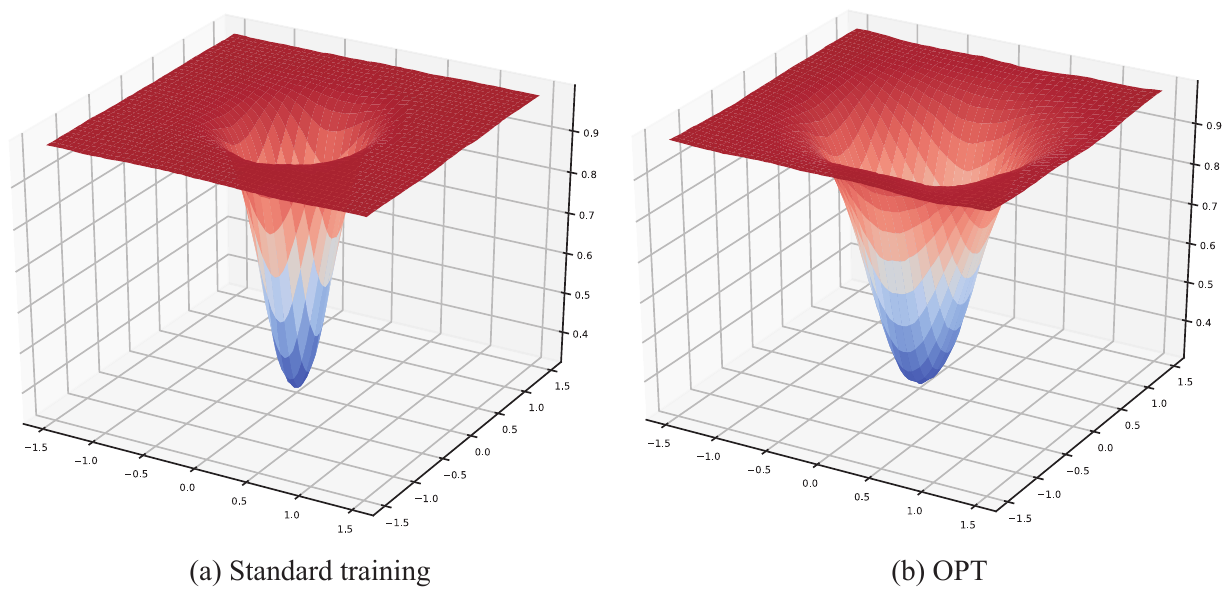}
  \caption{\footnotesize Comparison of testing error landscapes between standard training and OPT on CIFAR-100 (CNN-9).}\label{cnn9_test_err_normal}
\end{figure}

\ 

\ 

\ 

\ 

\ 

\ 

\ 

\ 

\ 

\ 

\ 

\ 

\newpage

\subsection{Loss Landscape Visualization of Different Dataset}

Similar to Appendix~\ref{uniform_cifar10}, we also visualize the loss landscape of OPT and standard training on a different dataset (CIFAR-10) with CNN-6. The loss landscape visualization is given in Fig.~\ref{cifar10_loss_normal}. Although the loss landscape on CIFAR-10 is quite different from the one on CIFAR-100, we can still observe that the loss landscape of OPT has a very flat local minima and the loss values are increasing smoothly and slowly. In contrast, the loss landscape of standard training has a sharp local minima and the loss values quickly increase to a large value. The red region of standard training will lead to very small gradient, potentially affecting the training. From the contour plots, the comparison apparently shows that the loss landscape of OPT is much more smooth than standard training.

\begin{figure}[h]
  \centering
  \renewcommand{\captionlabelfont}{\footnotesize}
  \vspace{1mm}
  \includegraphics[width=5in]{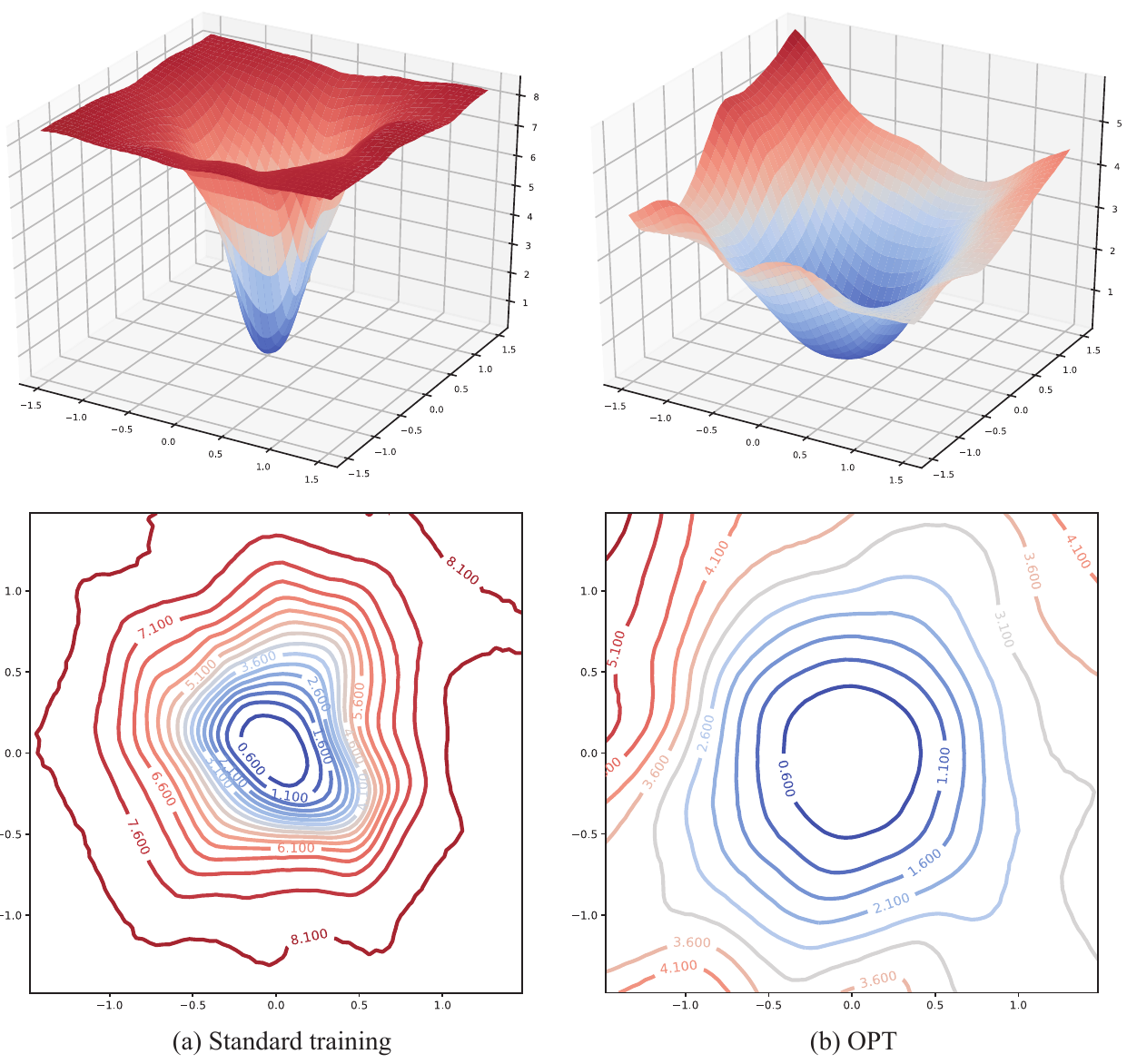}
  \caption{\footnotesize Comparison of loss landscapes between standard training and OPT on CIFAR-10 (CNN-6). Top row: loss landscape visualization with Cartesian coordinate system; Bottom row: loss contour visualization.}\label{cifar10_loss_normal}
  \vspace{3mm}
\end{figure}

We also visualize the landscape of testing error in Fig.~\ref{cifar10_test_err_normal}. The testing error landscape generally follows the pattern in the loss landscape. One can easily observe that the parameter space of standard training is very sensitive to perturbations. A small perturbation can make the model parameters completely fail (\ie, the testing error dramatically increase). Differently, the parameter space of OPT is much more robust to perturbations. The model parameter can still work well with a small perturbation. Both Fig.~\ref{cifar10_loss_normal} and Fig.~\ref{cifar10_test_err_normal} validate that the superiority of OPT is consistent across different training datasets.

\ 

\ 

\ 

\ 

\ 

\ 

\ 

\ 

\ 

\ 

\begin{figure}[t]
  \centering
  \renewcommand{\captionlabelfont}{\footnotesize}
  \includegraphics[width=5in]{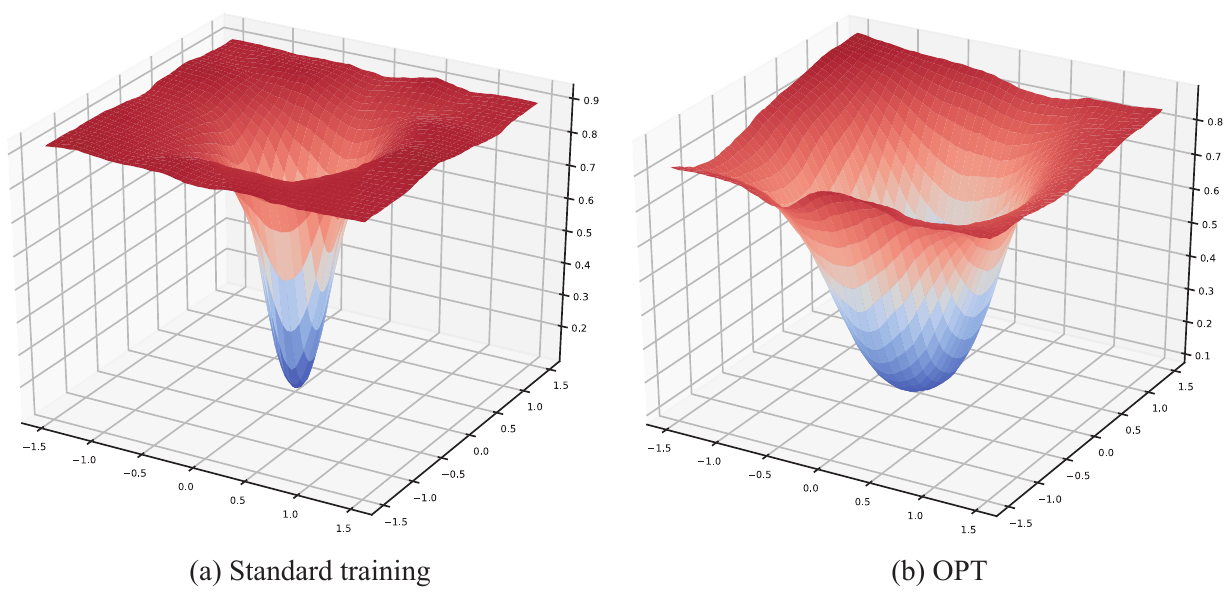}
  \caption{\footnotesize Comparison of testing error landscapes between standard training and OPT on CIFAR-10 (CNN-6).}\label{cifar10_test_err_normal}
\end{figure}

\newpage
\section{Loss Landscape Visualization (Uniform Distribution Perturbation)}\label{loss_vis}

\subsection{Visualization Procedure}
Different from Appendix~\ref{loss_vis_normal}, we choose two random direction vectors $\bm{\delta}$ and $\bm{\eta}$ based on $[0,1]$ uniform distribution to further justify the effectiveness of OPT. The 2D plot $f(\alpha,\beta)$ is defined as
\begin{equation}
    f(\alpha,\beta)=\mathcal{L}(\bm{\theta}^*+\alpha\bm{\delta}+\beta\bm{\eta})
\end{equation}
which can be used as a 2D surface visualization. Note that, after we randomly initialize the direction vectors $\bm{\delta}$ and $\bm{\eta}$ with $[0,1]$ normal distribution, we need to perform the filter normalization~\cite{li2018visualizing}. Specifically, we normalize each filter in $\bm{\delta}$ and $\bm{\eta}$ to have the same norm as the corresponding filter in $\bm{\theta}^*$. The loss landscape of our baseline is plotted using this visualization approach.
\par
In contrast, the learnable parameters in OPT are no longer the weights of neurons. Instead, the learnable parameters are the orthogonal matrices. More precisely, the trainable matrices are used to perform orthogonalization in the neural networks (\ie, $\bm{P}$ in Fig.~\ref{ortho}). We denote the combination of all the trainable matrices as $\tilde{\bm{R}}$, and the corresponding pretrained matrices as $\tilde{\bm{R}}^*$. Then the 2D visualization of OPT is 
\begin{equation}
    f(\alpha,\beta)=\mathcal{L}(\tilde{\bm{R}}^*+\alpha\bm{\gamma}+\beta\bm{\kappa})
\end{equation}
where $\bm{\gamma}$ and $\bm{\kappa}$ are two random direction vectors (which follow the $[0,1]$ uniform distribution) to perturb $\tilde{\bm{R}}^*$. The visualization procedures of baseline and OPT are essentially the same except that the trainable variables are different. Therefore, their loss landscapes are comparable.

\subsection{Experimental Details}\label{exp_setting_fig3}
In Fig.~\ref{loss_landscape}, we vary $\alpha$ and $\beta$ from $-1$ to $1$ for both baseline and OPT, and then plot the surface of 2D function $f$. We use the CNN-6 (as specified in Appendix~\ref{exp_set}) on CIFAR-100. We use the same data augmentation as \cite{liu2018learning}. We train the network with SGD with momentum $0.9$ and batch size $128$. We start with learning rate $0.1$, divide it by 10 at 30k, 50k and 64k iterations, and terminate training at 75k iterations. The training details basically follows \cite{liu2018learning}. We mostly use CP for OPT due to efficiency. Note that, the other orthogonalization methods in OPT yields similar loss landscapes in general. The pretrained model for standard training yields $37.59\%$ testing error on CIFAR-100, while the pretrained model for OPT yields $33.53\%$ error. This is also reported in Section~\ref{exp_results}.

\subsection{Full Visualization Results}
\begin{figure}[h]
  \centering
  \renewcommand{\captionlabelfont}{\footnotesize}
  \includegraphics[width=5in]{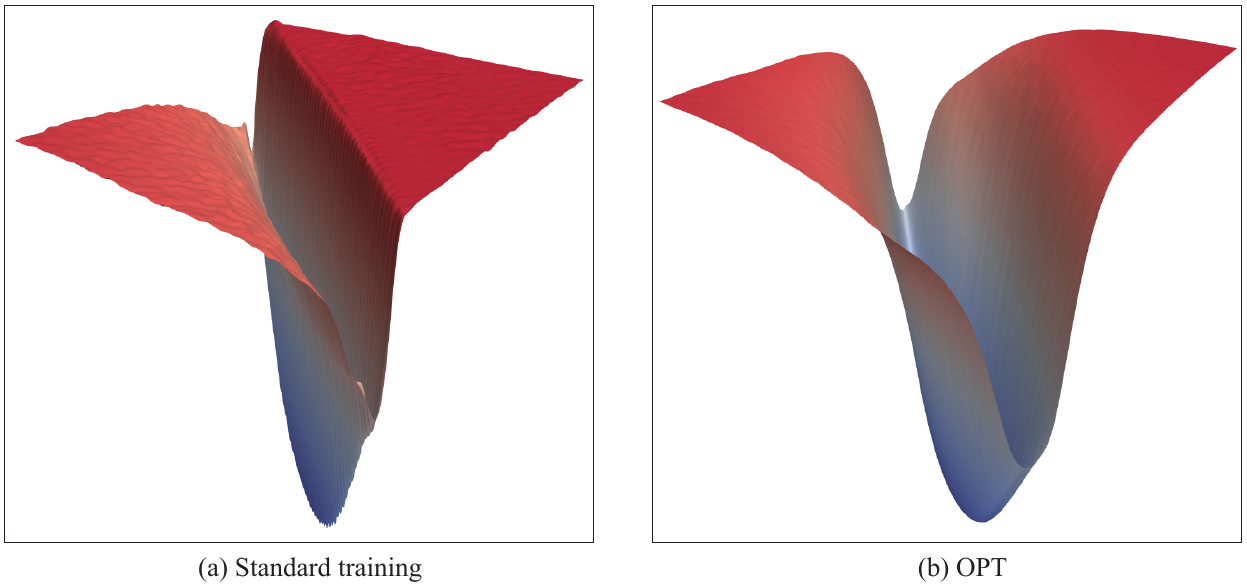}
  \caption{\footnotesize High-quality rendered loss landscapes of standard training and OPT.}\label{hq_loss_uniform}
\end{figure}
Following the same experimental settings in Appendix~\ref{exp_setting_fig3}, we render the 3D loss landscapes with some color and lighting effects for Fig.~\ref{hq_loss_uniform}. We first use ParaView to plot a high-qualify loss landscape comparison between standard training and OPT. As expected, the loss landscape of OPT is much more smooth than standard training. Note that, for the flat red region in standard training, we can still observe numerous small local minima, while the red region of OPT is very smooth. Fig.~\ref{hq_loss_uniform} better validates our analysis and discussion in Section~\ref{main_landscape}, and also shows the superiority of OPT. 

We provide the visualization results in the rest of the subsection. From Fig.~\ref{loss_landscape_full}, we can better observe the dramatically different loss landscape between standard training and OPT.

\begin{figure}[h]
  \centering
  \renewcommand{\captionlabelfont}{\footnotesize}
  \includegraphics[width=5in]{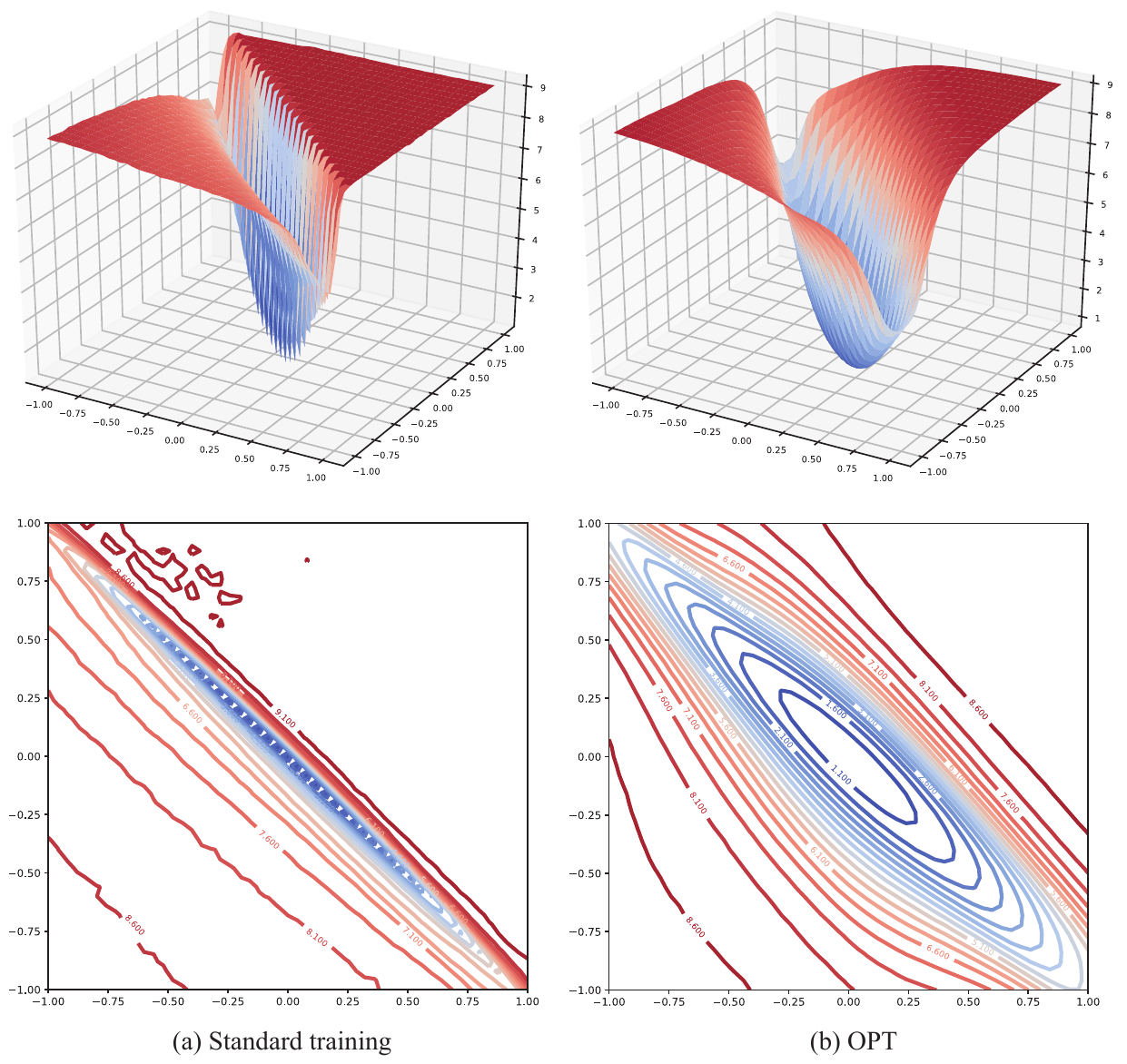}
  \caption{\footnotesize Comparison of loss landscapes between standard training and OPT (full results of Fig.~\ref{loss_landscape}(a) in the main paper). Top row: loss landscape visualization with Cartesian coordinate system; Bottom row: loss contour visualization.}\label{loss_landscape_full}
\end{figure}

To better understand the difference of the training dynamics between standard training and OPT, we also plot the testing error landscapes in Fig.~\ref{cnn6_test_err} for both methods. The testing error is computed on the testing set of CIFAR-100 with the perturbed pretrained model ($\alpha$ and $\beta$ are the perturbation parameters). From the testing error landscape comparison in Fig.~\ref{cnn6_test_err}, we can see that once the baseline pretrained model is slightly perturbed, the testing error will immediately increase to $99.99\%$ which is random selection-level testing error (because we have 100 balanced classes in total, randomly picking a class leads to 0.01\% accuracy). In contrast, the testing error landscape of OPT is much more smooth. Even if we perturb the OPT pretrained model, we still end up with a reasonably low testing error, show that the parameter space of OPT is more smooth and continuous. All these evidences suggest that OPT is a better training framework for neural networks and can significantly alleviate the optimization difficulty. In this following subsections, we aim to show that the loss and testing error landscape difference between standard training and OPT is not a coincidence. We will show that the improvement of OPT on the loss and testing error landscape is both dataset-agnostic and architecture-agnostic. 

\begin{figure}[h]
  \centering
  \renewcommand{\captionlabelfont}{\footnotesize}
  \includegraphics[width=5in]{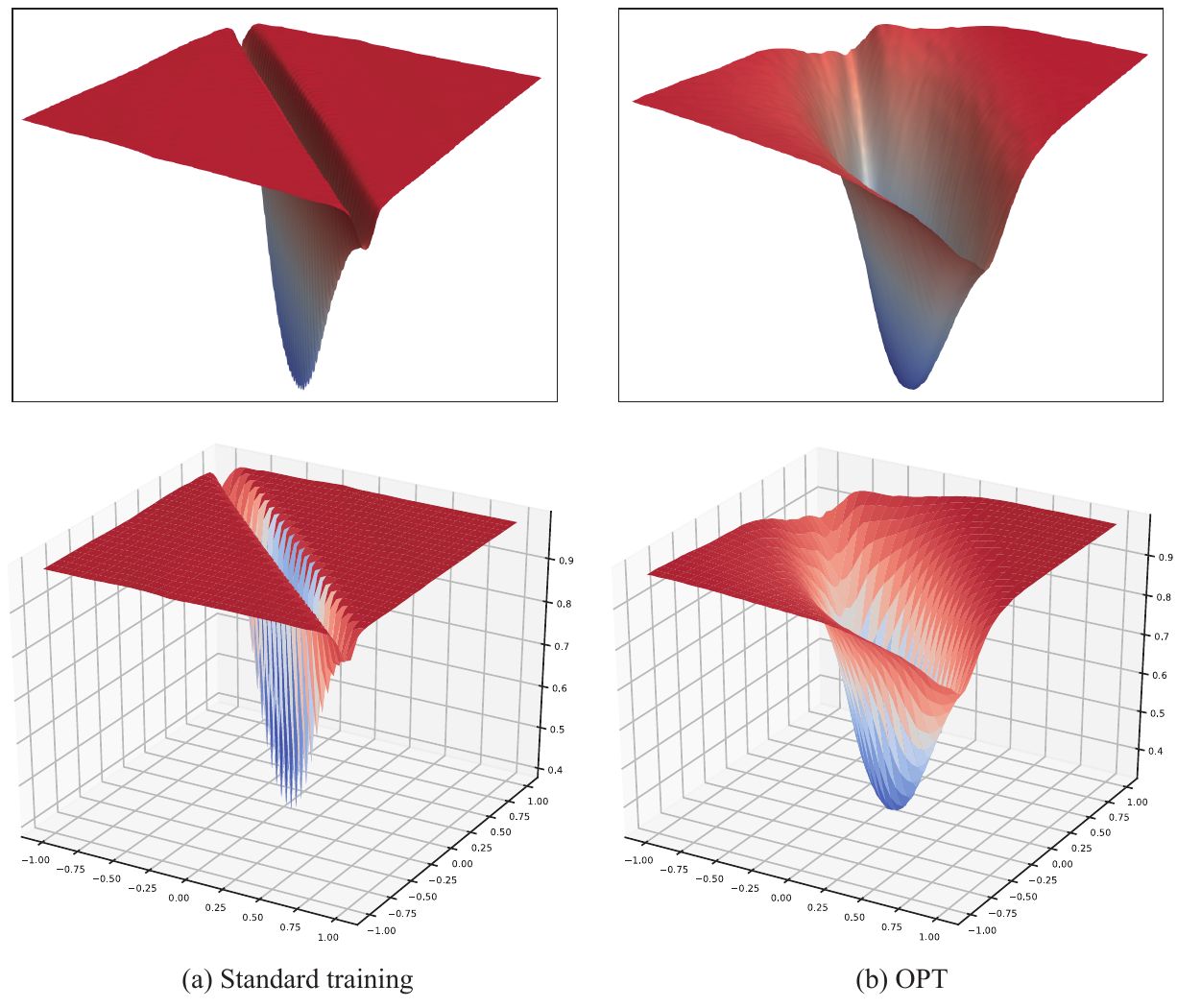}
  \caption{\footnotesize Comparison of testing error landscapes between standard training and OPT. Top row: high-quality rendered testing error landscape visualization with lighting effects; Bottom row: testing error landscape visualization with Cartesian coordinate system.}\label{cnn6_test_err}
\end{figure}

\ 

\ 

\ 

\ 
\ 

\ 

\ 

\ 

\ 

\ 

\ 

\ 

\

\ 

\ 

\ 

\ 

\newpage

\subsection{Loss Landscape Visualization of Different Neural Network}

\begin{figure}[h]
  \centering
  \renewcommand{\captionlabelfont}{\footnotesize}
  \vspace{-3mm}
  \includegraphics[width=5in]{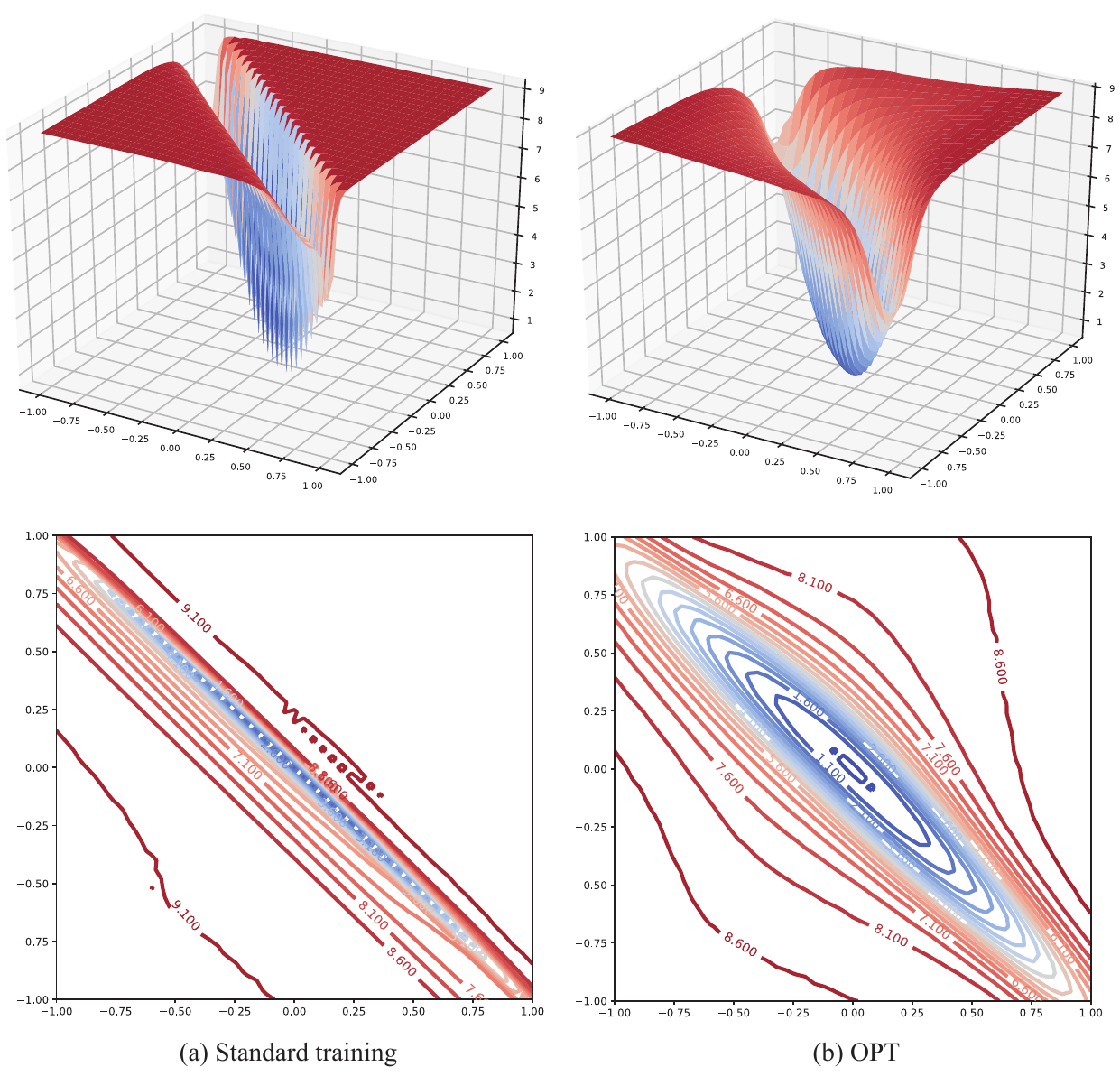}
  \vspace{-1.2mm}
  \caption{\footnotesize Comparison of loss landscapes between standard training and OPT on CIFAR-100 (CNN-9). Top row: loss landscape visualization with Cartesian coordinate system; Bottom row: loss contour visualization.}\label{hq_loss_cnn9}
  \vspace{-2.9mm}
\end{figure}

\begin{figure}[h]
  \centering
  \renewcommand{\captionlabelfont}{\footnotesize}
  \vspace{-1mm}
  \includegraphics[width=5in]{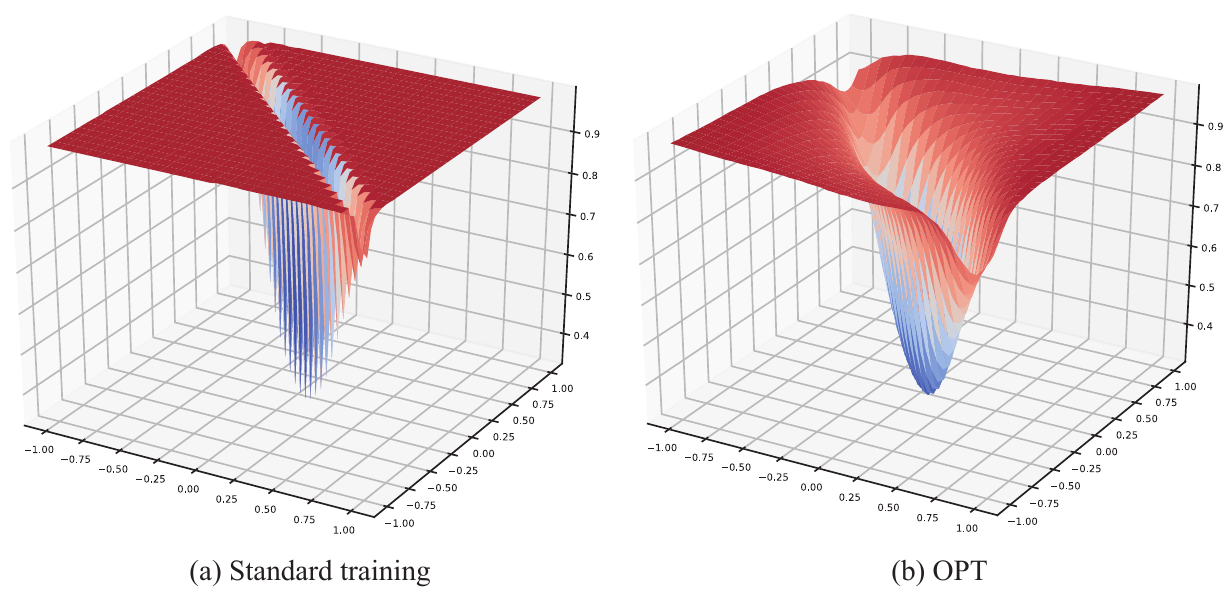}
  \vspace{-1.5mm}
  \caption{\footnotesize Comparison of testing error landscapes between standard training and OPT on CIFAR-100 (CNN-9).}\label{cnn9_test_err}
  \vspace{-1mm}
\end{figure}

To show that the difference of the loss landscape between standard training and OPT is consistent across different neural networks. We use a deeper CNN-9 (as specified in Appendix~\ref{exp_set}) to visualize the loss landscape. The experimental settings generally follow Appendix~\ref{exp_setting_fig3}. We use CNN-9 as our backbone architecture and train it on CIFAR-100. The visualization of the loss landscapes is given in Fig.~\ref{hq_loss_cnn9}. We observe that the loss landscapes of OPT with CNN-9 and CNN-6 are very similar. In general, the conclusion that OPT yields better loss landscape still holds for a deeper neural network, showing that the effectiveness of OPT is not architecture-dependent.

Moreover, we also visualize the landscape of testing error in Fig.~\ref{cnn9_test_err}. We can observe that the testing error landscapes are somewhat similar to the loss landscape in Fig.~\ref{hq_loss_cnn9}. The results further validate the superiority of OPT. We can observe in Fig.~\ref{cnn9_test_err} that OPT has more smooth testing error landscape and can make the training parameter space of the neural network less sensitive to perturbations.

\subsection{Loss Landscape Visualization on Different Dataset}\label{uniform_cifar10}

We also perform the same loss and testing error landscape visualization on CIFAR-10. The training details basically follows Appendix~\ref{exp_setting_fig3}. For CIFAR-10, we use the same data augmentation as in Appendix~\ref{exp_set}. The results are given in Fig.~\ref{cifar10_loss}. From Fig.~\ref{cifar10_loss}, we can observe even more dramatic difference of the loss landscape between standard training and OPT. In standard training, the loss landscape exhibits highly non-convex and non-smooth behavior. There are countless local minima in the loss landscape. Different from the results in Fig.~\ref{hq_loss_cnn9}, the loss landscape of standard training on CIFAR-10 has some huge local minima that are hard to escape from. In contrast, the loss landscape of OPT on CIFAR-10 does not show obvious and huge local minima and is far more convex and smooth than standard training. The contour maps show more significant difference between standard training and OPT. The contour map of OPT shows the shape of a single symmetric and convex valley, while the contour map of standard training presents the shape of multiple highly irregular valleys. The visualization further validate that the improvement of OPT on optimization landscape is very consistent across different training datasets.

\begin{figure}[h]
  \centering
  \renewcommand{\captionlabelfont}{\footnotesize}
  \vspace{1mm}
  \includegraphics[width=5in]{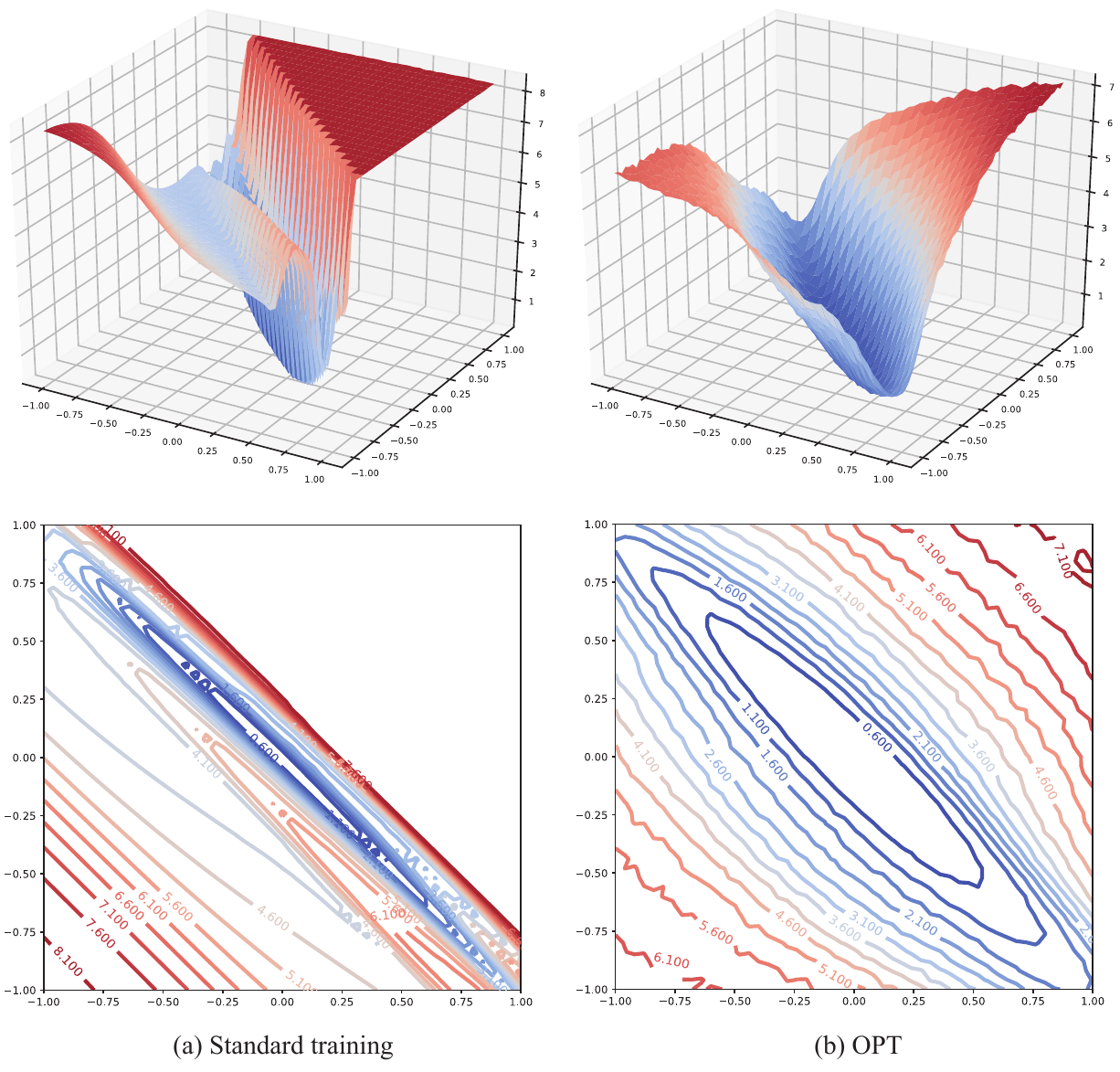}
  \caption{\footnotesize Comparison of loss landscapes between standard training and OPT on CIFAR-10 (CNN-6). Top row: loss landscape visualization with Cartesian coordinate system; Bottom row: loss contour visualization.}\label{cifar10_loss}
  \vspace{2mm}
\end{figure}

Then we also visualize the testing error landscape of standard training and OPT in Fig.~\ref{cifar10_test_err}. As expected, the testing error landscape of standard training shows that changing the pretrained model parameters with a very small perturbation could lead to a dramatic increase in testing error. It indicates that the parameter space of standard training is very sensitive to even a small perturbation. In comparison, the testing error landscape of OPT shows similar shape to the training loss landscape which is the shape of a single regular, smooth and convex valley. We can conclude that OPT has huge advantages over standard training in terms of the optimization landscape. Although the conclusion is drawn from a simple visualization method, it can still shed some light on why OPT yields better training dynamics and generalization ability.

\begin{figure}[h]
  \centering
  \renewcommand{\captionlabelfont}{\footnotesize}
  \includegraphics[width=5in]{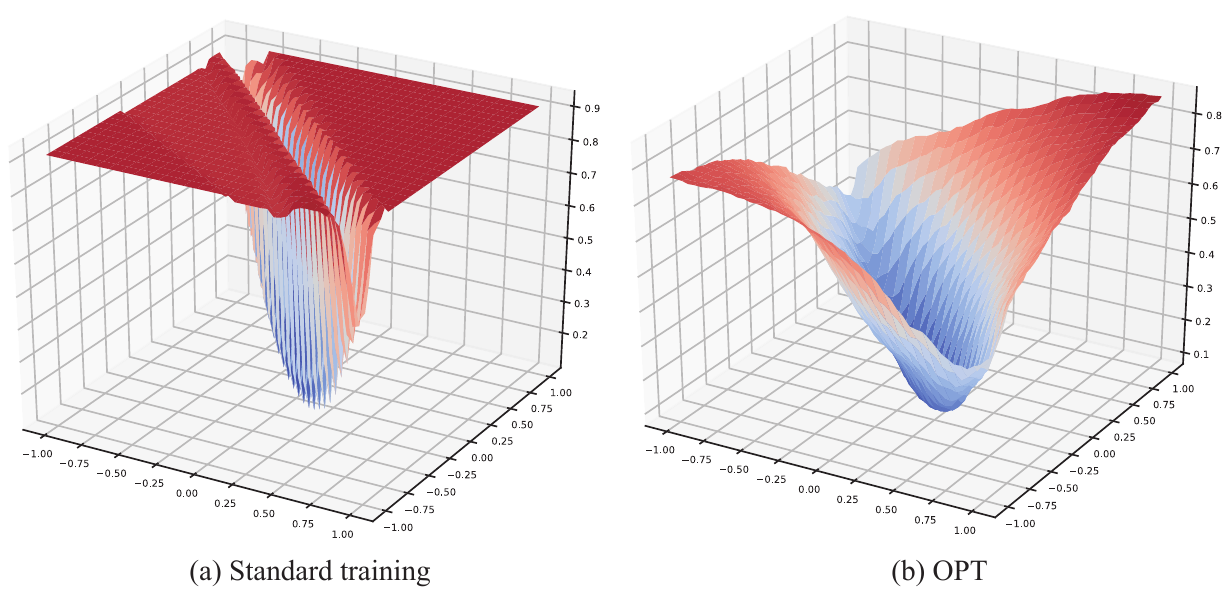}
  \caption{\footnotesize Comparison of testing error landscapes between standard training and OPT on CIFAR-10 (CNN-6).}\label{cifar10_test_err}
\end{figure}

\newpage
\section{Theoretical Discussion on Optimization and Generalization}

The key problem we discuss in this section is why OPT may lead to easier optimization and better generalization. We have already shown that OPT can guarantee the minimum hyperspherical energy~(MHE) in a probabilistic sense. Although empirical evidences~\cite{liu2018learning} have shown significant and consistent performance gain by minimizing hyperspherical energy, why lower hyperspherical energy will lead to better generalization is still unclear. We argue that OPT leads to better generalization from two aspects: how OPT may affect the training and generalization, and why minimum hyperspherical energy serves as a good inductive bias. We note that rigorously proving that OPT generalizes better is out of the scope of this paper and remains our future work. The section serves as a very preliminary discussion for this topic, and hopefully the discussion can inspire more theoretical studies about OPT.

\par
Our goal here is to leverage and apply existing theoretical results~\cite{kawaguchi2016deep,xie2016diverse,soudry2016no,lee2016gradient,du2017gradient,allen2018convergence} to explain the role that MHE plays rather than proving sharp and novel generalization bounds. We emphasize that our paper is \emph{NOT} targeted as a theoretical one that proves novel generalization bounds. 

We simply consider one-hidden-layer networks as the hypothesis class: 
\begin{equation}
    \mathcal{F}=\{f(x)=\sum_{j=1}^n v_j\sigma(\bm{w}_j^\top\bm{x}): v_j\in\{\pm1\},\sum_{j=1}^n\|\bm{w}_j\|\leq C_w\}
\end{equation}
where $\thickmuskip=2mu \medmuskip=2mu \sigma(\cdot)=\max(0,\cdot)$ is ReLU. Since the magnitude of $v_j$ can be scaled into $\bm{w}_j$, we can restrict $v_j$ to be $\pm 1$. Given a set of \emph{i.i.d.} training sample $\{\bm{x}_i,y_i\}_{i=1}^m$ where $\thickmuskip=2mu \medmuskip=2mu x\in\mathbb{R}^d$ is drawn uniformly from the unit hypersphere, we minimize the least square loss $\thickmuskip=2mu \medmuskip=2mu \mathcal{L}=\frac{1}{2m}\sum_{i=1}^m(y_i-f(\bm{x}_i))^2$. The gradient \emph{w.r.t.}  $\bm{w}_i$ is
\begin{equation}
    \frac{\partial \mathcal{L}}{\partial \bm{w}_j}=\frac{1}{m}\sum_{i=1}^m \big(f(\bm{x}_i)-y_i\big)v_j\sigma'(\bm{w}_j^\top\bm{x}_i)\bm{x}_i.
\end{equation}
Let $\thickmuskip=2mu \medmuskip=2mu \bm{W}:=\{\bm{w}_1^\top,\cdots,\bm{w}_n^\top\}^\top$ be the column concatenation of neuron weights. We aim to identify the conditions under which there are no spurious local minima. We rewrite that
\begin{equation}
\frac{\partial \mathcal{L}}{\partial \bm{W}}=\big( \frac{\partial \mathcal{L}}{\partial \bm{w}_1}^\top,\cdots,\frac{\partial \mathcal{L}}{\partial \bm{w}_n}^\top \big)^\top=\bm{D}\bm{r}
\end{equation}
where $\thickmuskip=2mu \medmuskip=2mu\bm{r}\in\mathbb{R}^m$ $\thickmuskip=2mu \medmuskip=2mu \bm{r}_i=\frac{1}{m}f(\bm{x}_i)-y_i$, $\thickmuskip=2mu \medmuskip=2mu\bm{D}\in\mathbb{R}^{n\times m}$, and $\thickmuskip=2mu \medmuskip=2mu\bm{D}_{ij}= v_i\sigma'(\bm{w}_i^\top\bm{x}_j)\bm{x}_j$. Therefore, we can obtain the following inequality: 
\begin{equation}
\norm{\bm{r}}\leq\frac{1}{s_m(\bm{D})}\norm{\frac{\partial\mathcal{L}}{\partial\bm{W}}}
\end{equation}
where $\|\bm{r}\|$ is the training error and $s_m(\bm{D})$ is the minimum singular value of $\bm{D}$. If we need the training error to be small, then we have to lower bound $s_m(\bm{D})$ away from zero. Therefore, the essential problem now becomes the relationship between MHE and the lower bound of $s_m(\bm{D})$. We have the following result from \cite{xie2016diverse}:

\vspace{3mm}
\begin{lemma}\label{smd}
With probability larger than $\thickmuskip=2mu \medmuskip=2mu 
1-m\exp(-m\gamma_m/8)-2m^2\exp(-4\log^2d)-\delta$, we will have that $\thickmuskip=2mu \medmuskip=2mu s_m(\bm{D})^2\geq \frac{1}{2}nm\gamma_m-cn\rho(\bm{W})$ where 
\begin{equation}
\begin{aligned}
    \rho(\bm{W})&\leq\frac{\log d}{\sqrt{d}}\sqrt{2L_2(\bm{W})}m(\frac{4}{m}\log\frac{1}{\delta})^{1/4}\\
    &\ \ \ \ \ \ +\frac{2\log d}{\sqrt{d}}m\sqrt{\frac{4}{3m}\log \frac{1}{\delta}}+\frac{\log d}{\sqrt{d}}mL_2(\bm{W})+2,
\end{aligned}
\end{equation}
and $\thickmuskip=7mu \medmuskip=7mu L_2(\bm{W})=\frac{1}{n^2}\sum_{i,j=1}^n k(\bm{w}_i,\bm{w_j})^2-\mathbb{E}_{\bm{u},\bm{v}}[k(u,v)^2]$. The kernel function $k(\bm{u},\bm{v})$ is $\frac{1}{2}-\frac{1}{2\pi}\arccos(\frac{\langle\bm{u},\bm{v}\rangle}{\|\bm{u}\|\|\bm{v}\|})$.
\end{lemma}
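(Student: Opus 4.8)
The plan is to follow the route of \cite{xie2016diverse}: reduce the claimed bound on the smallest singular value $s_m(\bm{D})$ to a lower bound on the smallest eigenvalue of the Gram matrix $\bm{G}^n := \frac{1}{n}\bm{D}^\top\bm{D} \in \mathbb{R}^{m\times m}$, since $s_m(\bm{D})^2 = n\,\lambda_m(\bm{G}^n)$. Using $v_k^2 = 1$ and the fact that $\sigma'(\bm{w}^\top\bm{x}) = \mathbf{1}(\bm{w}^\top\bm{x}\ge 0)$ does not depend on $\norm{\bm{w}}$ (so we may take $\norm{\bm{w}_k}=1$), we have $\bm{G}^n_{ij} = \frac{1}{n}\sum_{k=1}^n \sigma'(\bm{w}_k^\top\bm{x}_i)\sigma'(\bm{w}_k^\top\bm{x}_j)\langle\bm{x}_i,\bm{x}_j\rangle$. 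I would compare $\bm{G}^n$ with its expectation $\bm{G}$ over a single neuron direction $\bm{w}$ uniform on $\mathbb{S}^{d-1}$, whose entries have the closed form $\bm{G}_{ij} = \big(\frac12 - \frac{1}{2\pi}\arccos\langle\bm{x}_i,\bm{x}_j\rangle\big)\langle\bm{x}_i,\bm{x}_j\rangle =: g(\bm{x}_i,\bm{x}_j)$, so that $\bm{G}$ is the kernel matrix of the positive-definite arc-cosine-type kernel $g$. Weyl's inequality then gives $\lambda_m(\bm{G}^n) \ge \lambda_m(\bm{G}) - \norm{\bm{G}^n - \bm{G}}$, splitting the task into a deterministic lower bound on $\lambda_m(\bm{G})$ and a high-probability bound on the perturbation $\norm{\bm{G}^n - \bm{G}}$ measuring how evenly $\bm{W}=\{\bm{w}_k\}$ covers the sphere.

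For the first piece I would expand $g$ in spherical harmonics, $g(\bm{x},\bm{y}) = \sum_{u\ge 1}\gamma_u\,\phi_u(\bm{x})\phi_u(\bm{y})$ with $\gamma_u\ge 0$, truncate to the first $m$ terms to get $\bm{G}_m = \bm{A}\bm{A}^\top$ where $\bm{A}_{i,:} = (\sqrt{\gamma_1}\phi_1(\bm{x}_i),\dots,\sqrt{\gamma_m}\phi_m(\bm{x}_i))$, note $\lambda_m(\bm{G}) \ge \lambda_m(\bm{G}_m) = \lambda_m(\bm{A}^\top\bm{A})$ because truncation discards nonnegative terms, and apply a matrix Chernoff bound to $\bm{A}^\top\bm{A} = \sum_{i=1}^m \bm{a}_i\bm{a}_i^\top$ (the summands are i.i.d. because the $\bm{x}_i$ are i.i.d. uniform, and $\mathbb{E}[\bm{a}_i\bm{a}_i^\top]$ is diagonal with entries $\gamma_u$ by orthonormality of the harmonics), obtaining $\lambda_m(\bm{G}) \ge \frac{m\gamma_m}{2}$ with probability at least $1 - m\exp(-m\gamma_m/8)$. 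Multiplying through by $n$ produces the leading term $\frac12 nm\gamma_m$.

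For the perturbation piece I would bound $\norm{\bm{G}^n-\bm{G}}$ by an off-diagonal Frobenius term plus the diagonal: $\norm{\bm{G}^n-\bm{G}} \le \frac{c\log d}{\sqrt d}\big(\sum_{i\ne j}\Delta_{ij}^2\big)^{1/2} + \max_i |\Delta_{ii}|$, where $\Delta_{ij} := \mathbb{E}_{\bm{w}}[\sigma'(\bm{w}^\top\bm{x}_i)\sigma'(\bm{w}^\top\bm{x}_j)] - \frac1n\sum_k \sigma'(\bm{w}_k^\top\bm{x}_i)\sigma'(\bm{w}_k^\top\bm{x}_j)$; the $\log d/\sqrt d$ factor comes from the concentration of $|\langle\bm{x}_i,\bm{x}_j\rangle|$ for uniform points on $\mathbb{S}^{d-1}$, whose union bound over all $\binom{m}{2}$ pairs contributes the $2m^2\exp(-4\log^2 d)$ failure probability. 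A Bernstein inequality for U-statistics \cite{peel2010empirical} then controls $\frac{1}{m(m-1)}\sum_{i\ne j}\Delta_{ij}^2$ by its mean plus fluctuation terms (with the extra failure probability $\delta$). The decisive observation is that $\Delta_{ij}$ is exactly the geometric discrepancy \cite{bilyk2015one} of $\bm{W}$ with respect to the double cap $\bm{S}_{\bm{x}_i,\bm{x}_j} = \{\bm{w}\in\mathbb{S}^{d-1}:\bm{w}^\top\bm{x}_i\ge 0,\ \bm{w}^\top\bm{x}_j\ge 0\}$, so $\max_{i,j}|\Delta_{ij}| \le L_\infty(\bm{W})$, $\mathbb{E}_{\bm{x}_1,\bm{x}_2}\Delta_{12}^2 = L_2(\bm{W})^2$, and the U-statistic variance proxy is at most $(L_2(\bm{W})L_\infty(\bm{W}))^2$. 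Plugging these into the Bernstein bound and using $|L_\infty(\bm{W})|\le 2$ collapses everything to the stated $\rho(\bm{W})$; identifying $L_2(\bm{W})^2$ with the empirical-minus-population average of $k(\bm{w}_i,\bm{w}_j)^2$ (the MHE-related quantity) gives exactly the form in the statement. A union bound over the two events, combined with $\lambda_m(\bm{G}) \ge \frac{m\gamma_m}{2}$, yields $s_m(\bm{D})^2 = n\,\lambda_m(\bm{G}^n) \ge \frac12 nm\gamma_m - c\,n\,\rho(\bm{W})$.

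The main obstacle is the perturbation step: getting the dimension dependence right, i.e. justifying the $\log d/\sqrt d$ gain from the concentration of $\langle\bm{x}_i,\bm{x}_j\rangle$ uniformly over all pairs, and correctly bounding the U-statistic variance $\Sigma^2$ by the product $L_2(\bm{W})^2 L_\infty(\bm{W})^2$ so that the discrepancy norms enter $\rho(\bm{W})$ in the advertised way. The harmonic-decomposition and matrix-Chernoff argument for $\lambda_m(\bm{G})$, by contrast, is fairly routine once the eigenvalues $\gamma_u$ of the arc-cosine kernel are available.
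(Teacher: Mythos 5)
Your proposal follows essentially the same route as the paper's own argument (which it attributes to \cite{xie2016diverse}): reduce $s_m(\bm{D})$ to $\lambda_m(\bm{G}^n)$, lower-bound $\lambda_m(\bm{G})$ via a truncated spherical-harmonic expansion plus matrix Chernoff, and control $\|\bm{G}^n-\bm{G}\|$ through the U-statistic Bernstein inequality of \cite{peel2010empirical} together with the spherical-cap discrepancies $L_\infty(\bm{W})$ and $L_2(\bm{W})$, finishing with $|L_\infty(\bm{W})|\leq 2$ and a union bound. The decomposition, the failure-probability terms, and the way the discrepancy quantities enter $\rho(\bm{W})$ all match the paper's proof, so the proposal is correct and not a different approach.
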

\vspace{2mm}

Once MHE is achieved, the neurons will be uniformly distributed on the unit hypersphere. From Lemma~\ref{smd}, we can see that if the neurons are uniformly distributed on the unit hypersphere, $L_2(\bm{W})$ will be very small and close to zero. Then $\rho(\bm{W})$ will also be small, leading to large lower bound for $s_m(\bm{D})$. Therefore, MHE can result in small training error once the gradient norm $\norm{\frac{\partial\mathcal{L}}{\partial\bm{W}}}$ is small. The result implies no spurious local minima if we use OPT for training.

Furthermore, suppose that $\|\frac{\partial \mathcal{L}}{\partial \bm{W}}\|^2\leq\epsilon$, \cite{xie2016diverse} also proves a training error bound $\tilde{\mathcal{O}}(\epsilon)$ and a generalization bound bound $\tilde{\mathcal{O}}(\epsilon+\frac{1}{\sqrt{m}})$ based on the assumption that $\bm{W}$ belongs to a specific set $\mathcal{G}_{\bm{W}}$ (for the definition of $\mathcal{G}_{\bm{W}}$, please refer to \cite{xie2016diverse}). Therefore, MHE is also connected to the training and generalization error. Note that, the analysis here is highly simplified and the purpose here is to give some justifications rather than rigorously proving any bound.

\par
We further argue that MHE induced by OPT serves as an important inductive bias for neural networks. As the standard regularizer for neural networks,  weight decay controls the norm of the neuron weights, regularizing essentially one dimension of the weight. In contrast, MHE completes an important missing pieces by regularizing the remaining dimensions of the weight. MHE encourages minimum hyperspherical redundancy between neurons. In the linear classifier case, MHE impose a prior of maximal inter-class separability.

\section{More Discussions}\label{more_diss}

\textbf{Semi-randomness}. OPT fixes the randomly initialized neuron weight vectors and simply learns layer-shared orthogonal matrices, so OPT naturally imposes strong randomness to the neurons. OPT well combines the good generalizability from randomness and the strong approximation power from neural networks. Such randomness suggests that the specific configuration of relative position among neurons does not matter that much, and the coordinate system is more crucial for generalization. \cite{kawaguchi2018deep,rahimi2008random,srivastava2014dropout} also show that randomness can be beneficial to generalization.

\textbf{Flexible training}. First, OPT can used in multi-task training~\cite{mallya2018piggyback} where each set of orthogonal matrices represent one task. OPT can learn different set of orthogonal matrices for different tasks with the neuron weights remain the same. Second, we can perform progressive training with OPT. For example, after learning a set of orthogonal matrices on a large coarse-grained dataset (\ie, pretraining), we can multiple the orthogonal matrices back to the neuron weights and construct a new set of neuron weights. Then we can use the new neuron weights as a starting point and apply OPT to train on a small fine-grained dataset (\ie, finetuning).

\textbf{Limitations and open problems} The limitations of OPT include more GPU memory consumption and heavy computation during training, more numerical issues when ensuring orthogonality and weak scalability for ultra wide neural networks. Therefore, there will be plenty of open problems in OPT, such as scalable and efficient training. Most significantly, OPT opens up a new possibility for studying theoretical generalization of deep networks. With the decomposition to hyperspherical energy and coordinate system, OPT provides a new perspective for future research. 

\newpage
\section{On Parameter-Efficient OPT}\label{appendix_peopt}
\subsection{Formulation}
Since OPT over-parameterizes the neurons, it will consume more GPU memory in training (note that, the number of parameters will not increase in testing). For a $d$-dimensional neuron, OPT will learn an orthogonal matrix of size $d\times d$ that applies to the the neuron. Therefore, we will need $d^2$ extra parameters for one layer of neurons, making the training more expensive in terms of the GPU memory. Although the extra training overhead in OPT will not affect the inference speed of the trained neural networks, we still desire to achieve better parameter efficiency in OPT. To this end, we discuss some design possibilities for the \emph{parameter-efficient OPT}~(PE-OPT) in this section.
\par

Original OPT over-parameterize a neuron $\bm{v}\in\mathbb{R}^{n\times n}$ with $\bm{R}\bm{v}$ where $\bm{R}$ is a layer-shared orthogonal matrix of size $d\times d$. We aim to reduce the effective parameters of this $d\times d$ orthogonal matrix. We incorporate a block-diagonal structure to the orthogonal matrix $\bm{R}$. Specifically, we formulate $\bm{R}$ as $\textnormal{Diag}(\bm{R}^{(1)},\bm{R}^{(2)},\cdots,\bm{R}^{(k)})$ where $\bm{R}^{(i)}$ is an orthogonal matrix with size $d_i\times d_i$ (it is easy to see that we need $d=\sum_i d_i$). As an example, we only consider the case where all $\bm{R}^{(i)}$ are of the same size (\ie, $d_1=d_2=\cdots=d_k=\frac{d}{k}$). It is also obvious that as long as each block is an orthogonal matrix, then the overall matrix $\bm{R}$ remains an orthogonal matrix.
\par

\begin{figure}[h]
  \centering
  \renewcommand{\captionlabelfont}{\footnotesize}
  \includegraphics[width=2.8in]{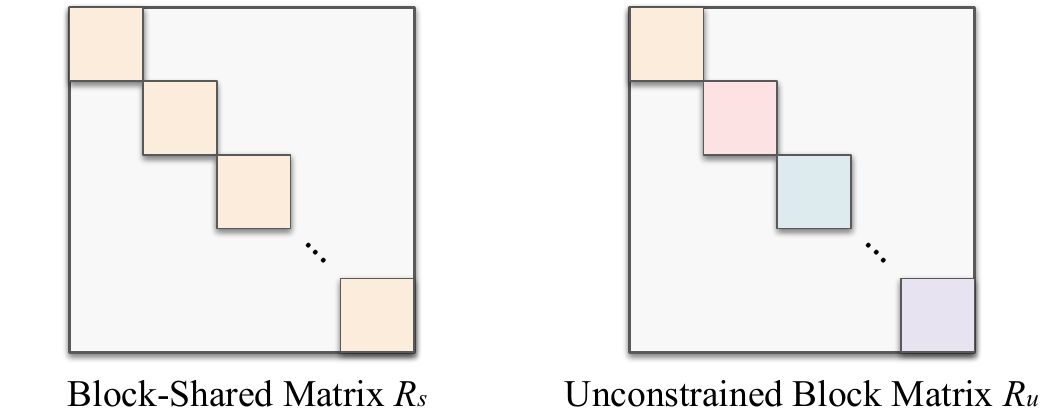}
  \caption{\footnotesize Comparison between the block-shared matrix $\bm{R}_s$ and the unconstrained block matrix $\bm{R}_u$.}\label{peopt}
\end{figure}

First, we consider that all the block matrices on the diagonal of the orthogonal matrix $\bm{R}$ are shared, meaning that $\bm{R}=\textnormal{Diag}(\bm{R}^{(1)},\bm{R}^{(1)},\cdots,\bm{R}^{(1)})$ (\ie, $\bm{R}^{(1)}=\bm{R}^{(2)}=\cdots=\bm{R}^{(k)}$). Therefore, we have a block-diagonal matrix $\bm{R}_{s}$ with shared block $\bm{R}^{(1)}$ as the final orthogonal matrix for the neuron $\bm{v}$:
\begin{equation}
\bm{R}_{s}=\begin{bmatrix}
\bm{R}^{(1)} & 0 & \cdots & 0\\
0 & \bm{R}^{(1)} & \ddots & \vdots\\
\vdots & \ddots & \ddots & 0\\
0 & \cdots & 0 & \bm{R}^{(1)}
\end{bmatrix}
\end{equation}
where $\bm{R}^{(1)}\in\mathbb{R}^{\frac{d}{k}\times\frac{d}{k}}$. The effective number of parameters for the orthogonal matrix $\bm{R}_s$ immediately reduces to $\frac{d^2}{k^2}$. The left figure in Fig.~\ref{peopt} gives an intuitive illustration for the block-shared matrix $\bm{R}_s$. Therefore, PE-OPT only needs to learn $\bm{R}^{(1)}$ in order to construct the orthogonal matrix of size $d\times d$.
\par

Second, we consider that all the diagonal block matrices are independent, indicating that $\bm{R}=\textnormal{Diag}(\bm{R}^{(1)},\bm{R}^{(2)},\cdots,\bm{R}^{(k)})$ where $\bm{R}^{(i)},\forall i$ are different orthogonal matrices in general. We term such matrix $\bm{R}$ as unconstrained block matrix. Therefore, we have the unconstrained block diagonal matrix $\bm{R}_u$ as
\begin{equation}
    \bm{R}_{u}=\begin{bmatrix}
\bm{R}^{(1)} & 0 & \cdots & 0\\
0 & \bm{R}^{(2)} & \ddots & \vdots\\
\vdots & \ddots & \ddots & 0\\
0 & \cdots & 0 & \bm{R}^{(k)}
\end{bmatrix}
\end{equation}
where the orthogonal matrices $\bm{R}^{(i)},\forall i$ will be learned independently. The effective number of parameters for the orthogonal matrix $\bm{R}_u$ is $\frac{d^2}{k}$, making it more flexible than the block-shared matrix $\bm{R}_s$.
\par

Let's consider a convolution neuron (\ie, convolution filter) $\bm{v}\in\mathbb{R}^{c_1\times c_2\times c_3}$ (\eg, a typical convolution neuron is of size $3\times 3 \times 64$) as an example. The orthogonal matrix $\bm{R}$ for the convolution neuron is of size $(c_1c_2c_3)\times (c_1c_2c_3)$. Typically, we will divide the neuron into $k$ sub-neuron along the $c_3$-axis, each with size $c_1\times c_2\times \frac{c_3}{k}$. Then in order to learn a block-shared orthogonal matrix $\bm{R}_s$, we will essentially learn a shared orthogonal matrix of size $(\frac{1}{k}c_1c_2c_3)\times(\frac{1}{k}c_1c_2c_3)$ that applies to each sub-neuron (there are $k$ sub-neurons of size $c_1\times c_2\times \frac{c_3}{k}$ in total). For the case of learning a unconstrained block-diagonal orthogonal matrix $\bm{R}_u$, we simply learn different orthogonal matrices for different sub-neurons.

\subsection{Experiments and Results}
We conduct the image recognition experiments on CIFAR-100 with CNN-6 described in Table~\ref{netarch}. The setting is exactly the same as Section~\ref{CNN_exp}. For the convolution filter, we use the size of $3\times3\times64$, \ie, $c_1=3, c_2=3, c_3=64$. The results are given in Table~\ref{peopt_shared} and Table~\ref{peopt_unshared}. ``\# Parameters'' in both tables denote the number of effective parameters for the orthogonal matrix $\bm{R}$ in a single layer. The baseline with fixed neurons is only to train the final classifiers with the randomly initialized neuron weights staying fixed. It means that this baseline basically removes the learnable orthogonal matrices but still fixes the neuron weights, so it only achieves $73.81\%$ testing error. As expected, as the number of effective parameters goes down, the performance of PE-OPT generally decreases. One can also observe that using separate orthogonal matrices generally yields better performance than shared orthogonal matrices. $k=2$ and $k=4$ seems to be a reasonable trade-off between better accuracy and less parameters.
\par

When $k$ becomes larger (\ie, the number of parameters become less) in the case of block-shared orthogonal matrices, we find that PE-OPT (LS) performs the best among all the variants. When $k$ becomes larger (\ie, the number of parameters become less) in the case of unconstrained block orthogonal matrices, we can see that both PE-OPT (GS) and PE-OPT (LS) performs better than the other variants.

\begin{table*}[h]
    \renewcommand{\captionlabelfont}{\footnotesize}
    \newcommand{\tabincell}[2]{\begin{tabular}{@{}#1@{}}#2\end{tabular}}
    \centering
    \setlength{\abovecaptionskip}{4pt}
    \setlength{\belowcaptionskip}{-5pt}
    \scriptsize
    \begin{tabular}{ccccccc}
        \hline
Method & \# Parameters & PE-OPT (CP) & PE-OPT (GS) & PE-OPT (HR) & PE-OPT (LS) & PE-OPT (OGD)\\\hline
$c_3/k=64$~~~($k=1$)~~(\ie, Original OPT) & 331.7K & 33.53 & \textbf{33.02} & 35.67 & 34.48 & 33.33\\
$c_3/k=32$~~~($k=2$) & 82.9K & 34.93 & \textbf{34.39} & 35.83 & 34.50 & 35.06\\         
$c_3/k=16$~~~($k=4$) & 20.7K & 39.40 & 39.13 & 39.67 & \textbf{37.58} & 39.80\\     
$c_3/k=8$~~~($k=8$) & 5.2K & 47.77 & 46.65 & 46.69 & \textbf{45.62} & 47.43\\
$c_3/k=4$~~~($k=16$) & 1.3K & 56.65 & 55.91 & 55.69 & \textbf{54.75} & 57.15\\
$c_3/k=2$~~~($k=32$) & 0.3K & 63.46 & 62.65 & 62.38 & \textbf{61.60} & 62.46\\
$c_3/k=1$~~~($k=64$) & 0.1K & 67.36 & 67.11 & 67.05 & \textbf{66.61} & 67.23\\\hline
Baseline & - & \multicolumn{5}{c}{37.59}\\
Baseline with fixed random neurons & - & \multicolumn{5}{c}{73.81}\\\hline
    \end{tabular}
    \caption{\footnotesize Testing error (\%) on CIFAR-100 with different settings of PE-OPT (with block-shared orthogonal matrix $\bm{R}_s$).}\label{peopt_shared}
\end{table*}

\begin{table*}[h]
    \renewcommand{\captionlabelfont}{\footnotesize}
    \newcommand{\tabincell}[2]{\begin{tabular}{@{}#1@{}}#2\end{tabular}}
    \centering
    \setlength{\abovecaptionskip}{4pt}
    \setlength{\belowcaptionskip}{-5pt}
    \scriptsize
    \begin{tabular}{ccccccc}
        \hline
Method & \# Parameters & PE-OPT (CP) & PE-OPT (GS) & PE-OPT (HR) & PE-OPT (LS) & PE-OPT (OGD)\\\hline
$c_3/k=64$~~~($k=1$)~~(\ie, Original OPT) & 331.7K & 33.53 & \textbf{33.02} & 35.67 & 34.48 & 33.33\\
$c_3/k=32$~~~($k=2$) & 165.9K & 33.54 & \textbf{33.15} & 35.65 & 34.09 & 34.27\\            
$c_3/k=16$~~~($k=4$) & 82.9K & 34.77 & \textbf{34.50} & 35.71 & 34.96 & 35.97\\     
$c_3/k=8$~~~($k=8$) & 41.5K & 37.25 & 36.43 & 36.40 & \textbf{36.17} & 39.75\\
$c_3/k=4$~~~($k=16$) & 20.7K & 40.74 & \textbf{39.89} & 39.98 & 39.93 & 43.43\\
$c_3/k=2$~~~($k=32$) & 10.4K & 45.36 & 44.77 & 44.83 & \textbf{44.61} & 48.98\\
$c_3/k=1$~~~($k=64$) & 5.2K & 50.94 & \textbf{49.16} & 49.57 & 49.23 & 54.93\\\hline
Baseline & - & \multicolumn{5}{c}{37.59}\\
Baseline with fixed random neurons & - & \multicolumn{5}{c}{73.81}\\\hline
    \end{tabular}
    \caption{\footnotesize Testing error (\%) on CIFAR-100 with different settings of PE-OPT (with unconstrained block orthogonal matrix $\bm{R}_u$).}\label{peopt_unshared}
\end{table*}

\newpage
\section{On Generalizing OPT: Over-Parameterized Training with Constraint}

OPT opens many new possibilities in training neural networks. We consider a simple generalization to OPT in this section to showcase the great potential of OPT. Instead of constraining the over-parameterization matrix $\bm{R}\in\mathbb{R}^{d\times d}$ in Eq.~\ref{opt} to be orthogonal, we can use any meaningful structural constraints for this matrix, and even regularize it in a task-driven way. Furthermore, instead of a linear over-parameterization (\ie, multiplying a matrix $\bm{R}$) to the neuron, we can also consider nonlinear mapping. We come up with the following straightforward generalization to OPT (the settings and notations exactly follow Eq.~\ref{opt}):
\begin{equation}\label{generalized_opt}
\begin{aligned}
\textnormal{Standard:}\ &\min_{\bm{v}_i,u_i,\forall i} \sum_{j=1}^{m}\mathcal{L}\big(y,\sum_{i=1}^{n} u_i\bm{v}_i^\top\bm{x}_j\big)\\
    \textnormal{Original OPT:}\ &\min_{\bm{R},u_i,\forall i} \sum_{j=1}^{m}\mathcal{L}\big(y,\sum_{i=1}^{n} u_i(\bm{R}\bm{v}_i)^\top\bm{x}_j\big)\\
    &\ \ \ \ \ \ \textnormal{s.t.}\ \ \bm{R}^\top\bm{R}=\bm{R}\bm{R}^\top=\bm{I}\\
    \textnormal{Generalized OPT:}\ &\min_{\bm{R},u_i,\forall i} \sum_{j=1}^{m}\mathcal{L}\big(y,\sum_{i=1}^{n} u_i(\mathcal{T}(\bm{v}_i))^\top\bm{x}_j\big)\\
    &\ \ \ \ \ \ \textnormal{s.t.}\ \ \textnormal{Some constraints on }\mathcal{T}(\cdot)
\end{aligned}
\end{equation}
where $\mathcal{T}(\cdot):\mathbb{R}^d\rightarrow\mathbb{R}^{d}$ denotes some transformation (including both linear and nonlinear). Notice that the generalized OPT~(G-OPT) no longer requires orthogonality. Such formulation of G-OPT can immediately inspire a number of 
instances. We will discuss some obvious ones here.
\par

If we consider $\mathcal{T}(\cdot)$ to be a linear mapping, we may constrain $\bm{R}$ to be symmetric positive definite other than orthogonal. A simple way to achieve that is to use Cholesky factorization $\bm{L}\bm{L}^\top$ where $\bm{L}$ is a lower triangular matrix to parameterize the matrix $\bm{R}$. Essentially, we learn a lower triangular matrix $\bm{L}$ and use $\bm{L}\bm{L}^\top$ to replace $\bm{R}$ in OPT. The positive definiteness provides the transformation $\bm{R}$ with some geometric constraint. Specifically, a positive definite $\bm{R}$ only transforms the neuron weight $\bm{v}$ to the direction that has the angle less than $\frac{\pi}{2}$ to $\bm{v}$, because $\bm{v}^\top\bm{R}\bm{v}>0$. Moreover, we can also require the transformation to have structural constraints on $\bm{R}$. For example, $\bm{R}$ can be upper (lower) triangular, banded, symmetric, skew-symmetric, upper (lower) Hessenberg, etc.
\par

We can also consider $\mathcal{T}(\cdot)$ to be a nonlinear mapping. A obvious example is to use a neural network (\eg, MLP, CNN) as $\mathcal{T}(\cdot)$. Then the nonlinear G-OPT will share some similarities with HyperNetworks~\cite{ha2016hypernetworks} and Network-in-Network~\cite{lin2013network}. If we further consider $\mathcal{T}(\cdot)$ to be dependent on the input, then the nonlinear G-OPT will have close connections to dynamic neural networks~\cite{jia2016dynamic,liu2019neural}.
\par

To summarize, OPT provides a novel and effective framework to train neural networks and may inspire many different threads of future research.

\newpage
\section{Hyperspherical Energy Training Dynamics of Individual Layers}\label{appendix_energy}
We also plot the hyperspherical energy ($\thickmuskip=2mu \medmuskip=2mu \bm{E}(\hat{\bm{v}}_i|_{i=1}^n) = \sum_{i=1}^{n}\sum_{j=1,j\neq i}^{n} \norm{\hat{\bm{v}}_i-\hat{\bm{v}}_j}^{-1}$ in which $\thickmuskip=2mu \medmuskip=2mu \hat{\bm{v}}_i=\frac{\bm{v}_i}{\|\bm{v}_i\|}$ is the $i$-th neuron weight projected onto the unit hypersphere.) in every layer of CNN-6 during training to show how these hyperspherical energies are being minimized. From Fig.~\ref{energy_dynamics_supp}, we can observe that OPT can always maintain the minimum hyperspherical energy during the entire training process, while the MHE regularization cannot. Moreover, the hyperspherical energy of the baseline will also decrease as the training proceeds, but it is still much higher than the OPT training.
\begin{figure}[h]
  \centering
  \renewcommand{\captionlabelfont}{\footnotesize}
  \vspace{-0.9mm}
  \includegraphics[width=5.55in]{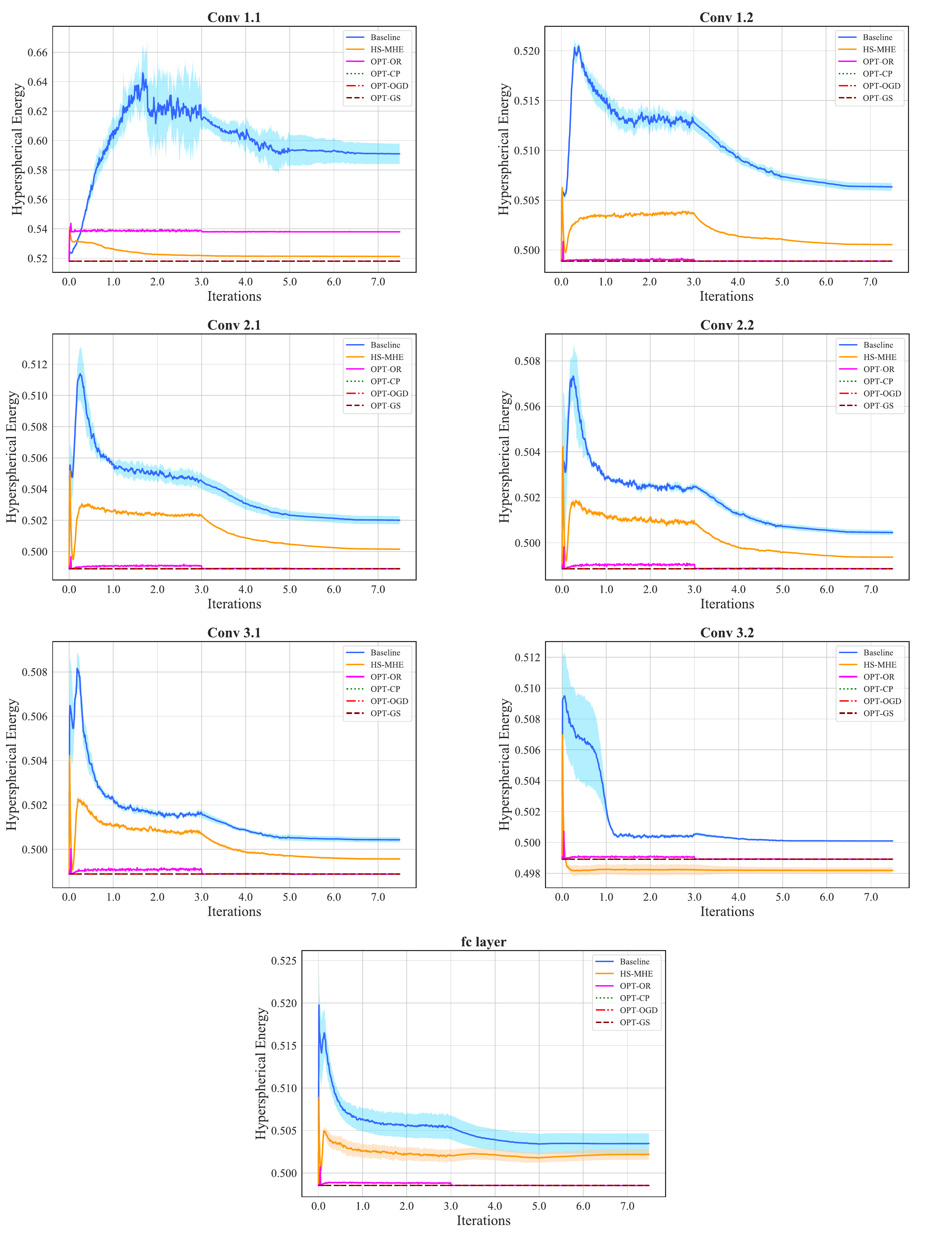}
  \vspace{-3mm}
  \caption{\footnotesize Training dynamics of hyperspherical energy in each layer of CNN-6. We average results with 10 runs.}\label{energy_dynamics_supp}
  \vspace{-1.4mm}
\end{figure}

\newpage

\section{Geometric Properties of Randomly Initialized Neurons}\label{rand_property}
There are many interesting geometric properties~\cite{brauchart2018random,breger2018points} of random points distributed independently and uniformly on the unit hypersphere. We summarize a few of them that make randomly initialized neurons distinct from any deterministic neuron configuration. Note that, there exist many deterministic neuron configurations that can also achieve very low hyperspherical energy, and this section aims to describe a few unique geometric properties of randomly initialized neurons.
\par

There are two widely used geometric properties corresponding to a neuron configuration (\ie, a set of neurons) $\hat{\bm{W}}_N=\{\hat{\bm{w}}_1,\cdots,\hat{\bm{w}}_N\in\mathbb{S}^{d}\}$. In the main paper, we define neurons on $\mathbb{S}^{d-1}$, but without loss of generality we define neurons on $\mathbb{S}^d$ here for convenience. The first one is the \emph{covering radius}:
\begin{equation}
    \alpha(\hat{\bm{W}}_N):=\alpha(\hat{\bm{W}}_N;\mathbb{S}^{d-1}):=\max_{\bm{u}\in\mathbb{S}^d}\min_{1\leq i \leq N}\arccos(\bm{u},\hat{\bm{w}}_i)
\end{equation}
which is the biggest geodesic distance from a neuron in $\mathbb{S}^{d}$ to the nearest point in $\hat{\bm{W}}_N$. The second one is the \emph{separation distance}:
\begin{equation}
    \psi(\hat{\bm{W}}_N):=\min_{1\leq i,j,\leq N, i\neq j}\arccos(\hat{\bm{w}}_i,\hat{\bm{w}}_j)
\end{equation}
which gives the least geodesic distance between arbitrary two points in $\hat{\bm{W}}_N$. Random points (\ie, randomly initialized neurons) typically have poor separation properties, since the separation is sensitive to the specific placement of points. \cite{brauchart2018random} shows an example on $\mathbb{S}^1$ to illustrate this observation.

\par
\cite{brauchart2018random} considers a different but related quantity, \ie, the sume of powers of the ``hole radii''. A set of neurons $\hat{\bm{W}}_N$ on $\mathbb{S}^{d}$ uniquely defines a convex polytope, which can be viewed as the convex hull of the neuron configuration. Each facet of the polytope defines a ``hole''. Such a hole denotes the maximal spherical cap for a facet that contains neurons of $\hat{\bm{W}}_N$ only on the boundary. It is easy to see that the geodesic radius of the largest hole is the covering radius $\alpha(\hat{\bm{W}}_N)$. We assume that for the set of neurons $\hat{\bm{W}}_N$, there are $f_d$ holes (\ie, facets) in total. Therefore, the $i$-th hole radius is defined as $\rho_i =\rho_i(\hat{\bm{W}}_N)$ which is the Euclidean distance in $\mathbb{R}^{d+1}$ between the center of the $i$-th spherical cap and the boundary. The $i$-th spherical cap is located on the sphere corresponding to the $i$-th facet. We have that $\rho_i=2\sin(\frac{\alpha_i}{2})$ where $\alpha_i$ is the geodesic radius of the $i$-th spherical cap. We are interested in the sums of the $p$-th powers of the hole radii, \ie,
\begin{equation}
    \mathcal{P}=\sum_{i=1}^{f_d} (\rho_i)^p
\end{equation}
where $p$ is larger than zero. For large $p$, the largest hole dominates:
\begin{equation}
    \lim_{p\rightarrow\infty}(\mathcal{P})^{\frac{1}{p}}=\lim_{p\rightarrow\infty}\bigg( \sum_{i=1}^{f_d}(\rho_i)^p \bigg)^{\frac{1}{p}}=\max_{1\leq i\leq f_d} \rho_i=2\sin(\frac{\alpha(\hat{\bm{W}}_N)}{2})
\end{equation}
where $\rho(\hat{\bm{W}}_N):=\max_{1\leq i\leq f_d}\rho_i$. Then we introduce some useful notations to state the geometric properties. Let $\psi_d$ be the surface area of $\mathbb{S}^d$, and we have that
\begin{equation}
    \psi_d=\frac{2\pi^{\frac{d+1}{2}}}{\Gamma(\frac{d+1}{2})},
\end{equation}
and we also define the following quantities (with $\psi_0=2$):
\begin{equation}
\begin{aligned}
    \kappa_d:&=\frac{1}{d}\frac{\psi_{d-1}}{\psi_d}=\frac{1}{d}\frac{\Gamma(\frac{d+1}{2})}{\sqrt{\pi}\Gamma(\frac{d}{2})}\\
    B_d:&=\frac{2}{d+1}\frac{\kappa_{d^2}}{(\kappa_d)^d}
\end{aligned}
\end{equation}
where $\kappa_d$ can be alternatively defined with the recursion: $\kappa_1=\frac{1}{\pi}$ and $\kappa_d=\frac{1}{2\pi d\kappa_{d-1}}$. \cite{reitzner2004stochastical} gives the expected number of facets constructed from $N$ random neurons that are independently and uniformly distributed on the unit hypersphere $\mathbb{S}^d$:
\begin{equation}
    \mathbb{E}[f_d]=B_dN\big( 1+o(1) \big)
\end{equation}
where $N\rightarrow\infty$. Then we introduce the main results of \cite{brauchart2018random} (asymptotics for the expected moments of the hole radii) in the following lemma:

\vspace{2mm}
\begin{lemma}
If $p\geq 0$ and $\hat{\bm{w}}_1,\cdots\hat{\bm{w}}_N$ are $N$ neurons on $\mathbb{S}^d$ that are independently and randomly distributed with respect to the normalized surface area measure $\sigma_d$ on $\mathbb{S}^d$, then we have that
\begin{equation}
\begin{aligned}
    \mathbb{E}[ \mathcal{P} ] &=B_d(\kappa_d)^{-\frac{p}{d}}\frac{\Gamma(d+\frac{p}{d})\Gamma(N+1)}{\Gamma(d)\Gamma(N+\frac{p}{d})}\big( 1+\mathcal{O}(N^{-\frac{2}{d}}) \big)\\
    &=c_{d,p} N^{1-\frac{p}{d}}\big( 1+\mathcal{O}(N^{-\frac{2}{d}}) \big)
\end{aligned}
\end{equation}
as $N\rightarrow\infty$, where $\rho_i=\rho_{i,N}$ is the Euclidean hole radius associated with the $i$-th facet of the convex hull of $\hat{\bm{W}}_N$, $c_{d,p}:=B_dB_{d,p}$, and $B_{d,p}:=\frac{\Gamma(d+\frac{p}{d})}{\Gamma(d)}(\kappa_d)^{-\frac{p}{d}}$. The $\mathcal{O}$-terms above depend on $d$ and $p$.
\end{lemma}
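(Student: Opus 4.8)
The plan is to reduce the statement to a standard identity in the combinatorial theory of random inscribed polytopes and then carry out a Laplace-type asymptotic evaluation of the resulting one-dimensional integral. The structural fact I would use is that a $(d+1)$-subset $\{x_{i_1},\dots,x_{i_{d+1}}\}$ of the $N$ points spans a facet of $\operatorname{conv}(\hat{\bm W}_N)$ exactly when all of the remaining $N-d-1$ points lie on one side of the affine hyperplane through those $d+1$ points; moreover the Euclidean hole radius of such a facet is precisely the circumradius $\rho=2\sin(\alpha/2)$ of the spherical simplex spanned by those $d+1$ points, where $\alpha$ is the geodesic radius of the empty spherical cap cut off. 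By exchangeability, writing $q=q(x_1,\dots,x_{d+1})\in(0,\tfrac12]$ for the normalized surface measure of that cap,
\[
\mathbb{E}[\mathcal{P}] \;=\; \binom{N}{d+1}\,\mathbb{E}_{x_1,\dots,x_{d+1}}\!\Big[\, \rho^p\big( q^{\,N-d-1} + (1-q)^{\,N-d-1}\big)\Big],
\]
the two summands accounting for the two possible empty sides. This is exactly the $p=0$ identity used to obtain $\mathbb{E}[f_d]$ in \cite{reitzner2004stochastical} (indeed $\mathcal P=f_d$ and $c_{d,0}=B_d$ when $p=0$, a consistency check), now carried with the extra weight $\rho^p$.

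Next I would parametrize the $d+1$ points by the center and geodesic radius $\alpha$ of their circumscribing cap together with their configuration on the bounding $(d-1)$-sphere, via a Blaschke--Petkantschin-type decomposition; integrating out the center (total mass $1$ by invariance) and the angular configuration reduces the expectation to an integral over $\alpha$ against an explicit density, which I would then push forward to an integral over $q$. Since $q^{N-d-1}$ and $(1-q)^{N-d-1}$ concentrate near $q=0$, the dominant contribution comes from small caps, and I would use the small-$\alpha$ expansions $q=\kappa_d\,\alpha^d(1+O(\alpha^2))$ (the second-order expansion of the normalized cap measure, with $\kappa_d$ as defined in the text) and $\rho=\alpha(1+O(\alpha^2))$, so that $\rho^p=\kappa_d^{-p/d}\,q^{p/d}(1+O(\alpha^2))$ --- this is exactly where the factor $(\kappa_d)^{-p/d}$ comes from. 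Substituting $t=(N-d-1)q$, approximating $q^{N-d-1}\approx e^{-t}$ (the $(1-q)$-term treated symmetrically and shown to be negligible), and extending the $t$-integral to $[0,\infty)$ turns the integral into a Gamma integral $\int_0^\infty t^{d+p/d-1}e^{-t}\,dt=\Gamma(d+\tfrac pd)$; collecting the powers of $N$ from $\binom{N}{d+1}$ and from the substitution, together with the remaining constants from the leading density, yields $B_d(\kappa_d)^{-p/d}\Gamma(d+\tfrac pd)\Gamma(N+1)/(\Gamma(d)\Gamma(N+\tfrac pd))$, and $\Gamma(N+1)/\Gamma(N+\tfrac pd)=N^{1-p/d}(1+O(N^{-1}))$ gives the second displayed form with $c_{d,p}=B_dB_{d,p}$.

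The main obstacle --- the part needing real care rather than bookkeeping --- is the rigorous asymptotics of this one-dimensional integral with the stated relative error $\mathcal{O}(N^{-2/d})$: (i) justifying $q^{N-d-1}\approx e^{-t}$ and the extension of the integration range by a tail-truncation / dominated-convergence argument whose error remains at relative order $N^{-2/d}$; (ii) obtaining the second-order small-cap expansions of both $q(\alpha)$ and the $\alpha$-density, since it is the $O(\alpha^2)$ corrections there that, under $\alpha\sim(t/N)^{1/d}$, produce the $O(N^{-2/d})$ remainder; and (iii) showing the $(1-q)^{N-d-1}$ branch is exponentially negligible away from $q=0$ and contributes only at this error scale. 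I would follow the bookkeeping of \cite{brauchart2018random} for these steps; the value of presenting the argument is to exhibit why the exponent $1-\tfrac pd$ and the particular Gamma-function constants arise --- namely from the $\Theta(N)$ facets, each empty cap having geodesic radius on the scale $N^{-1/d}$, and the Poissonization of the emptiness probability producing a Gamma integral.
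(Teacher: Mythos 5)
The paper does not actually prove this lemma: it is imported verbatim as "the main results of \cite{brauchart2018random}" (with the facet count $\mathbb{E}[f_d]=B_dN(1+o(1))$ quoted from \cite{reitzner2004stochastical}), so there is no in-paper argument to compare against. Your sketch is essentially a reconstruction of the proof strategy of the cited source rather than an alternative route: the exchangeability identity over $(d+1)$-subsets with the empty-cap probability, a Blaschke--Petkantschin-type disintegration over the circumscribed cap, the small-cap expansions $q=\kappa_d\alpha^d(1+O(\alpha^2))$ and $\rho=\alpha(1+O(\alpha^2))$, and the Poissonization $t=(N-d-1)q$ producing $\Gamma(d+\tfrac{p}{d})$ are exactly the ingredients there, and your $p=0$ consistency check against $\mathbb{E}[f_d]$ and the bookkeeping for $c_{d,p}=B_dB_{d,p}$ come out right.

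Two points in your write-up need repair, though neither affects the leading-order asymptotics. First, the two empty sides do not carry the same hole radius: if the small cap (geodesic radius $\alpha$, measure $q\leq\tfrac12$) is the empty one the hole radius is $2\sin(\alpha/2)$, while if the large cap is empty it is $2\cos(\alpha/2)$, so the correct identity is $\mathbb{E}[\mathcal{P}]=\binom{N}{d+1}\,\mathbb{E}\bigl[\rho_{\mathrm{small}}^{\,p}(1-q)^{N-d-1}+\rho_{\mathrm{large}}^{\,p}\,q^{N-d-1}\bigr]$, not a common factor $\rho^p$ against the sum of the two probabilities. Second, and relatedly, you have the roles of the two branches swapped in the asymptotic step: the dominant contribution is the $(1-q)^{N-d-1}\approx e^{-t}$ factor (empty small cap of measure $q\sim t/N$), whereas the $q^{N-d-1}$ branch is bounded by $2^{-(N-d-1)}$ on $q\in(0,\tfrac12]$ and is the exponentially negligible one; as written you approximate $q^{N-d-1}\approx e^{-t}$ and discard the $(1-q)$-term, which is backwards. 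Also, $2\sin(\alpha/2)$ is the paper's cap-center-to-boundary chord length, not the Euclidean circumradius of the simplex (that is $\sin\alpha$); the formula you use is the right one but the name is off. Finally, be aware that the stated relative error $\mathcal{O}(N^{-2/d})$ is precisely the part your sketch defers to the bookkeeping of \cite{brauchart2018random}; given that the paper itself simply cites that reference, this deferral is acceptable, but your proposal as it stands establishes only the leading term, not the quoted error rate.
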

\vspace{2mm}

As we mentioned, there are many deterministic point (\ie, neuron) configurations such as minimizing hyperspherical energy (\ie, Riesz $s$-energy)~\cite{liu2018learning} (as $s\rightarrow\infty$, the minimal $s$-energy points approach the best separation), maximizing the determinant for polynomial interpolation~\cite{sloan2004extremal}, Fibonacci points, spherical $t$-designs, minimizing covering radius (\ie, best covering problem), maximizing the separation (\ie, best packing problem) and maximizing the $s$-polarization, etc. We note that randomly initialized neurons are quite different from these deterministic neuron configurations and have unique geometric properties.

\end{appendix}

\end{document}